\theoremstyle{plain}
\newtheorem{theorem}{Theorem}[section]
\theoremstyle{definition}
\newtheorem{definition}[theorem]{Definition}
\theoremstyle{remark}
\long\def\comment#1{}
\def\Hof#1#2{H(\,#1\,|\,#2\,)}
\def\itm#1{\\[0.2ex] {\bf (#1)}}
\def\Pof#1#2{p_{#1}(\, #2\,)}
\def\CPof#1#2#3{\Pof{#1}{#2\,|\,#3}}
\def\aBBald{a_{BBald}}
\def\Dt{D}
\def\Info#1#2{I(\,#1\,|\,#2\,)}
\def\mmode#1{\\ \hspace*{1in} $\displaystyle #1$ \\}
\def\ie{{\em i.e.},\ }
\def\eg{{\em e.g.},\ }
\title{\textbf{%
Budget-constrained Active Learning\\
to Effectively De-censor Survival Data
}}
\author{%
  Ali Parsaee, Bei Jiang, Zachary Friggstad, Russell Greiner \\
  Department of Computing Science \\
  University of Alberta \\
  Edmonton, Alberta, Canada \\
  \texttt{\{parsaee, bei1, zacharyf, rgreiner\}@ualberta.ca}
}
\begin{document}
\maketitle

\begin{abstract}
\label{abstract}

Standard supervised learners attempt to learn a model from a \textbf{labeled} dataset.  Given a small set of labeled instances, and a pool of unlabeled instances, a \textbf{budgeted learner} can use its given budget to pay to acquire the labels of some unlabeled instances, which it can then use to produce a model. Here, we explore budgeted learning in the context of  survival datasets, which include (right) censored instances, where we know only a lower bound on an instance’s time-to-event.  
Here, that learner can pay to (partially) label a censored instance 

– \eg to acquire the actual time % $t_i$
for an instance [perhaps go from (3 yr, censored) to (7.2 yr, uncensored)], or other variants 
[\eg learn about one more year, so go from (3 yr, censored) to either 
(4 yr, censored) or perhaps (3.2 yr, uncensored)].
This serves as a model of real world data collection, where follow-up with censored patients does not always lead to uncensoring, 
and how much information is given to the learner model during data collection is a function of the budget and the nature of the data itself.

We provide both experimental and theoretical results for how to apply state-of-the-art budgeted learning algorithms to survival data and the respective limitations that exist in doing so. Our approach provides bounds and time complexity asymptotically equivalent to the standard active learning method BatchBALD. Moreover, empirical analysis on several survival tasks show that our model performs better than other potential approaches on several benchmarks.
\end{abstract}

\section{Introduction}
\label{sec:Introduction}

The success of a learner in building an effective model is fundamentally reliant on the quality of the data to which it has access. Data significantly influences the bias, variance, and generalizability of any model produced by a 
learner~\citep{Domingos12}.

However, data availability is often constrained by several factors, including the cost of labeling the data instances
and the need for sufficient diversity in the labeled data to ensure effective generalization to the target distribution. % ~\citep{Domingos12}.  % already mentioned last sentence
The ``\textit{data selection}'' challenge is deciding % Often, one must decide 
which unlabeled instances to label 
(from a pool of unlabeled instances), to build an effective model. For example, given a dataset with 50 labeled instances and a budget to acquire labels for 10 additional unlabeled instances from a large pool of unlabeled data, the goal is to leverage the initial 50 
labeled instances to identify the 10 whose labels will best augment the first 50 labels, to produce the most accurate prediction model.

This paper extends this technology to the domain of survival prediction, which involves time-to-event data, which is essential in various fields as 
the event could be
% it is the time until a specific event occurs, such as 
failure, recovery, or death, for an instance. 
In a breast cancer study, researchers could monitor 300 cancer patients over a 5-year span to measure the time until their death from breast cancer. The goal here is to develop a model to predict the time of a patient diagnosed with that  cancer. 
Similarly, in a financial study, the objective might be to produce a model that could predict the time until a stock crashes. Unlike standard regression tasks, survival models are often trained and evaluated on data where the time-to-event label is not always precisely observed; instead, for some instances, only a lower bound on the event time is known~\citep{SA_textbook}.

Censoring is common in real-world datasets
– \eg in the above medical example, some patients might leave the study early (and so be lost to follow-up), others might die of another non-cancer cause (think “hit by a bus”), and yet others might be alive when the study is completed. It is difficult to train (and evaluate) a model using such censored data, as most supervised learning approaches handle data. One na\"\i{}ve approach is to simply exclude these censored instances when building the model -- however, this can remove a large amount of information from the learning system (as often more than $50\%$ of the instances in the dataset are censored) and may result in a biased model – note this will typically remove the long-time survivors, 
etc.~\citep{Qi2024_Calibration,Turkson2021_Censoring}. Although there are now many methods to learn models from survival data~\citep{Haider2020_ISD}, very few methods have been developed for data selection with survival data~\citep{Incremental_AL}, and so far, no method has been developed for data selection that incorporates budget constraints. This paper addresses that gap. 

The field of \textit{Active Learning (AL)} aims to identify which unlabeled instances to label, to construct the dataset that can produce the best model. Its goal is to minimize the amount of additional training data needed to achieve a target accuracy~\citep{Ren2021}.
Although AL is a well-established field with many results, its specific objective is often imprecise, as many papers discuss training models until ‘convergence’, then compare how quickly different algorithms get to this point. For example, convergence is commonly declared when the loss changes by less than a small value, $\epsilon$, over a predefined number of time steps referred to as ``{\em patience}"~\citep{Ren2021}. This imprecision arises because both $\epsilon$ and {\em patience} are chosen heuristically, with their values varying between tasks and models. As a result, the definition of true convergence remains unclear and inconsistent. 

The field of \textit{Budgeted Learning (BL)} addresses this issue by focusing on selecting data instances {\em within a budget} to minimize model loss, thus providing a clearer framework for active learning~\citep{Budgeted_learning_Naive_Bayes}.
BL augments AL by giving the learner a specific overall budget $B \in \Re^+$, before it begins data selection, to spend to obtain labels. A good budgeted learning should perform well at \textbf{any} given budget (rather than only at large budgets or at converging faster). Here, we apply BL to real-world survival datasets, where the budgeted learner can pay to (partially) “de-censor” a censored instance by determining the status of that instance $k$-years in the future:
\eg learn about $k$ = 1 more year, so go from (3yr, censor) to perhaps (3.2yr, uncensored) or (4yr, censor).
%– if the $k$ is large enough, to acquire the actual time  for an instance [go from (3yr, censor) to (7.2yr, uncensored)], or other variants
We refer to this $k$ as the ``probe depth'', which is an input to our BL system, and
note that, if $k = \infty$, every probed censored instance % given 
would be uncensored (or censored at the latest possible time knowable by the oracle).

In some cases, the cost of each $k$-year probe is the same for all instances -- which we call the \textit{uniform setting}.
We will also consider the non-uniform situation, where
% , in real-world scenarios, 
there may be different costs for obtaining $k$-year information about different instances.\\[-1ex]
% may not be the same for every instance. Therefore, we will explore a setting where instances may have different costs associated with obtaining information. 

\textbf{Significance of our Work:} 
The settings explored in this work cover a wide range of applications;
% \note[RG]{But we are considering tracking AFTER we have already collected an initial survival dataset.
% These seem just to assemble a dataset?}
(1)~Consider that you can conduct a follow-up 5-year clinical study on some of the patients in the study mentioned above. This is equivalent to having an oracle that informs for $k = 5$ more years of the time to event for any participant in the study.
(2)~Industrial reliability settings explore the time until failure~\citep{Reliability} -- \eg testing time until a light bulb stops working -- where a cost may be attached to testing for an additional $k$ days. 
(3)~Many computational problems can be framed as survival problems where the event in question is the time an algorithm will finish~\citep{kevinLBSATtosurvival}. Running the algorithm for an additional $k$ minutes comes with
% it
a given price (compute for example) as well.
Everyone of the settings above can be imagined with unique instance costs (different light bulbs may cost different amounts to test or different algorithm inputs may require different compute for the same amount of time). Appendix A.1 discusses further the difference between the terms used in this paper (such as BL, AL, SA etc..) as well as how our work contributes to these areas of study.

These settings force us to extend the known BatchBALD algorithm to scenarios it was not designed for. For example, BatchBALD assumes that the oracle provides COMPLETE information about the label of an instance; here, however, the oracle sometimes provides only partial information (as mentioned above). It proves useful to value instances whose predicted time-to-event is close to their current censored time as the oracle has a higher chance of revealing their true time. 
It is also important to value patients censored at a later time as more information is known about them. BatchBALD also does not account for individual instance costs.

{\bf Our Contributions:}
% \subsection{Our Contributions}\label{sec:our contribution}
%
\itm{1} We formally define a budgeted learning problem of de-censoring censored instances in a dataset with the goal of 
using the resulting enhanced dataset to learn a model with maximal performance, under a specified budget constraint.
\itm{2} We present a comprehensive review of alternative data selection methods applicable to our problem setting and introduce a novel algorithm, $BB_{surv}$ by adjusting BatchBALD~\citep{BatchBALD_Paper},
to work with both survival data and different instance costs. We show that our algorithm achieves the optimal lower bound within polynomial time. 
\itm{3} We conduct a comprehensive empirical analysis, using various evaluation metrics on three real-world survival datasets, demonstrating that our method consistently performs well in diverse scenarios, showing robust generalization even under variable instance costs.
\itm{4} As we do not know of any other tools for specific task, 
we present eight other 
plausible
data selection algorithms to compare against our method and 
then 
demonstrate that our method outperforms them, achieving superior results across 
various
evaluation metrics. \vspace{-1em}
\section{Related Work} % 2
\label{sec:Related Work}\vspace{-1em}
While there are many active learning methods, batch active learning focuses specifically on scenarios where a predefined number of instances are selected at once, rather than the common “sequential active learning” approach of iteratively selecting a few instances to label at a time~\citep{Ren2021}. 
Batch active learning is particularly advantageous for deep learning applications, where large amounts of data are needed to achieve meaningful improvements in model performance, making retraining after each individual instance computationally expensive \citep{Ren2021}. Various batch active learning methods have been proposed, including uncertainty sampling, which selects the data instances where the model has the least confidence \citep{Liu_Uncertainty_Sampling}; query-by-committee, which trains multiple models  and selects the data instances with the highest disagreement \citep{Burbidge_Active_Learning}; and diversity-based approaches, which focus on selecting diverse samples to enhance model generalization  \citep{Moses_Diversity_Based_Active_Learning}. \vspace{-1em}
\subsection{Applying  Active Learning to Survival analysis}
\label{sec:Related_work:AL to SA}
There are very few works in the intersection of AL and survival analysis. \citet{COX_with_Survival,COX_with_Survival2} employ a semi-parametric deep learning model based on the Cox Proportional-Hazards framework \citep{Cox_paper} --
a well-known model in survival analysis
that % This Cox model
assumes proportional hazard ratios that are constant relative risks between individuals, irrespective of time 
(note this assumption does not 
% always
hold for all datasets~\citep{Bayesian_model}). 
Moreover, % But, 
they do not incorporate budget constraints nor consider the partial de-censoring of labels. 

\citet{BALD_with_Survival} extends the BALD framework to right-censored data by introducing ``C-BALD",
which models both the probability of censorship and the distributional uncertainty in censored regression, enabling principled active learning under right-censored labels. 
Furthermore, \citet{Bemporad2023} proposes IDEAL (Inverse-Distance based Exploration for Active Learning), an active learning method for regression that selects new queries by computing the inverse distance weighting based on the distance of a candidate point to previously queried samples. 
While these methods both perform well, and have provided us with ideas, 
our work extends beyond them in key ways. Our work employs the BatchBALD architecture, which selects diverse batches rather than independent points -- a known problem for BALD~\citep{BatchBALD_Paper}.
Additionally, our method adjusts the BatchBALD framework to incorporate budget constraints and probe depths, and to handle survival analysis with non-uniform instance costs. 
We compare our method against these two in Section~7.

Finally, \citet{Incremental_AL} propose an AL approach to survival data that includes a mechanism to incrementally update the label via an oracle, essentially the same as our partial de-censoring.
However, the amount of information gained by a query is determined randomly -- that is, for each probed instance,
the oracle arbitrarily picks a probe depth between the given censor time and the patient's actual time to death -- and so provides no way to incorporate this into the decision-making of the acquisition function. In addition, they do not address scenarios where instances can have different costs.\vspace{-1em}
\section{Formulation of the problem}
\label{sec:Formulation of the problem}
In this section, we precisely formulate our task. Initially, the budgeted learner is given 
(1)~a “probe depth” $k \in \Re^+$, which indicates how many years each probe of a (censored) instance will explore, 
(2)~a budget $B \in \Re^+$, which is the total amount the learner gets to spend for all of its probes, and 
(3)~a training dataset  $D = \{[x_i, t_i , \delta_i, c_i ]\}^L_{i=1}$, 
where $L$ is the size of the training data, $x_i$ represents the covariates of the $i$-th instance, 
$t_i$ denotes the time for that instance, and $\delta_i$ denotes the associated censored bit 
(1 if uncensored, meaning $t_i$ is the time of the event and 0 if censored, meaning $t_i$ is a lower bound), and $c_i$ is the cost of probing this instance. 

To explain the “probe depth”, recall the breast cancer study previously mentioned in the introduction. A probe depth of $k=5$ means the oracle will give the time of breast-cancer death of a person being probed % in this study 
if it occurred within the next 5 years, 
and if not,
will leave the patient censored but at the later time (5 years later). So if the patient was initially (2.4, censored),
the probe might update this to be
(3.5, uncensored) if she died 1.1 years after the earlier censor time,
or (7.4, censored) if she remained alive 5 years after that first time...
or perhaps (4.7, censored) if she died of a non-breast-cancer cause, 
2.3 years later.

In general, the budgeted survival learner % (BSL)
will use an ``acquisition function"
to identify which set of instance indices $F$ % \subset D
to send to the oracle, 
to obtain this (partial) de-censoring information
-- \hbox{\ie} the new $\{[t_i, \delta_i]\}_{i \in F}$ 
% \note[AP]{is it ok in the previous formula that we are using the letter i both to refer to the instance, as well as to the number of the instance? like we say $i \in F$ so it is an instance.. but we $t_i$ so i is a number?} values --
based on 
{$B$, $k$ and the instances, including their costs}. 
Note the costs of the probed instances must not exceed $B$: $\sum_{j \in F} c_j \leq B$.
Our approach differs from traditional AL formulations as 
it allows (partial) de-censoring, and 
will only probe for information one time and then evaluate the performance of a learned model trained on the newly enriched dataset, 
rather than repeating the probing-and-model-building process multiple times. 

We define the model that will be trained in this protocol. We have a Bayesian model $M$ where the model parameters $\omega$ follow the distribution $\CPof{}{\omega}{D}$.
While time is continuous, we chose to discretize it into a finite sequence of disjoint time bins $\{b_i\}^n_{i=1}$ -- 
perhaps the $n=5$ intervals \{[0,1), [1, 2), [2, 3), [3,4), [4, $\infty$)\}.
% We discretize time into $n$ bins $\{b_i\}^n_{i=1}$. 
For the $i$-th instance, we have a  classification outcome $y \in \{b_j\}^n_{j=1}$, which identifies the interval containing $t_i$.
So here, if $t = 1.3$, this would be the second interval $b_2$; meaning $y= b_2$.

Our underlying learning system will produce a survival model that maps the time $y$ for each instance $x$ to a multinomial distribution,
$\{\CPof{}{y=b_i}{x,\omega,D}\}^n_{i=1}$  which can then be used to produce an Individual Survival Distribution (ISD),
which is a distribution representing the probability the event has not occurred at each time $t>0$ -- that is, \hbox{$\CPof{}{E > t}{x,\omega}$}, where $E$ denotes the time of the event. In the example bins stated above, for a given instance $i$, our model may output $[0.20,0.25,0.20,0.05,0.30]$, which is claiming $20\%$ chance of survival in the first bin (0-1) bin, $25\%$ in the second bin (1-2), and so on.
Using a large number of bins can effectively approximate continuous time, and we found that this simplification yields good results even without using a very large number of bins.

Finally, we want to evaluate the quality of our budgeted learning systems using Mean Absolute Error (MAE), 
but this is challenging as the test data also includes censored instances.  Here, we use the MAE-PO measure, as~\citet{Effective_evaluations} shows that this metric is effective and often closely estimates MAE. We also evaluate using other more traditional survival evaluation metrics as well. 

% Our formulation makes the following \textbf{initial assumptions:}\\
% % \begin{enumerate}
% \textbf{Instances can be probed at most once:}
% Since the oracle may not give the full information (actual time to event) in one query, we could imagine probing the same instance multiple times. 
% This assumption allows for the simplification of the experimental settings.
% However, Appendix%Theoretical Analysis section
% ~\ref{multiple probes}
% presents theoretical work to generalize over this.\\
% %

Our formulation assumes that \textbf{Censoring is independent of the features}. This claim $C \bot X$ (where $C$ is the time of censoring, and $X$ describes the instance)
% This 
is known as 
\textit{non-informative censoring}~\citep{SA_textbook}.
{This assumption is required to show that our algorithm is optimal; 
our empirical evidence suggests that it still works effectively even when it does not hold.}\vspace{-1em}
\section{Our Method}
\label{sec:Method} %4
% \subsection{Utilizing BatchBALD to Estimate Mutual Information}
% \label{sec:Our method: Utilizing BatchBald} % 4.1

The BatchBALD algorithm~\citep{BatchBALD_Paper} is a state-of the-art active learning (AL) method, which involves computing mutual information between multiple data instances and model parameters. 
Mutual information, rooted in information theory, quantifies the amount of information one random variable provides about another~\citep{Minimising_Entorpy}, and so how much an observation reduces uncertainty about a model’s parameters.
If the mutual information between two variables A and B is 
v % recall x is covariates
bits, then knowing A’s value reduces the uncertainty of B to $\frac{1}{2^v}$ of the earlier amount. 

\citet{Minimising_Entorpy} establishes that, in the asymptotic limit, objectives such as maximizing the determinant of the Fisher Information Matrix, minimizing the L2 Bayes risk, and minimizing the Expected Prediction Error (EPE) ultimately converge to the same optimal solution as the number of instances approaches infinity. \citet{BatchBALD_Paper} define the BatchBALD acquisition function using mutual information as
\mmode{\aBBald(\,\{x_{1:b}\},\, \CPof{}{\omega}{ \Dt}\,)
  \quad = \quad \Info{\,y_{1:b} ;\, \omega}{x_{1:b},\, \Dt\,)}
  } % ]
   where $I(a, b | c)$ represents the mutual information of $a$ to $b$, conditioned on $c$. 
   \citet{BatchBALD_Paper} further define mutual information between the model parameters and a batch of $b$ data instances as% follows
   :
\\[-2ex]
\begin{equation}
I(\,y_{1:b}; \,\omega \mid x_{1:b}, \,\Dt\,)\quad = \quad 
H(y_{1:b} \mid x_{1:b}, \Dt) \ -\ \mathbb{E}_{p(\omega \mid \Dt, x_{1:b})} \left[ H(y_{1:b}\,\mid \, x_{1:b},\, \omega,\, \Dt)\,\ \right]
\label{eq:MutInfo} % {eq:1}
\end{equation}
\\[-2ex]
where $H(y_{1:b} | x_{1:b}, \Dt)$ determines the information entropy of the labels of the batches given the features and training data, while
$E_{p(\omega|D_{train},x_{1:b})} [H(y_{1:b} | x_{1:b}, \omega, \Dt)]$ 
defines the expected labels conditioned on the features, 
training data and model parameters.
To simplify the equations, we do not show the
conditioning on $x_{1:b}$ and $\Dt$. 

We can compute the right term as:
\\[-2ex]
\begin{equation}
E_{p(\omega)} [H(y_{1:b} |\omega)]
\quad \approx \quad \frac{1}{k} \sum_{i=1}^{b} \sum_{j=1}^{k} 
\Hof{y_i}{\hat{\omega}_j}. \label{eq:right term}
\end{equation}
\\[-2ex]
\citet{BatchBALD_Paper} provide a detailed discussion of this factorization, with a key underlying assumption that, when conditioned on $\omega$, instances are treated as independent as their dependencies are captured within the parameters. Equation~\ref{eq:right term} estimates the expectation by taking $k$ samples of the model parameters. We can compute the left term of Equation~\ref{eq:MutInfo} as: 
% \\[-2ex]
\begin{equation} 
\small
% \begin{split}
H(y_{1:b})\, \approx\, %\\
-\!\sum_{\hat{y}_{1:b}} \left( \! \frac{1}{k} \sum_{j=1}^{k} p(\hat{y}_{1:b} | \hat{\omega}_j) \right)\! 
\log\! \left( \frac{1}{k} \sum_{j=1}^{k} p(\hat{y}_{1:b} | \hat{\omega}_j) \! \right).
% \nolabel % \label{eq:2}
% \end{split}
\end{equation}
% \note[AP]{how do we fix this equation going out of bounds? Also it should have a equation number to be consistent with the others no? The others need equation numbers as they are referenced...}
% \note[RG]{1. Only give num to eqn's that are reference.
%
% = E_{p(y_1, \ldots, y_b)} \left[-\log p(y_1, \ldots, y_b)\right]\]
% \[= E_{p(\omega)} E_{p(y_1, \ldots, y_b|\omega)} \left[-\log E_{p(\omega)} p(y_1, \ldots, y_b | \omega)\right]\]
% \[\approx -\sum_{\hat{y}{1:b}} \frac{1}{k} (\sum_{j=1}^{k} p(\hat{y}{1:b} | \hat{\omega}j) (\log \frac{1}{k} \sum_{j=1}^{k} p(\hat{y}{1:b} | \hat{\omega}_j)))\]
%
% \begin{eqnarray*}
% H(y_1, \ldots, y_b) &=& E_{p(y_1, \ldots, y_b)} \left[-\log p(y_1, \ldots, y_b)\right]\\
%  &=& E_{p(\omega)} E_{p(y_1, \ldots, y_b|\omega)} \left[-\log E_{p(\omega)} [p(y_1, \ldots, y_b | \omega)]\right]\\
% &\approx & -\sum_{\hat{y}_{1:b}} \frac{1}{k} (\sum_{j=1}^{k} p(\hat{y}_{1:b} | \hat{\omega}_j) (\log \frac{1}{k} \sum_{j=1}^{k} p(\hat{y}_{1:b} | \hat{\omega}_j)))
% \end{eqnarray*}
%
The difference in averaging between the left and right terms of the mutual information means they only cancel out 
(and make Equation~\ref{eq:MutInfo} equal 0) 
when the model outputs show minimal variation, indicating high confidence and low information gain. When model uncertainty is high, the entropy of the model (left term) increases, while the expected entropy of predictions (right term) decreases, leading to higher mutual information \citep{BatchBALD_Paper}. BatchBALD was originally designed for classification tasks with mutually exclusive classes. It can be applied to time-series tasks by dividing time into exclusive bins –-
see Section~\ref{sec:Formulation of the problem}.\vspace{-1em}
\subsection{Adjusting for Survival Data} %4.2
\label{sec:Our Method:adj-surv}
We can modify the probabilities assigned by the learned model, by setting to 0 the probability of the event occurring in any time bin before the censoring time for an instance. 
If the the instance's censored time is in the $j$th bin, $b_j$, then \( p_{cens}(y = b_i \mid \omega) = 0 \) for all $i < j$. 
We can then normalize the probabilities  to get new probabilities
% \hspace*{3em}
\mmode{ %\[
\CPof{cens}{y = b_i}{\omega} 
\quad=\quad \frac{\CPof{}{y = b_i }{\omega}}{\sum_{r=i}^{n} \CPof{}{y = b_r }{\omega}}. } % \]
% \\[-4ex]

\begin{figure*}[b] % to save space, use t or b .. not h
\begin{center}
 \includegraphics[width=\linewidth]{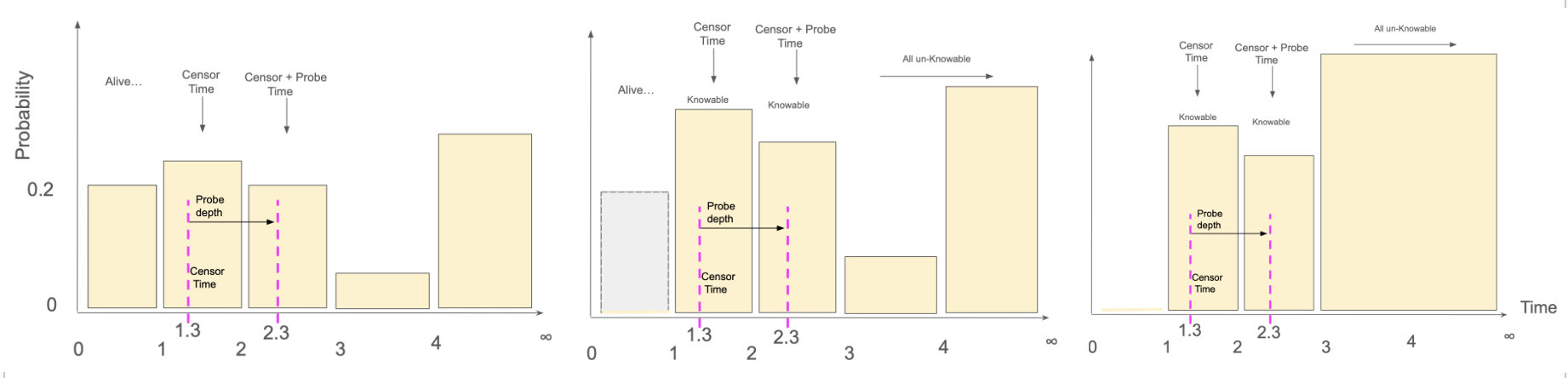}
\end{center}
\vspace*{-3ex}
\caption{\label{fig:TweakBBald}
Showing how we modify the probabilities assigned to each of 5 time bins, for a person censored at a time 1.3, when the Probe Depth $k=1$.
(left)~Original probabilities for the 5 bins [ 0.20, 0.25, 0.20, 0.05, 0.30].
(middle)~Showing  $p_{cens}$, which corrects as the patient is alive for the first bin, so has 0 probability of dying.
(right)~Collecting all of the ``un-knowable bins'' into one: $p_{final}$.}
\comment{
https://docs.google.com/presentation/d/1aARcqjYAIQr8teYhZp7zK5YONAwF6c30vnje6M5T5Gk/edit#slide=id.g32b59c4b56e_0_113 
Notes for next iteration:  the "4" is not level.  Larger fonts for "Censor time", etc.  Should use 2.1 rather than 2.3 (as an interval is [2,3).  Compute the actual heights, more correctly.
}
\end{figure*}

We show this in Figure~\ref{fig:TweakBBald}:
As the patient's censoring time $c=1.3$, is in the second bin [1,2), there is 0 chance they will die in the first bin [0,1);
the middle plot shows the distribution where this probably is now 0,
and the other 4 bins are all multiplied by 
$1/(1 - 0.2)$ (So of course, these 4 probabilities add up to 1). Now notice there are still 4 bins with non-0 probability of dying. Given the censor time of $c =1.3$ and probe depth $k=1$,
we will only learn about the patient until the time 
$1.3+1 = 2.3$, which means we might find that they died later in the second bin, 
or perhaps in the third bin, $b_3\, =\, [2, 3)$,
but otherwise, we will just see that this patient is censored.
We therefore view bins 2 and 3 as (possibly) ``knowable''.

Recall that BatchBALD needs the probability of the label being in a class to compute its mutual information,
but for classes beyond the probe depth, 
as the label cannot be in them after probing, 
we consider these as "un-knowable", 
and view them all as a single class.
Here, as there is no chance of finding them dying in 
either the fourth or fifth bins, 
BatchBALD can combine them into a single 
unknowable bin $b_{uk}$,
with probably of (0.05 + 0.30)/(1- 0.2) = 0.4375
-- Figure~\ref{fig:TweakBBald}(right).

This means BatchBALD will then need to deal with just three remaining bins: $b_2$, $b_3$ and $b_{uk}$ -- where the probabilities of $b_2$ and $b_3$ are the same as they were within $p_{cens}$. In general, there might be many unknowable bins; they would all be combined, in this fashion. The result is the $p_{final}$ distribution. 
To specify precisely, 
we want BatchBALD to only see the classes
that are knowable (that intersect the interval
$[c,\ c+k]$), and then an additional class that represents the cumulation of all classes outside that range. Here, we create a new probability distribution $p_{final}$ that is equal to $p_{cens}$ for all bins before the bin that includes $c + k$, and only includes one other bin $b_{uk}$: 
% \\*[-12ex] [][]
\mmode{\CPof{final}{y = b_{uk}}{\omega} 
\quad=\quad \sum_{i=j + 1}^{n} \CPof{cens}{y = b_i }{\omega}}
% \] % \\*[-5ex]  % -12ex
where $j$ is the bin $c+k$ lies in. We denote \textbf{ $BB_{surv}$} to be the version of 
BatchBALD that uses the $p_{final}$ probabilities. Furthermore, we let $a_{BB_{surv}}$ denote the acquisition function that takes in the data pool to further label the model and returns a batch using $BB_{surv}$. 
For all methods other than $BB_{surv}$, we use the probabilities of $p_{cens}$ rather than $p_{final}$, 
since our goal is to compare our method, which incorporates $p_{final}$, against controls that do not use it. % \\[-9ex]

\subsection{Maximum Coverage and Generalizing over Uniform Costs}
\label{sec:Our Method:Max coverage and generalising}
Given the BatchBALD implementation  already mentioned, we have a metric attached to each batch of instances that tells us how informative each batch is for the learner. The problem is now a strictly computational problem about choosing the batch that maximizes the metric while fitting within the budget. We call this problem the "Budgeted Learning Computation problem" (BLCP) and define it further in Appendix~A.4. The \textit{maximum coverage problem} is a combinatorial problem; given a set of elements and a collection of subsets, select a specified number of these subsets such that the total number of distinct elements covered by the chosen subsets is maximized \citep{Budgeted_MC}. The \textit{weighted maximum coverage problem} extends this by associating a weight with each element, and the objective is to select subsets to maximize the total weight of the covered elements. Appendix~A.3, further discusses maximum coverage and Appendix~A.4, reduces BLCP with uniform costs to the weighted maximum coverage problem. The \textit{budgeted maximum coverage problem} extends the weighted version by introducing a cost constraint to each set. In this variant, each subset has an associated cost, and the goal is to select subsets such that the total coverage is maximized while keeping the total cost within a given budget. The non-uniform BLCP problem can reduce to the budgeted maximum coverage problem. Fortunately, \citep{Budgeted_MC} introduces an alternative greedy algorithm that achieves the same desirable lower bound of $(1-1/e)$ of the optimal solution as the original greedy algorithm. We show this new greedy method in Algorithm 2 in the Appendix~A.5.

Algorithm~2 is very computationally expensive as it involves costly operations to meet the theoretical bound.  Appendix~A.5, argues that Algorithm 2 can be simplified and approximated for our settings by focusing  on the ratio of mutual information to the cost of an instance. This simplification reduces the algorithm's complexity from cubic to linear in the number of instances. Although designed for non-uniform instance costs, the new greedy algorithm simplifies to the original greedy algorithm when costs are uniform, making it applicable to both the uniform and non-uniform cost settings. Therefore, we will be using this algorithm for selecting the batch that attempts to maximize the mutual information metric. The reduced form of Algorithm~2 is shown in Algorithm~1 which illustrates our novel adaptation of the greedy algorithm, incorporating the proposed acquisition function, $a_{BB_{surv}}$. The time complexity of \( a_{BB_{\text{surv}}} \)is equivalent to that of the BatchBALD acquisition function.

% \begin{algorithm}[t] % do NOT use [H]!!
% \caption{$BB{Surv}$ ($1 - 1/e$-approximate algorithm)}
% \begin{algorithmic}[1] % The [1] option enables line numbering starting at 1
% \Require Budget $B$, Data pool $D_{pool} = \{d_1, \dots, d_z\}$, $ $Instance costs $\{c_1, \dots, c_z\}$, Model parameters $\omega$
% \State $A \gets \emptyset$
% \State $cost \gets 0$
% \State $D_{pool} \gets \{ d_i \in D_{pool} \mid cost + c_i \leq B \}$
% \While{$D_{pool} \neq \emptyset$}
%     \State $d_{j} \gets \max_{d_i \in D_{pool}} \left( \frac{a_{\text{BB}_{\text{surv}}}(A \cup \{d_i\}, p(\boldsymbol{\omega} \mid \mathcal{D}_{\text{train}}))}{c_i} \right)$
%     \State $A \gets A \cup \{d_j\}$
%     \State $cost \gets cost + c_j$
%     \State $D_{pool} \gets D_{pool} \setminus \{d_j\}$
%     \State $D_{pool} \gets \{ d_i \in D_{pool} \mid costs + c_i \leq B \}$
% \EndWhile
% \State \textbf{Output:} acquisition batch $A$
% \end{algorithmic}
% \end{algorithm}

% \subsection{Bayesian Model}
% \label{sec:Our Method:Bayesian Model}

\begin{algorithm}[t]
\caption{$BB_{Surv}$ ($1 - 1/e$ approximation)}
\begin{algorithmic}[1]
\Require Budget $B$, Pool $D = \{d_i\}_{i=1}^z$, Costs $\{c_i\}$, Model parameters $\omega$
\State $A \gets \emptyset$, $cost \gets 0$
\State $D \gets \{d_i \in D \mid c_i \leq B - cost\}$
\While{$D \neq \emptyset$}
    \State $d_j \gets \arg\max_{d_i \in D} \frac{a_{\text{BB}_{\text{surv}}}(A \cup \{d_i\}, p(\omega|\mathcal{D}_{\text{train}}))}{c_i}$
    \State $A \gets A \cup \{d_j\}$, $cost \gets cost + c_j$
    \State $D \gets D \setminus \{d_j\}$
    \State $D \gets \{d_i \in D \mid c_i \leq B - cost\}$
\EndWhile
\State \Return $A$
\end{algorithmic}
\end{algorithm}
BatchBALD requires a measure for the uncertainty of the output of the model for an instance to calculate the mutual information \cite{BatchBALD_Paper}. Bayesian models assign a probability distribution to each parameter, enabling the quantification of uncertainty within the model itself when making evaluations. Bayesian models often  provide similar performance as other regression models but they also provide a distribution for the values they provide, making it possible to measure confidence intervals and other important measures \citep{Bayesian_model}. However, Bayesian models are often more computationally expensive as they compute the distributions of the parameters in play. Furthermore, the chosen priors of the distribution heavily influence the outcome of the model, thus the use of effective priors is very important \citep{Bayesian_model}.

\citet{Bayesian_model} uses the Evidence Lower Bound (ELBO) for balancing prior distribution alignment (via KL divergence) and data likelihood maximization. 
The work adapts and tests two survival frameworks: 
(1)~\textbf{Cox Proportional Hazards} models hazard functions with baseline hazards and covariate effects \citep{Cox_paper}. 
(2)~\textbf{Multi-Task Logistic Regression (MTLR)} partitions time into discrete intervals and fits a series of logistic regressions across those intervals to estimate survival probabilities \citep{MTLR_paper}, thereby avoiding the hazard ratio assumption. We use the Bayesian MTLR model in \citet{Bayesian_model} since it directly utilizes censored observations during model training, systematically evaluates and identifies appropriate priors for different clinical scenarios, maintains reasonable computational complexity despite its Bayesian formulation, requires no proportional hazards assumptions, and provides per-instance event probability estimates within discrete time intervals. Furthermore, the work also looks at two separate types of priors:
(1)~\textbf{Spike-and-Slab}: combines Dirac delta (exact zeros) and Gaussian distributions for explicit feature selection. 
(2)~\textbf{Horseshoe}: allows strong shrinkage of irrelevant features while preserving significant effects. \vspace{-1em}
\section{Alternative Algorithms} %6
\label{sec:AltAlgs} 
To evaluate our algorithm, it is important to compare it with other approaches.
We considered the following methods which include some active learning algorithms adjusted to our settings, and some hueristic sampling methods. We discuss these further in Appendix~A.6.

\textbf{Inverse Distance Weighting (IDEAL) Acquisition:}
Inverse Distance Weighting extends traditional IDW \citep{Bemporad2023} computes a weighted combination of model uncertainty and feature space exploration, $a(x)\ =\ s^2(x) + d \cdot z(x)$ where $s^2(x)$ represents uncertainty through weighted variance of predictions, $z(x)$ is a diversity term based on distances to existing observations, and $d$ is a parameter controlling the exploration-exploitation trade-off. We adjust IDEAL to first select instances based on information value, then prune the high-cost instances until it is within the budget.

\textbf{Censoring-aware Bayesian Active Learning by Disagreement (C-BALD):}
C-BALD adapts the BALD framework~\citep{Huttel2024} but processes entire survival curves rather than classification probabilities and implements differential weighting between censored and uncensored observations.

\textbf{Entropy Sampling:}
Entropy sampling is a well-established technique in AL \citep{Ren2021} that focuses on targeting instances where the model's predictive distribution has the highest entropy.
$H(y_i | x_i)\ =\ - \sum_{j=1}^n p_{\text{cens}}(y_r = b_j \mid x_i) \log_2 p_{\text{cens}}(y_r = b_j \mid x_i).$
The acquisition function then takes the instances with the largest entropy values until the budget is used up.

\textbf{Variance Sampling:} This method involves using variance in the predicted probabilities for an instance to assign a value to each instance,
using % \( \text{Var}(x) \):
$ % \[\small
\text{Var}(x)\ \ = \ \ \frac{1}{n} \sum_{i=1}^n (p_{cens}(y = b_i \mid x) - \bar{p_i})^2
$, % \]
where \( \bar{p_i} \) is the mean predicted probability value among the bins for the ith instance. Take the highest-valued instances 
until the budget is used up.

\textbf{Closest to Half (CtH):}
Let \( p_{window,i} \) denote the predicted probability that the event will occur within the probe window based on the current learned model for the \( i^{th} \) instance. 
For each instance, we compute its absolute distance from 0.5:
$ % \begin{equation}
dist_i = |p_{window,i} - 0.5|
$. The method  chooses instances with lowest $dist_i$ values until the budget is exhausted.

\textbf{Mean Closest to Middle (MCtM):}
Let the midpoint bin  $T_{\text{mid}} = \frac{b_n + b_1}{2}$.
For each data instance $i$, we calculate the distance to the midpoint $dist_i\ =\ |\bar{b_i} - T_{\text{mid}}|$, where \( \bar{b_i} \) is the mean predicted class. The method chooses instances with lowest $dist_i$ values until the budget is used.

\textbf{Clusters to form Batches (CfB):}
This method leverages clustering and censoring measures for instance selection. We use Principal Component Analysis (PCA) to reduce the feature space, followed by K-means clustering to group the data. Clusters with higher average censoring measures are prioritized, and instance selection is guided by proximity to cluster centers while respecting cost constraints. Distance
$dist_i =  \text{Proximity}(i, \mu_j) \cdot Censoring_j$ where $i$ is in cluster $C_j$.
For each cluster, we calculate the average censoring measure based on the time-to-event data and the censoring status. The % average censoring measure for a cluster \( C_j \) is
% $Censoring_j = \frac{1}{|C_j|} \sum_{i \in C_j} \delta_i$.
clusters with higher average censoring measures represent areas with greater uncertainty or incomplete information. Within each prioritized cluster, instances are selected based on their proximity to the cluster center \( \mu_j \) as
$Proximity(i, \mu_j) = \| X_{\text{PCA}, i} - \mu_j \|$, where $X_{\text{PCA}, i}$ describes the lower-dimensional representation of instance 
$i$ in the PCA-transformed feature space. Given the $dist_i$ we choose the batch the same way we do for the other methods.

\textbf{Random:}
This method picks random data instances in such a way that the probability of choosing any given instance is proportionate to the reciprocal of the instance cost.\vspace{-1em}
\section{Experiments}
\label{sec: Experiments}

Datasets, are divided into training and test sets balanced to have a similar proportion of censored instances. Labels are assigned to time bins as quantiles of the time to event label. We use 10 bins (however preliminary experiments show similar results with more bins as well). We artificially censor points (censored points get further censored) in the training data so that we can de-censor them in our experiments. This can be done for censored data as well. We further censor $t_i$ and $\delta_i$ to $t^{leaner}_i$ and $\delta^{learner}_i$. We define $t^{leaner}_i$ as a uniformly random time between 0 and $t_i$ to maintain the non-informative censoring assumption discussed in Section~\ref{sec:Formulation of the problem}, and set $\delta^{learner}_i = 0$. The learner only has access to $t^{leaner}_i$ and $\delta^{learner}_i$, and upon probing the oracle for instance $i$, the oracle --with a given probe depth $k$-- returns to the learner $t^{probed}_i = \min(t^{leaner}_i + k , t_i)$, and $\delta^{probed}_i = \mathbf{1}_{\{ t^{probed}_i = t_i \}}*\delta_i$.

We explored 3 real-world survival datasets,
with an assortment of number of patients, number of features, and percentages of censored data: 
(MIMIC-IV; 38520 patients, 93 features, 67$\%$ censored)~\citep{MIMIC_data}, 
the {\em Northern Alberta Cancer Dataset}~(NACD; 2402 patients, 53 features, 36$\%$ censorship)~\citep{Haider2020_ISD}, and the {\em Study to Understand Prognoses Preferences Outcomes and Risks of Treatment}~(SUPPORT; 9,105 patients, 42 features, 32$\%$ censored)~\citep{SUPPORT_data}, 
{\em Medical Information Mart for Intensive Care}. 
We used 5000 epochs with a Bayesian MTLR model with the initial parameters provided in \citep{Bayesian_model} with a spike and slab prior. While Table~\ref{table:uniform:budget10} shows our primary  evaluation metric, 
MAE-PO~\citep{Effective_evaluations},
Appendix~B
shows results for other survival metrics, including C-index, %~\citep{Haider2020_ISD}
integrated Brier Score % ~\citep{Haider2020_ISD} 
and MAE with only uncensored data. While these experiments deal with the uniform-cost case
-- where the cost is 1 for each instance --
we also consider non-uniform, where we assign each instance a random real-value cost between 0.2 and 0.8. The MIMIC dataset required more data for the model to learn and for the MAE-PO value to change therefore, we set the costs of instances to $1/5$ that of the costs in SUPPORT and NACD. 
For all settings, each experiment involved a subset of instances
drawn from the full dataset, with 9 censored instances for each uncensored one. We used Google Colab with access to CUDA-enabled GPUs (NVIDIA Tesla T4). Each Colab session had approximately 12 GB of RAM and a standard disk quota.\vspace{-1em}
\section{Results} % 4
\label{sec:Results}

\begin{table}[t]
\centering
\caption{MAE-PO across datasets and probe depths, budget = 10. Uniform setting. M = MIMIC, N = NACD, S = SUPPORT, probe depth represented as +$k$y.}
\label{table:uniform:budget10}
\resizebox{\textwidth}{!}{%
\begin{tabular}{@{}lcccccccccc@{}}
\toprule
\textbf{Dataset} & \textbf{BB surv} & \textbf{BatchBALD} & \textbf{Entropy} & \textbf{Var} & \textbf{CtH} & \textbf{CfB} & \textbf{MCtM} & \textbf{Random} & \textbf{C-BALD} & \textbf{IDEAL} \\ \midrule
M +5y   & \textbf{4.23 $\pm$ 0.01} & 4.34 $\pm$ 0.02 & 4.28 $\pm$ 0.01 & 4.28 $\pm$ 0.02 & 4.45 $\pm$ 0.01 & 4.32 $\pm$ 0.02 & 4.29 $\pm$ 0.02 & 4.45 $\pm$ 0.01 & 4.34 $\pm$ 0.01 & 4.40 $\pm$ 0.01 \\
M +10y  & \textbf{4.25 $\pm$ 0.01} & \textbf{4.27 $\pm$ 0.02} & 4.28 $\pm$ 0.01 & 4.33 $\pm$ 0.02 & 4.31 $\pm$ 0.01 & 4.27 $\pm$ 0.02 & 4.28 $\pm$ 0.01 & 4.32 $\pm$ 0.02 & 4.32 $\pm$ 0.01 & 4.36 $\pm$ 0.01 \\
M +100y & \textbf{4.18 $\pm$ 0.02} & \textbf{4.18 $\pm$ 0.01} & \textbf{4.18 $\pm$ 0.02} & 4.27 $\pm$ 0.01 & 4.27 $\pm$ 0.02 & 4.23 $\pm$ 0.01 & 4.19 $\pm$ 0.02 & 4.31 $\pm$ 0.01 & 4.30 $\pm$ 0.02 & 4.31 $\pm$ 0.02 \\
N +5y   & \textbf{3.63 $\pm$ 0.01} & 3.67 $\pm$ 0.02 & 3.64 $\pm$ 0.01 & 3.66 $\pm$ 0.02 & 3.73 $\pm$ 0.01 & 3.80 $\pm$ 0.02 & 3.81 $\pm$ 0.01 & 3.85 $\pm$ 0.02 & 3.74 $\pm$ 0.01 & 3.80 $\pm$ 0.01 \\
N +10y  & \textbf{3.60 $\pm$ 0.01} & 3.63 $\pm$ 0.02 & \textbf{3.61 $\pm$ 0.01} & 3.65 $\pm$ 0.02 & 3.71 $\pm$ 0.01 & 3.64 $\pm$ 0.02 & 3.66 $\pm$ 0.01 & 3.80 $\pm$ 0.02 & 3.65 $\pm$ 0.01 & 3.69 $\pm$ 0.01 \\
N +100y & \textbf{3.57 $\pm$ 0.01} & \textbf{3.57 $\pm$ 0.01} & 3.60 $\pm$ 0.01 & 3.63 $\pm$ 0.02 & 3.70 $\pm$ 0.01 & 3.65 $\pm$ 0.02 & 3.68 $\pm$ 0.01 & 3.65 $\pm$ 0.02 & 3.65 $\pm$ 0.01 & 3.67 $\pm$ 0.01 \\
S +5y   & 2.10 $\pm$ 0.01 & 2.10 $\pm$ 0.01 & 2.10 $\pm$ 0.02 & 2.09 $\pm$ 0.01 & 2.09 $\pm$ 0.02 & \textbf{2.08 $\pm$ 0.01} & 2.09 $\pm$ 0.01 & 2.10 $\pm$ 0.02 & 2.11 $\pm$ 0.01 & 2.12 $\pm$ 0.01 \\
S +10y  & \textbf{2.06 $\pm$ 0.01} & 2.08 $\pm$ 0.02 & 2.10 $\pm$ 0.01 & 2.09 $\pm$ 0.01 & 2.09 $\pm$ 0.02 & 2.10 $\pm$ 0.01 & 2.09 $\pm$ 0.01 & 2.10 $\pm$ 0.02 & 2.10 $\pm$ 0.02 & 2.10 $\pm$ 0.02 \\
S +100y & \textbf{2.06 $\pm$ 0.01} & \textbf{2.06 $\pm$ 0.01} & 2.10 $\pm$ 0.02 & 2.09 $\pm$ 0.01 & 2.10 $\pm$ 0.01 & 2.10 $\pm$ 0.02 & 2.08 $\pm$ 0.01 & 2.10 $\pm$ 0.02 & 2.10 $\pm$ 0.02 & 2.10 $\pm$ 0.02 \\ \bottomrule
\end{tabular}%
}
\end{table}

Each cell in Table~\ref{table:uniform:budget10} represents the mean value as well as a 95$\%$ confidence interval. 
A bolded cell indicates that it is lower than 
the
unbolded cells in that row by a statistically significant portion (two sample t-test with p<0.05).
The table shows that, across 3 different probe depths for the oracle, $BB_{surv}$ outperforms other algorithms when 
the
budget is equal to 10.
This is promising as it shows that our method achieves better performance than other acquisition functions, 
including BatchBALD. 
This suggests that the method we used to account for the probe depth helped the model's performance. 
We also see that as the probe depth increases, the traditional BatchBALD method and our altered method converge -- which makes sense, as the original BatchBALD is theoretically equivalent to $BB_{surv}$ as $k \rightarrow\infty$. 
Appendix~B
shows more information in results including different metrics, and budgets. Figures 5 and 6 in Appendix~B show a visualization of the instances chosen in this setting using PCA, which suggests $BB_{surv}$'s superior performance is through increased diversity of instances chosen in these settings. In Table~\ref{table:uniform:budget10}, CfB occasionally ties with $BB_{surv}$ for the lowest MAE-PO values. 
CfB is a more involved method that, in certain budget settings, does surprisingly well. 
Note this unexpected finding does not hold in the non-uniform costs setting.

Figure~\ref{fig:mae_po_comparison_figure}(left) shows that, across budgets for all 3 datasets, $BB_{surv}$ does the best.
There appears to be more inherent randomness when we compare across budgets, as each time represents a new training of a model rather than the same model trained further. 
Figure~\ref{fig:mae_po_comparison_figure}(right) shows that similar to the uniform costs case, the non-uniform costs case shows that $BB_{surv}$ performs far better than all 8 other acquisition functions across budgets. However, now we note that $BB_{surv}$ and BatchBALD have an even more distinct advantage at the end. We believe this is because dealing with instance costs by dividing by the cost assumes a submodular metric (Appendix~A.4). 
Many of the other metrics do not measure mutual information, they do not reduce to the maximum coverage problem and thus they need to handle the non-uniform costs differently. Mutual information is a flexible metric that allows us to handle budget in our method. These results held consistently for other evaluation metrics.

\begin{figure}[t]
\centering
\begin{subfigure}{0.39\linewidth}
    \centering
    \includegraphics[width=\linewidth]{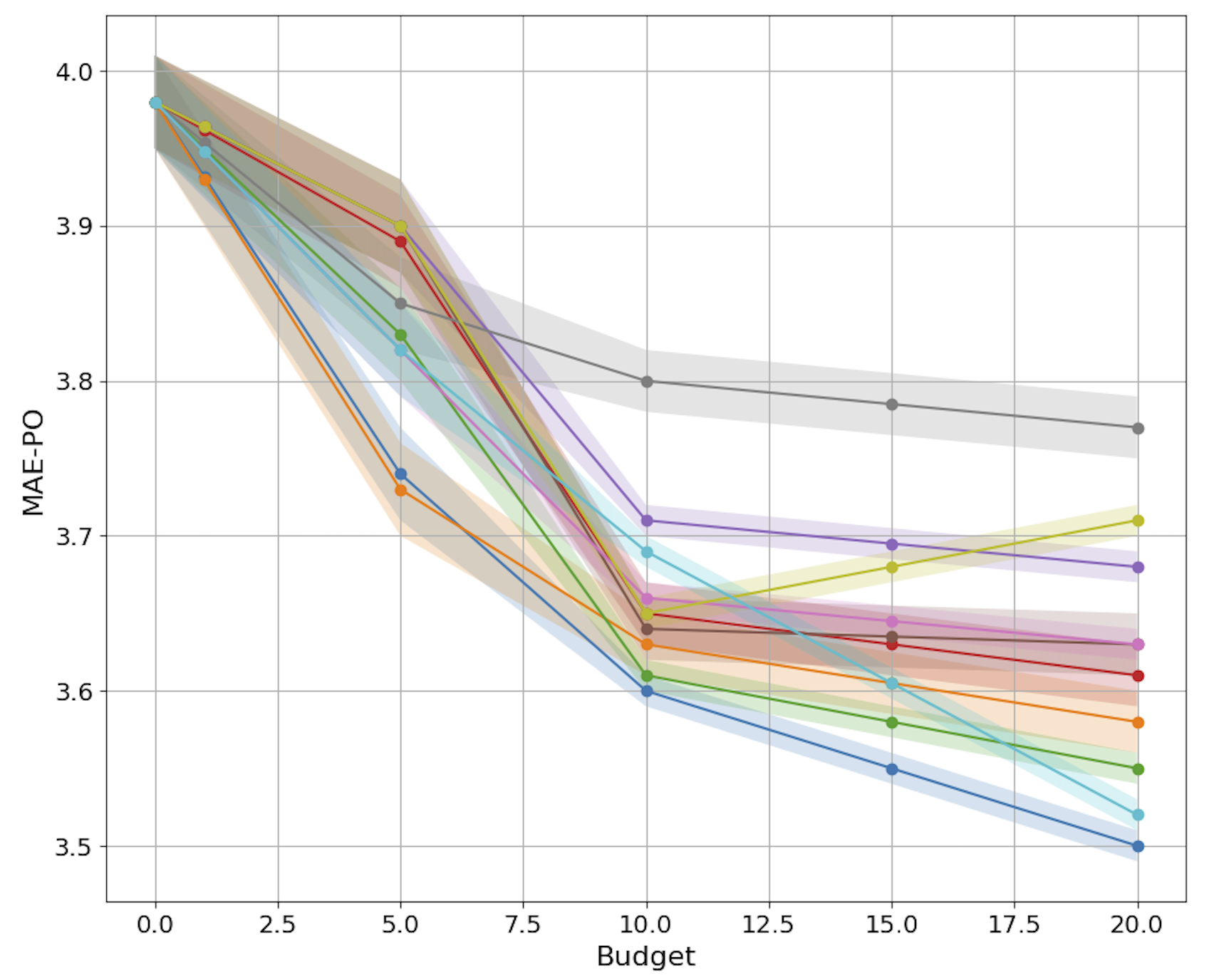}
\end{subfigure}%
\hfill
\begin{subfigure}{0.48\linewidth}
    \centering
    \includegraphics[width=\linewidth]{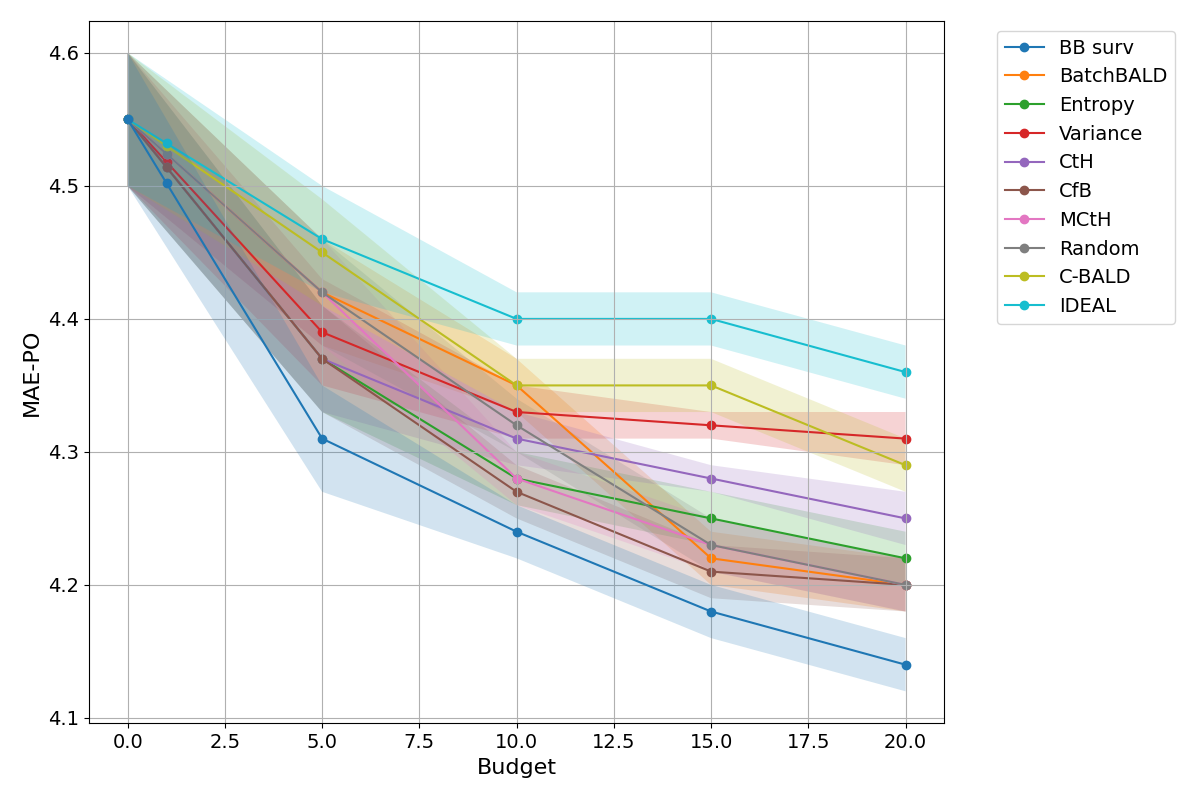}
\end{subfigure}
\vspace{-0.5em} % <- Tweak this if needed
\caption{Plot of MAE-PO evaluation of different acquisition functions with error bars as average of 40 predictions by the model. Both settings include a probe depth of 10. The left plot uses NACD dataset with 100 uncensored and 900 censored points in the uniform setting. The right plot uses MIMIC dataset starting with 300 uncensored and 2700 censored points in the non uniform setting.}
\label{fig:mae_po_comparison_figure}
\vspace{-1em} % <- Pull up following text if needed
\end{figure}\vspace{-1em}
\section{Conclusion}
\label{sec:Conclusion}
This paper discussed a generalized form of active learning that incorporates budget constraints and the individual costs of queried instances. We explored methods to extend acquisition functions for use with censored data and to account for scenarios where only partial information is gained during queries. Our proposed method was evaluated across three real-world datasets; however, we anticipate that this emerging area of research will inspire many future studies. The study does have limitations that can be improved by future studies. In particular, one promising direction is to combine $BB_{surv}$ with a semi-supervised approach. As noted in \citet{BatchBALD_Paper}, such an approach may enhance performance, and given the success of CfB, we believe this is a worthwhile avenue to explore. Additionally, it would be valuable to revisit some of our assumptions, such as whether the probe depth changes over time or if we use more informative censoring (as apposed to non-informative censoring). Finally, there is a substantial body of literature on methods for approximating the maximum coverage problem. Alternative approximation schemes beyond the greedy approach may prove advantageous and warrant further investigation in future work.

\bibliographystyle{plainnat}
\bibliography{main}

\appendix

\section{Theoretical Analysis}
\label{sec:Theoretical Analysis}

\subsection{Motivation behind work}
\label{sec:Motivation behind work}

Below we provide a more in depth explanation of each field and terms we used in the paper, as well as how the terms mend together in our work. Importantly, we are not claiming that our method works only in settings where ALL the following areas are relevant, rather we try and build a general method that excels in ANY of the following domains! Figure ~\ref{fig: venn diagram} depicts the relations between the fields. Table~\ref{table: different works} shows how different works in the literature account for the different areas of study mentioned here.

\textbf{Survival Analysis:} A branch of statistics that deals with time-to-event data, often used to estimate the time until an event of interest (e.g., death, failure) occurs. It accounts for censored data, where the event may not be observed within the study period. In our work we specifically deal with right censored data where only a lower bound is known until the time of event. Practical applications of this field include:

\begin{enumerate}
    \item Predicting patient survival times in healthcare and clinical trials.
    \item Estimating time to failure for mechanical systems in reliability engineering.
    \item Modeling customer churn in business and subscription services.
    \item Estimating time until an algorithm completes
\end{enumerate}

Another important point is that survival analysis does not need to deal with time always. For example if estimating the price of an object, a censored instance may be if you only know a lower bound on said price.

\textbf{probe depth}
Since survival analysis involves regression tasks with partial information. A natural followup would be what if the information obtained from further study is also incomplete in some way. From this reasoning we say probe depth is a subset of survival analysis -- Shown in Figure~\ref{fig: venn diagram} where information may be limited by $k$ amount of years into the future.

\textbf{Active Learning:} A machine learning paradigm where the model selectively queries the most informative data points to label, aiming to improve performance with fewer labeled examples.

\textbf{Budgeted Learning:} A setting in machine learning where the learning algorithm must operate under a fixed resource constraint (e.g., labeling budget), requiring careful selection of data points to maximize utility within the budget.

\textbf{Non-Uniform Costs:}
A subset of Budgeted learning where each instance comes with its own (not necessarily distinct) price for gathering more information.

\begin{table}[h]
  \caption{Comparison of relevant works across study aspects}
  \label{table: different works}
  \centering
  \begin{tabularx}{\textwidth}{l*{5}{>{\centering\arraybackslash}X}}
    \toprule
    \textbf{Papers} & \textbf{Active learning} & \textbf{Budget constraints} & \textbf{Survival analysis} & \textbf{Probe depth} & \textbf{Non-uniform costs} \\
    \midrule
    Lizotte et al. (2012)     & \includegraphics[height=1em]{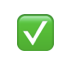} & \includegraphics[height=1em]{figures/check.png} & \includegraphics[height=1em]{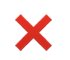} & \includegraphics[height=1em]{figures/cross.png} & \includegraphics[height=1em]{figures/cross.png} \\
    Dedja et al. (2023)       & \includegraphics[height=1em]{figures/check.png} & \includegraphics[height=1em]{figures/cross.png} & \includegraphics[height=1em]{figures/check.png} & \includegraphics[height=1em]{figures/cross.png} & \includegraphics[height=1em]{figures/cross.png} \\
    Hüttel et al. (2024)      & \includegraphics[height=1em]{figures/check.png} & \includegraphics[height=1em]{figures/cross.png} & \includegraphics[height=1em]{figures/check.png} & \includegraphics[height=1em]{figures/cross.png} & \includegraphics[height=1em]{figures/cross.png} \\
    \textbf{Our Work}         & \includegraphics[height=1em]{figures/check.png} & \includegraphics[height=1em]{figures/check.png} & \includegraphics[height=1em]{figures/check.png} & \includegraphics[height=1em]{figures/check.png} & \includegraphics[height=1em]{figures/check.png} \\
    \bottomrule
  \end{tabularx}
\end{table}

\begin{figure*}[h] 
\begin{center}
 \includegraphics[width=\linewidth]{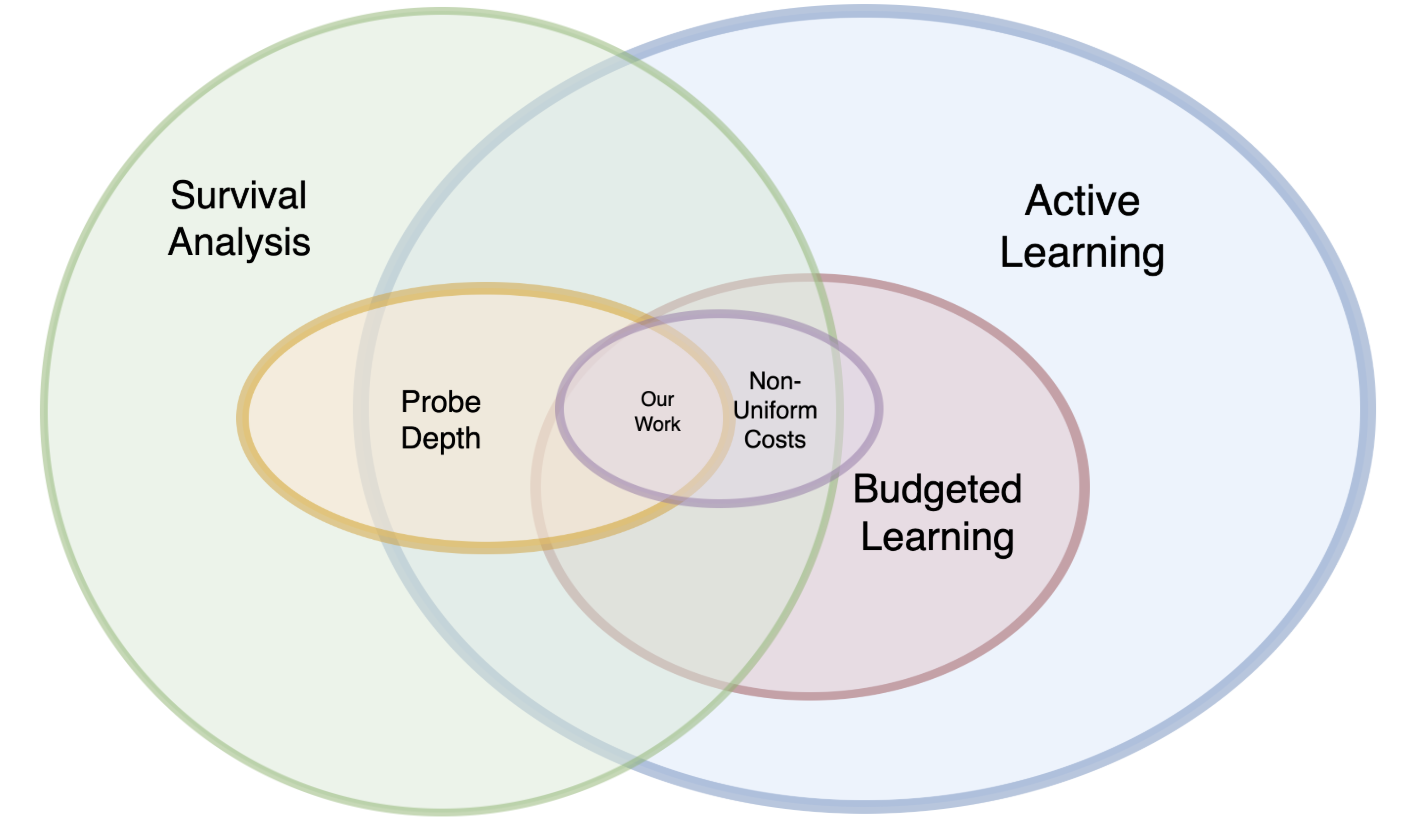}
\end{center}
\caption{A Venn Diagram of the different areas of study presented in this work.}
\label{fig: venn diagram}
\end{figure*}

\subsection{Comparison between BALD and BatchBALD}
\label{Appendix A:Appen bald v bb}

Works in the batch active learning domain include the BALD algorithm, which uses the disagreement between model predictions for a single data instance to compute a mutual information measure, and then selects the top \(K\) data instances to compose a batch of size \(K\). The BatchBALD algorithm extends this idea by computing the mutual information measure for an entire batch of instances (rather than a single instance at a time) and the model parameters, thereby reducing errors in BALD related to double-counting the information provided by multiple data points.

To illustrate this point, consider a 5-year study involving three patients: Alice, Bob, and Carol. You aim to build a model that predicts time-to-event based on instance features. Suppose Alice and Bob have nearly identical features, while Carol's features are distinctly different. BALD might assign a high value to querying Alice or Bob individually, but when selecting a batch of size 2, it evaluates the batch value as the sum of the individual query values. As a result, BALD might deem the batch \(\{ \text{Alice, Bob} \}\) more valuable than \(\{ \text{Alice, Carol} \}\). 

However, this approach overlooks redundancy: since Alice and Bob have highly similar features, the information gained from querying both might be nearly identical. Consequently, querying \(\{ \text{Alice, Carol} \}\) or \(\{ \text{Bob, Carol} \}\) could provide more diverse and useful information. BatchBALD addresses this limitation by evaluating the value of the entire batch as a whole, rather than summing the individual values of its elements. \citet{BatchBALD_Paper} demonstrates experimentally that BatchBALD outperforms BALD, particularly in scenarios where data instances exhibit less diversity. In fact, the work shows that if the diversity is low enough, BALD can perform worse than random sampling.

Although there are several other batch querying methods, BatchBALD has been one of the leading algorithms in the literature.

\subsection{Maximum Coverage Problem}
\label{Appendix A:Maximum Coverage Problem}

The \textit{maximum coverage problem} is a well known combinatorial problem that involves choosing $k$ sets, from a group of $N$ sets of elements. The task is to choose the $k$ sets whose union has the maximum possible size. For example if the sets are as follows: $S_1 = \{1,2,3\}$, $S_2 = \{2,3,4\}$, $S_3 = \{4,5\}$, $S_4 = \{6\}$, where here $N=4$ and $k=2$, then the optimal choice of sets here are sets $S_1$ and $S_3$, as their union $\{1,2,3,4,5\}$ is larger than the union of $S_1$ and $S_2$: $\{1,2,3,4\}$ or $S_2$ and $S_3 = \{2,3,4,5\}$, and all pairs with $S_4$ make at max 4. This problem is known to be {\bf NP}-hard to solve optimally. 

There is an extension of this problem called the \textit{weighted maximum cover problem} where everything is the same except each integer has attached to it a weight, and the goal is now to maximize the sum of the weight of the union rather than simply the size of the union. In the example above, if the integers 1,2,3,4, and 5 all had weight 1, but 6 had weight 10, then now we certainly would wish to include $S_4$ as part of one of the sets we choose, in this case you could choose $S_1$ and $S_4$, or $S_2$ and $S_4$, as both would give you a total highest weight of 13.

If you have a set of data instances $D$, then each instance $d_i \in D$, provides some amount of information. Any two data instances $d_i$ and $d_j$ also have an intersection to consider. Indeed for any batch of data points, if we consider information as area that is covered by a shape in the information space we can represent it visually as in Figure 4.

\begin{figure}[h]
  \centering
  \includegraphics[scale=0.2]{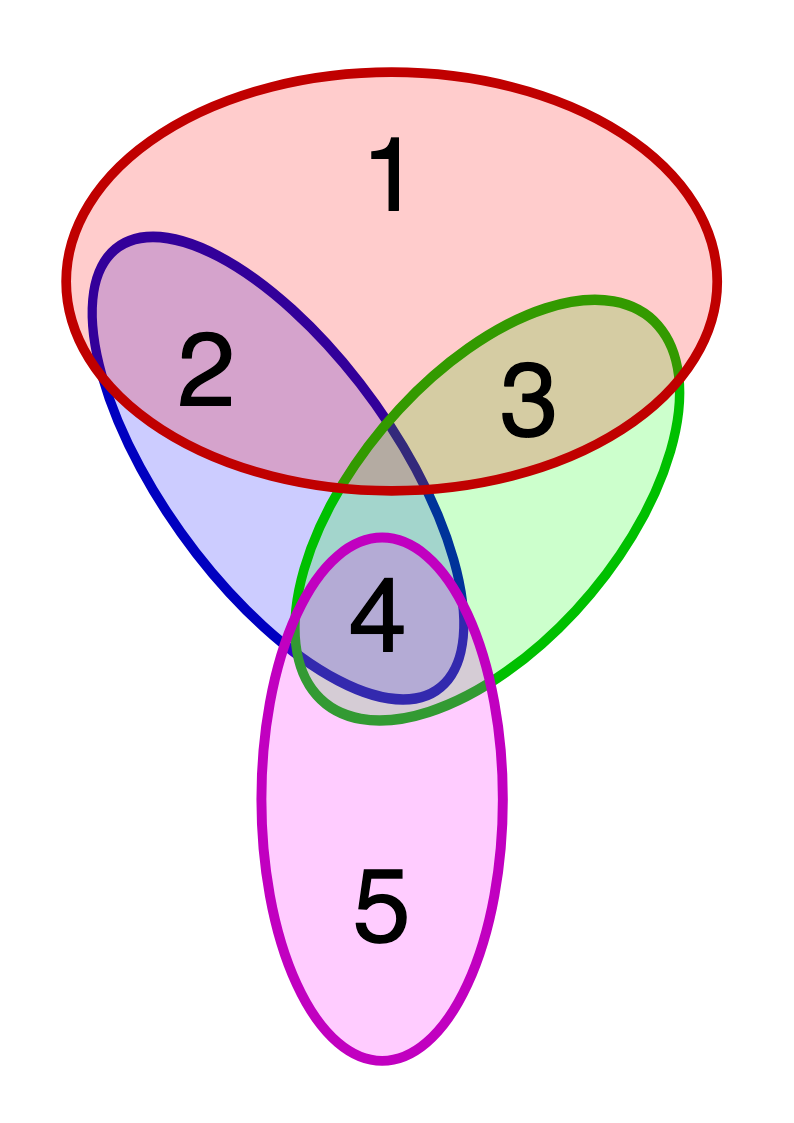}
  \caption{Representation of the maximum coverage problem visually. Each unique intersection between two sets is an intersection we denote as an element part of a set.} 
  \label{Figure 2}
\end{figure}

Assume in Figure~\ref{Figure 2} that blue represents a data instance in information space, while red represented another instance, and so on for many colors. We now label each unique intersection as an integer, then we now have a bunch of sets, each filled with integers. We can give each integer a weight as the amount of area they cover in information space (the amount of information they are expected to give the model). For example the integer ``2" in the figure would have area

\begin{equation}
I(\omega,red) + I(\omega,blue) - I(\omega, \text{red} \cup \text{blue})
\end{equation}

In the batch active learning case, the task now becomes choosing the k data instances (which are sets in this case) out of the N total data points that give maximal information cover maximal area). Thus we can simplify the active learning problem using this formulation of mutual information provided by BatchBALD into the weighted maximum  coverage problem discussed above, we provide a more formal proof of this in Section~\ref{Appen: Reductions}. This is a very useful representation of the problem as it allows an easier way to think about the machine learning problem as a simpler combinatorial one.

If the information function is a submodular function, then the greedy algorithm does very well~\cite{BatchBALD_Paper}. 

\begin{definition}
\label{def:submodular}
A set function \( f: 2^N \to \Re\) defined on the subsets of a finite set \( N \) is called \textbf{submodular} if for every \( A, B \subseteq N \),

$$
f(A) + f(B) \geq f(A \cup B) + f(A \cap B)
$$

Equivalently, \( f \) is submodular if it satisfies the \textbf{diminishing returns} property: for every \( A \subseteq B \subseteq N \) and \( x \in N \setminus B \),

$$
f(A \cup \{x\}) - f(A) \geq f(B \cup \{x\}) - f(B)
$$

\end{definition}

Importantly, with the submodular property shown above, mutual information has a property that the sum of marginal values is always greater than or equal to the union. For example if A and B are two instances, then: 

$$
I(A,\omega) + I(B,\omega) \geq I(A \cup B, \omega)
$$

This property is critical to the reduction to the maximum coverage problem described above and allows us to use the greedy approach defined in \cite{BatchBALD_Paper}.

The greedy approach for the maximum coverage problem involves iteratively selecting the element or set that provides the largest marginal gain in coverage at each step. It gives the highest known guaranteed lower bound of all polynomial time approximation algorithms. In fact proving there is a polynomial time approximation algorithm that achieves a higher lower bound is equivalent to proving {\bf P} = {\bf NP} \cite{Budgeted_MC}. The greedy algorithm discussed above achieves a lower bound of $1-1/e$ (around 63$\%$) of the optimal, and is the algorithm used in BatchBALD for selecting it's batch \cite{BatchBALD_Paper}.

\subsubsection{Further Research on Maximum Coverage}
\label{A.2.1}

The Maximum Coverage Problem is {\bf NP}-hard, and finding exact solutions is infeasible (by current known methods) for large problem instances. However, the problem can often be effectively approximated using heuristic methods.

Several methods (aside from the aformentioned greedy method) have been proposed to approximate the problem efficiently, including:

\begin{itemize}
    
    \item \textbf{Local Search:} 
    This method starts with an initial solution and iteratively improves it by making local changes, such as swapping, adding, or removing subsets. While local search can yield high-quality solutions, its effectiveness depends heavily on the chosen neighborhood structure and optimization strategy.
    
    \item \textbf{Metaheuristics:} 
    Techniques like simulated annealing, genetic algorithms, and tabu search explore a large solution space to identify near-optimal solutions. These methods are particularly useful for large or complex instances where greedy approaches may fall short.
    
    \item \textbf{Linear Programming Relaxation:} 
    The maximum coverage problem can be formulated as an integer linear program (ILP). By relaxing the integrality constraints, the problem reduces to a linear program (LP) that can be solved efficiently. Rounding the LP solution provides a good approximation to the optimal result.
\end{itemize}

Despite all of these different methods, we chose to use the greedy method for a few reasons. For one, the lower bound guarantee of the greedy method is optimal (as far as current theory knows) and thus none of these methods can do better. Secondly, the greedy method is more interpretable, more well accepted (and tested) in the literature for AL settings, and easier to generalize theoretically to the non-uniform setting in this work.

\subsection{Reductions of the computational Problem}
\label{Appen: Reductions}
\subsubsection{Reducing the Weighted Maximum Coverage Problem to the Budgeted Learning Computation problem}
\label{Appen: WMCP to BLCP}

In this section, we formally reduce the weighted maximum coverage problem (WMCP), under the same initial assumptions outlined in Section~\ref{sec:Formulation of the problem}, to the Budgeted Learning Computation problem (BLCP). The difference between the Budgeted Learning problem  and BLCP is that BLCP assumes the existence of an oracle $I$ with submodular and efficient computation properties that can assign an information value to any batch of data instances $B$. In the context of the paper, this is achieved using $BB_{surv}$. Assuming we have this oracle, selecting the future batch of instances to learn about becomes a purely computational problem.
\\\\
\textbf{The Budgeted Learning Computation problem is defined as:}
\\

Let $D_{pool}=\{x_1,\dots,x_n\}$ be a pool of $n$ instances, each endowed with a cost $c_i>0$, and let $B>0$ denote the total budget.  
Assume the existence of a polynomial‑time oracle that evaluates a non‑negative submodular function
\[
\mathcal{I}:\ 2^{D_{\text{pool}}}\ \longrightarrow\ \mathbb{R}_{\ge 0}.
\]
Our objective is to select a subset
\[
Q\subseteq D_{pool}
\]
that maximizes $\mathcal{I}(Q)$ while respecting the knapsack constraint
\[
\sum_{x_i\in Q} c_i \;\le\; B
\]\\\\\\
\textbf{The Weighted Maximal Coverage problem is defined as:}
\\

Let $\mathcal{U}=\{e_1,\dots,e_m\}$ be a ground set of element. Each element $e_i$ carries a non‑negative weight $w(e_i)\in\mathbb{R}_{\ge 0}$. 
Consider a family of subsets $\mathcal{S}=\{S_1,\dots,S_n\}\subseteq 2^{\mathcal{U}}$. We can assume here that $|U| = poly(|\mathcal{S}|)$.

The weighted maximal coverage problem seeks a set

\[
\mathcal{A}\subseteq\mathcal{S}
\]
where $|\mathcal{A}| = K$ for some $K \in \mathbb{Z}_{+}$,
that maximises the total weight of the elements it covers,
\[
\max_{\mathcal{A}}\;
\bigl\{\,W(\mathcal{A}) \;=\; \sum_{e\in\cup_{S\in\mathcal{A}}S} w(e)\bigr\}.
\]

% For the following theorems, we use the definitions from table 3.1.

\begin{theorem}
\label{thm:bigtheorem}
The weighted maximal coverage problem, can be reduced in polynomial time to the Budgeted Learning computational problem assuming the cost for all instances is the same.
\end{theorem}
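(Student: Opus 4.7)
The plan is to give a polynomial-time many-one reduction that sends every WMCP instance to a uniform-cost BLCP instance with the same optimal value. Given a WMCP instance $(\mathcal{U}, w, \mathcal{S}, K)$ with $|\mathcal{U}| = m$ and $|\mathcal{S}| = n$, I would construct the BLCP instance as follows: the pool is $D_{\text{pool}} = \{x_1, \ldots, x_n\}$ with a one-to-one correspondence $x_j \leftrightarrow S_j$; every cost is set to $c_j = 1$; the budget is $B = K$; and the information oracle is defined by
\[
\mathcal{I}(Q) \;=\; \sum_{e \in \bigcup_{x_j \in Q} S_j} w(e).
\]

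Next I would verify that $\mathcal{I}$ satisfies the three properties the BLCP statement demands of its oracle. Non-negativity is immediate because all weights $w(e)$ are non-negative. Polynomial-time evaluation holds because computing $\bigcup_{x_j \in Q} S_j$ and summing weights takes time $O(|Q| \cdot m)$, which is polynomial under the standing assumption $m = \operatorname{poly}(n)$. Submodularity is the standard diminishing-returns property of weighted coverage: for $A \subseteq B \subseteq D_{\text{pool}}$ and $x \notin B$, the newly covered elements contributed by adding $x$ to $B$ form a subset of those contributed by adding $x$ to $A$, and non-negativity of weights then gives $\mathcal{I}(A \cup \{x\}) - \mathcal{I}(A) \ge \mathcal{I}(B \cup \{x\}) - \mathcal{I}(B)$, matching Definition~\ref{def:submodular}.

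For correctness, I would establish a value-preserving bijection between feasible WMCP solutions $\mathcal{A} \subseteq \mathcal{S}$ and feasible BLCP solutions $Q \subseteq D_{\text{pool}}$ via $\mathcal{A} = \{S_j : x_j \in Q\}$; by construction $W(\mathcal{A}) = \mathcal{I}(Q)$. The only subtle point is that BLCP uses the inequality $\sum c_i \le B$ while WMCP pins $|\mathcal{A}| = K$. With uniform costs the inequality collapses to $|Q| \le K$, and the weighted coverage $\mathcal{I}$ is monotone (adding subsets never decreases coverage), so whenever $n \ge K$ an optimal $Q$ may be taken to have $|Q| = K$ by padding with any unused index; the boundary case $n < K$ can be resolved in preprocessing (either declaring the WMCP instance trivially solved by $\mathcal{A} = \mathcal{S}$, or by duplicating dummy empty sets).

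The main obstacle I anticipate is not a hard estimate but precisely this equality-versus-inequality discrepancy: the BLCP definition stated in the paper assumes only submodularity of $\mathcal{I}$, not monotonicity, so one must explicitly observe that the particular $\mathcal{I}$ we construct happens to be monotone in order to argue that the relaxed knapsack constraint yields an optimum for the rigid cardinality-$K$ WMCP objective. Submodularity and polynomial-time computability of weighted coverage are textbook, so the remainder of the argument is routine bookkeeping once this monotonicity remark is in place.
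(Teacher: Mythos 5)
Your reduction is exactly the one the paper gives: identify each set $S_j$ with a unit-cost instance $x_j$, set $B=K$, and define $\mathcal{I}(Q)$ as the weighted coverage of the corresponding subfamily, then observe that weighted coverage is non-negative, submodular, and polynomial-time computable. You are in fact slightly more careful than the paper, which silently passes over the equality-versus-inequality issue ($|\mathcal{A}|=K$ versus $\sum c_i\le B$) that you correctly resolve via monotonicity of coverage.
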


\begin{proof} 

Since we assume the cost is the same, the BLCP problem now involves choosing $K = \lfloor B / c \rfloor$ instances. Now for the reduction, for a given instance of inputs to the WCMP: $\mathcal{U},\mathcal{S}$, we have to find a corresponding set of instances $D_{pool}$ and function $\mathcal{I}$ such that solving this BLCP means solving the WCMP instance. This can be done by:

\begin{enumerate}
    \item $D_{pool} = \{x_1,\dots,x_{|\mathcal{S}|}\}$
    \item For any subset $Q \subseteq D_{pool}$, $\mathcal{I}(Q) = W(\mathcal{A})$ where $A = \{S_i|x_i \in Q\}$
\end{enumerate}

In the first point we ensure that the number of instances in BLCP equals the number of sets in WCMP. In the second point we can set our information function $\mathcal{I}$ to be equal to the weight of the union of elements of corresponding sets in WCMP. Notice this does not break any of our assumptions about $\mathcal{I}$ as $W$ is both submodular and computed in polynomial time.

\end{proof}

This form of reduction has been referenced in other active learning literature \cite{AL_through_covering, BatchBALD_Paper}, although not as formally as defined above. However, our setting differs from traditional active learning in two key ways:

\begin{enumerate}
    \item \textbf{Survival data and probe depths:} Unlike typical settings, we deal with survival data and probe depths. Fortunately, as long as the information metric (e.g., BatchBALD) can be adjusted to account for these aspects—demonstrated through $BB_{\text{surv}}$—the rest of the reduction remains independent of these features.

    \item \textbf{Budgeted Learning:} Instead of standard active learning, we operate within the framework of Budgeted Learning, where a specific budget is enforced. This distinction is already incorporated in the reduction above, as it determines how many sets can be selected. In contrast, traditional batch active learning typically involves a predefined batch size for each query to the oracle. Moreover, the aim of active learning is to minimize the total number of oracle queries, rather than minimizing the loss (or maximizing the information gain) from a single query. This fundamental difference means that active learning cannot be directly reduced to the weighted maximum coverage problem in the way presented here.
\end{enumerate}

\subsubsection{Reducing the BCLP to a WCMP variant}

We can also reduce the BLCP to a problem very similar to the WCMP. However, this reduction is not polynomial in the size of the sets. 
\\\\
\textbf{We define the WCMP variant as:}
\\

Let $\mathcal{U}=\{e_1,\dots,e_m\}$ be a ground set where each element carries a non‑negative weight $w(e_j)\in\mathbb{R}_{\ge 0}$. 
Consider a family of subsets $\mathcal{S}=\{S_1,\dots,S_n\}\subseteq 2^{\mathcal{U}}$. Here we assume that $|U| \leq 2^{|\mathcal{S}|}$

The weighted maximal coverage variant seeks a set

\[
\mathcal{A}\subseteq\mathcal{S}
\]
where $|\mathcal{A}| = K$ for some $K \in \mathbb{Z}_{+}$,
that maximises the total weight of the elements it covers,
\[
\max_{\mathcal{A}}\;
\bigl\{\,W(\mathcal{A}) \;=\; \sum_{e\in\cup_{S\in\mathcal{A}}S} w(e)\bigr\}.
\]

Furthermore, in this variant we introduce an oracle that can inform about $W(\mathcal{A})$ in $poly(n)$ time.

The only differences between WCMP and this variant is that we allow the size of the set of elements to be exponential in n, and also we provide an oracle $O$ such that allowing this increase in size of elements does not come at additional cost to calcluating $W$. We can now reduce BCLP to this variant.

\begin{theorem}
\label{thm:bigtheorem}
BCLP, can be reduced in polynomial time to the WCMP variant assuming the cost for all instances is the same.
\end{theorem}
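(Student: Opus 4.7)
The plan is to construct, in time polynomial in $n = |D_{pool}|$, a WCMP-variant instance that is equivalent to the given BCLP instance. Assume the common cost is $c$ and set $K := \lfloor B/c \rfloor$, which under uniform cost converts the knapsack constraint $\sum_{x_i \in Q} c_i \le B$ into the cardinality constraint $|Q| \le K$; since $\mathcal{I}$ is monotone non-decreasing, one may further restrict to $|Q| = K$ without loss. I would then identify the pool bijectively with the set family, letting $\mathcal{S} = \{S_1,\dots,S_n\}$ with $S_i \leftrightarrow x_i$.

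Next I would specify the ground set and weights implicitly. For each non-empty $T \subseteq D_{pool}$ introduce an element $e_T$, so that $|\mathcal{U}| = 2^n - 1 \le 2^{|\mathcal{S}|}$ as required by the variant. Define $S_i := \{e_T : x_i \in T\}$, which gives $\bigcup_{S_i \in \mathcal{A}} S_i = \{e_T : T \cap A \ne \emptyset\}$ where $A = \{x_i : S_i \in \mathcal{A}\}$. The target identity $W(\mathcal{A}) = \mathcal{I}(A)$ then forces, via M\"obius inversion on $\Phi(B) := \mathcal{I}(V) - \mathcal{I}(V \setminus B)$, the weights $w(e_T) = \sum_{R \subseteq T}(-1)^{|T \setminus R|}\Phi(R)$. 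This formula is not written down during the reduction; it is only invoked to witness that a valid coverage representation of the values exists.

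To make the reduction itself run in $\mathrm{poly}(n)$ time, I would supply the oracle $O$ required by the variant rather than the (exponentially long) table of weights: on a query $\mathcal{A}$, $O$ translates back to $A \subseteq D_{pool}$ and returns $\mathcal{I}(A)$ via one call to the BCLP oracle, in $\mathrm{poly}(n)$ time. Thus the reduction outputs $\mathcal{S}$, $K$ and $O$ explicitly, while $\mathcal{U}$ and $w$ remain an implicit certificate --- this is exactly why the reduction is not polynomial in $|\mathcal{U}|$ but is polynomial in the number of sets, matching the statement of the theorem.

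The main obstacle I anticipate is verifying the implicit certificate: coverage functions form a strict subclass of monotone submodular functions, so the weights $w(e_T)$ recovered by M\"obius inversion need not be non-negative for an arbitrary submodular $\mathcal{I}$. I would address this by restricting attention to the coverage-representable submodular functions, which is where the $a_{BB_{\text{surv}}}$ acquisition naturally lies by the mutual-information decomposition already used in Appendix~A.3 to inherit the greedy $(1-1/e)$ bound. Correctness then follows immediately: by construction $W(\mathcal{A}) = \mathcal{I}(A)$, so maximising $W$ over $K$-subsets of $\mathcal{S}$ is the same optimisation as maximising $\mathcal{I}$ over $K$-subsets of $D_{pool}$, which under uniform cost is precisely BCLP.
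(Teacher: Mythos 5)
Your construction is essentially the paper's: set $K=\lfloor B/c\rfloor$, introduce one element $e_T$ per subset $T$ of $D_{pool}$, attach to instance $i$ the set of all $T$ containing $i$, and choose weights by inclusion--exclusion applied to $G(U)=\mathcal{I}(D_{pool})-\mathcal{I}(D_{pool}\setminus U)$ so that the covered weight of a batch equals $\mathcal{I}$ of the corresponding instances. Two points where your execution differs are worth recording. First, your weight formula $w(e_T)=\sum_{R\subseteq T}(-1)^{|T\setminus R|}G(R)$ is the correct M\"obius inversion of the requirement $\sum_{\emptyset\neq T\subseteq U}w(e_T)=G(U)$; the paper's formula $w(E)=\sum_{k\ge1}(-1)^{k+1}\sum_{S\subseteq E,\,|S|=k}G(S)$ agrees with yours only when $|E|$ is odd and has the opposite sign when $|E|$ is even. (Check $|E|=2$ with $\mathcal{I}(\{1\})=\mathcal{I}(\{2\})=1$, $\mathcal{I}(\{1,2\})=1.5$: the paper's weights give $W(\{S_1\})=0\neq 1=\mathcal{I}(\{1\})$, while yours give $1$.) So your version is the one that actually satisfies the claimed identity $W(\mathcal{A})=\mathcal{I}(Q)$. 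Second, your explicit handling of the oracle --- output $\mathcal{S}$, $K$, and a $W$-oracle that answers each query with one call to the $\mathcal{I}$-oracle, leaving $\mathcal{U}$ and $w$ as an implicit certificate --- is exactly what the variant's definition (the $|U|\le 2^{|\mathcal{S}|}$ allowance plus a poly-time $W$-oracle) is built to permit; the paper leaves this step implicit.

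The one soft spot is the issue you yourself flag: M\"obius inversion of an arbitrary monotone submodular $\mathcal{I}$ can yield negative $w(e_T)$ (e.g.\ $\mathcal{I}(S)=\min(|S|,2)$ on three instances gives $w(e_{\{1,2,3\}})=-1$), while the variant requires $w\ge 0$; coverage functions are a strict subclass of monotone submodular functions. Your patch --- restricting to coverage-representable $\mathcal{I}$ and asserting that the mutual-information acquisition lies in that class --- both narrows the theorem as stated (BCLP assumes only non-negative submodularity) and rests on an unproved claim: interaction information can be negative, which is precisely when the Venn-diagram picture of Appendix~A.3 breaks down. The paper's own proof silently carries the same gap, so you have not introduced a new defect, but neither have you closed it; the clean fixes are to drop the non-negativity requirement from the variant's definition or to add coverage-representability of $\mathcal{I}$ as an explicit hypothesis.
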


\begin{proof} 

To establish this proof we need to do the following steps.

\begin{enumerate}
    \item Define the number of sets we can select, which in our case is given by $\lfloor B / c \rfloor$ in the uniform setting.
    \item We must define the elements we are working with for the WCMP variant.
    \item We must use the defined elements to form sets in the WCMP variant, and these sets should map to choosing instances in the BLCP.
\end{enumerate}

For step 2, for our pool of instances $D_{pool}$ we can define $2^{|D_{pool}|}$ elements in the WCMP variant by taking all possible combination of indices from $D_{pool}$ to be an element. For example $\{i,j\}$ is one element in this maximum coverage setting as long as $i,j \in D_{pool}$. Define the  function $G({S}) = I(D_{pool}) - I(D_{pool} \setminus S)$. where $I(S)$ represents the information metric associated with the subset $S$. We can now inductively give weight to each element (subset of instances) as: \begin{equation}
w(E) \quad = \quad \sum_{k=1}^{|E|} (-1)^{k+1} \sum_{\substack{S \subseteq E \\ |S| = k}} G(S).
\end{equation}

For step 3, we can now define the matching set for instance $i$ as: \begin{equation}
\mathcal{P}(D_{pool}, i) \quad = \quad \{ S \subseteq D_{pool} \mid i \in S \},
\end{equation}

which is the set of all subsets containing instance $i$.

Notice that we have now assigned a weight to every possible combination of instances in $D_{pool}$ (every element). The weight of an element $e$ corresponds to the information contained in the intersection of instances in BLCP that define that element $\mathcal{I}(e)$. Thus for any set $\mathcal{A}$ composed of some $\mathcal{P}$ sets in WCMP, we can consider the set of instances corresponding to $\mathcal{P}$ as $Q$. We can see then that $W(A) = \mathcal{I}(Q)$. Therefore, any solution to this WCMP variant would select $Q \subseteq D_{pool}$ in BLCP that maximizes the $\mathcal{I}(Q)$.

\end{proof}

\subsection{Changing the Budgeted Maximum Coverage Algorithm}
\label{Appen: alg2}

In \cite{Budgeted_MC}, a novel greedy algorithm for the budgeted maximum cover case is provided that still provides the same lower bound guarantees as the original greedy algorithm. We illustrate this algorithm in Algorithm 2.

\begin{algorithm}
\caption{Optimal (1 - 1/e) -approximate algorithm for Budgeted Maximal Coverage}
\label{Alg: inefficient alg}
\begin{algorithmic}[1]
\Require Pool of points \( S \), budget \( B \), weights \( w_i \), costs \( c_i \), subset size \( z \)
\State \( H_1 \gets \arg\max \{ w(G) : G \subseteq S , |G| < z, c(G) \leq B \} \)
\State \( H_2 \gets \emptyset \)

\ForAll{$G \subseteq S$ such that $|G| = z$ and $c(G) \leq B$}
    \State \( U \gets S \setminus G \)
    \Repeat
        \State Select \( x_i \in U \) that maximizes \( \frac{w'_i}{c_i} \)
        \If{$c(G) + c_i \leq B$}
            \State \( G \gets G \cup x_i \)
            \State \( U \gets U \setminus x_i \)
        \EndIf
    \Until{$U = \emptyset$ or $c(G) + c_i > B$}
    \If{$w(G) > w(H_2)$}
        \State \( H_2 \gets G \)
    \EndIf
\EndFor
\If{$w(H_1) > w(H_2)$}
    \State \textbf{Output:} \( H_1 \)
\Else
    \State \textbf{Output:} \( H_2 \)
\EndIf
\end{algorithmic}
\end{algorithm}

In our own notation, the pool of points $S = D_{pool}$, and the weights $w_i$ is estimated using our acquisition function $a_{BB_{surv}}$. Here, $z$ is meant to be a parameter chosen by the user where a higher $z$ yields better performance however at a higher computational cost. We can take $z=3$ which is the lowest $z$ that guarantees the optimal $1-1/e$ bound \cite{Budgeted_MC}. This new greedy algorithm is very computationally expensive. The first part of the algorithm relies on finding the set out of all sets of size less than $z=3$ size (via brute force) that maximizes the weight, and assigning it to $H1$ which has a complexity of $O(n^2)$ if we denote $n = |D_{pool}|$. The rest of the algorithm involves for every possible initial set of size $z=3$ instances, greedily adding instances to this selected batch based off of the ratio of the weight to cost of an instance, which is of order $O(n^3)$. \citet{kuller_with_submodular_functions} generalize the work of \citet{Budgeted_MC} to work with submodular functions. In their work they argue a reduction of Algorithm~\ref{Alg: inefficient alg} to one where there is no need to account for all subsets of size $z$ in the algorithm. The proposed algorithm maintains a good approximation and is represented in our work as Algorithm 1.

\subsection{Choosing the Same Instance Multiple Times in One Query} % A.2
\label{multiple probes}

Since in our setting the Oracle does not give full information. It opens up the question as to if you can choose the same instance multiple times in one query. For example, if the Oracle gives 5 years of new information for Alice, you might want to ask the Oracle to give you 10 years but for double the cost of course. This is not often seen in the real world, perhaps if you are working with a software that has all the instances but only gives incremental updates. Or maybe when dealing with ``label delays", covered in \cite{Incremental_AL}.

Fortunately, our formulation allows for generalization over this as well. Instead of considering how many times we should ask for the ``Alice" instance, we can see it such that asking for two increments of Alice, is a separate instance with its own cost as asking for one increment of Alice. Furthermore, the entire information that can be gained from for the one increment instance is contained in asking for the 2 increment instance. The question now becomes is the extra information worth the cost? Since the costs between these two instances are different, we again have a case of the budgeted maximum coverage problem, for which we show in the paper that there already exists an optimal approximation greedy algorithm for this setting.

A final note is that there is never an incentive to ask for both the Alice with 2 increments instance and the Alice with 1 increment instance, but the optimal greedy algorithm does not assume this and may take this action. Thus we can actually create an even better algorithm than greedy in this specific case by only allowing it to ever choose the maximum number of increments instance each time.

\section{Additional Data}
\label{sec:Additional Data}
%some more tables and figures about different settings and metrics.

We show here some additional results for the two settings.

\begin{figure}[H]
\label{selectme}
\begin{center}
\includegraphics[width=0.8\linewidth]{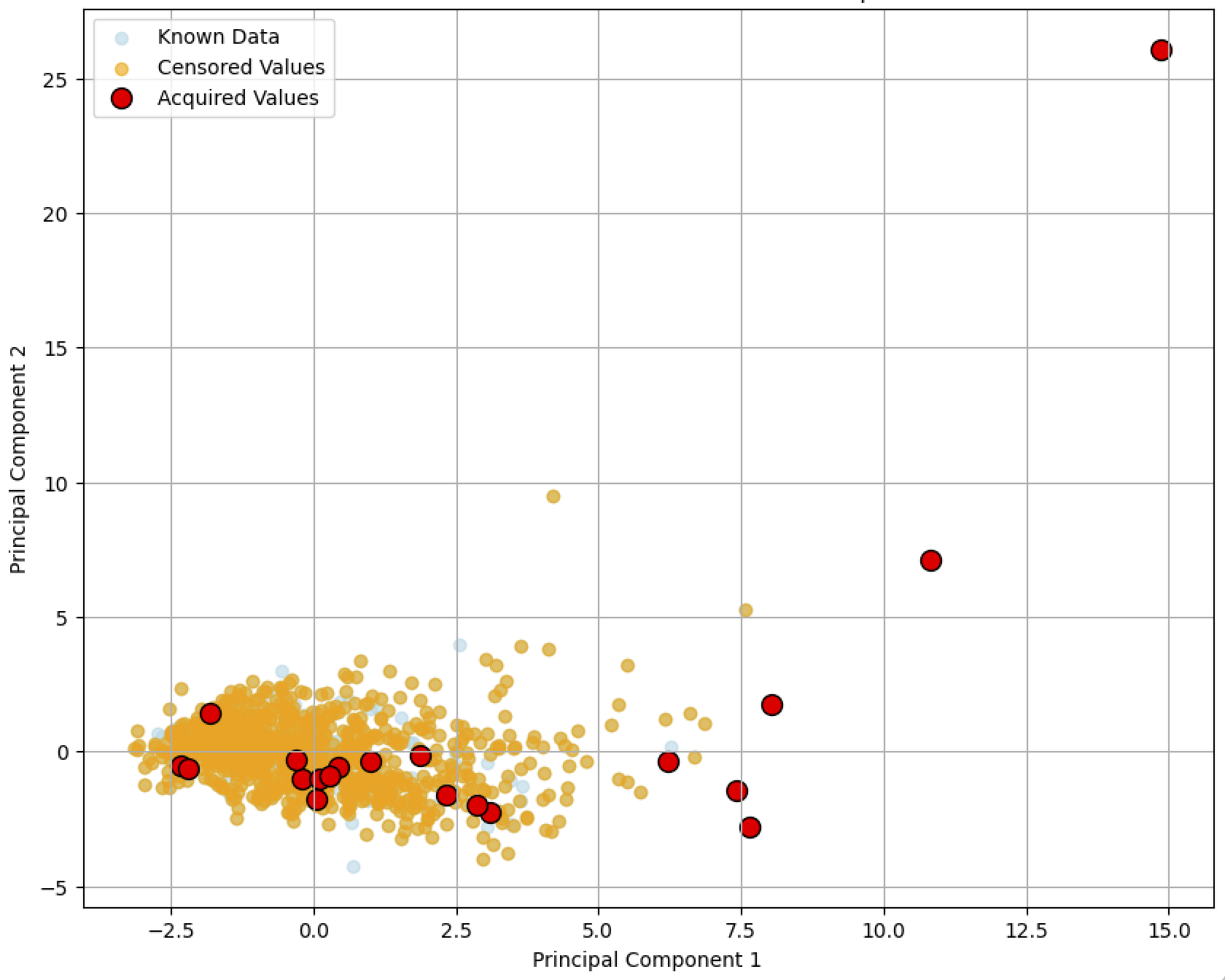}
\end{center}
\caption{PCA visualization of $BB_{surv}$ choosing 20 instances from NACD with probe depth = 10, and budget = 20. Uniform setting.}
\end{figure}

\begin{figure}[H]
\label{selectnotme}
\begin{center}
\includegraphics[width=0.8\linewidth]{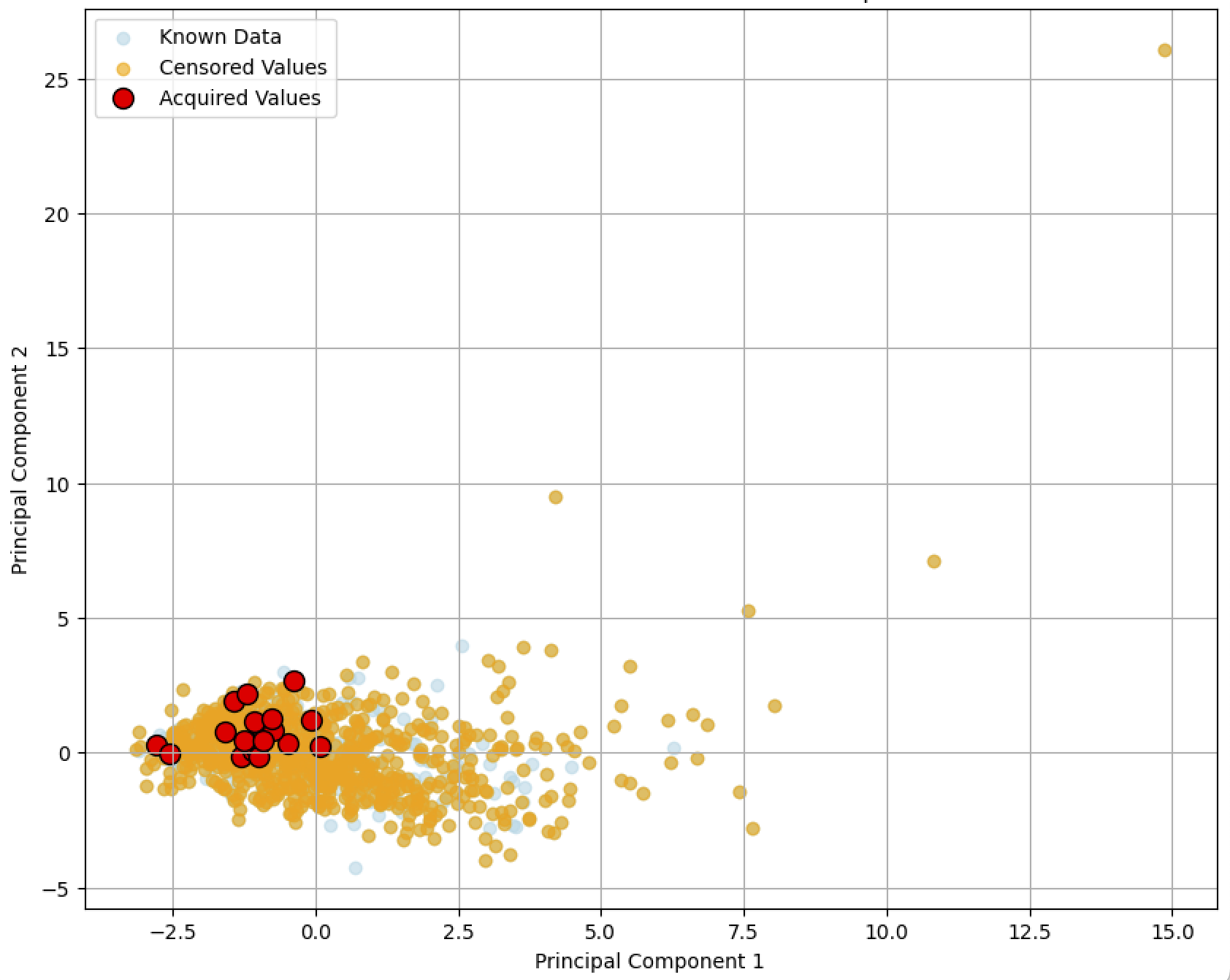}
\end{center}
\caption{PCA Visualization of mean closest to half choosing 20 instances from NACD with probe depth = 10, and budget = 20. Uniform setting.}
\end{figure}

%%%%%%%%%%%%%%%%%%%%%%%%%%%%%%%%%%%%%%%%%%%%%%%%%%%%%%%%%%%%
%%%%%%%%%%%%%%%%%%  figures for uniform and non-uniform
%%%%%%%%%%%%%%%%%%
%%%%%%%%%%%%%%%%%%%%%%%%%%%%%%%%%%%%%%%%%%%%%%%%%%%%%%%%%%%%

% MIMIC Uniform Setting Figures
\begin{figure}[t]
\includegraphics[width=\linewidth]{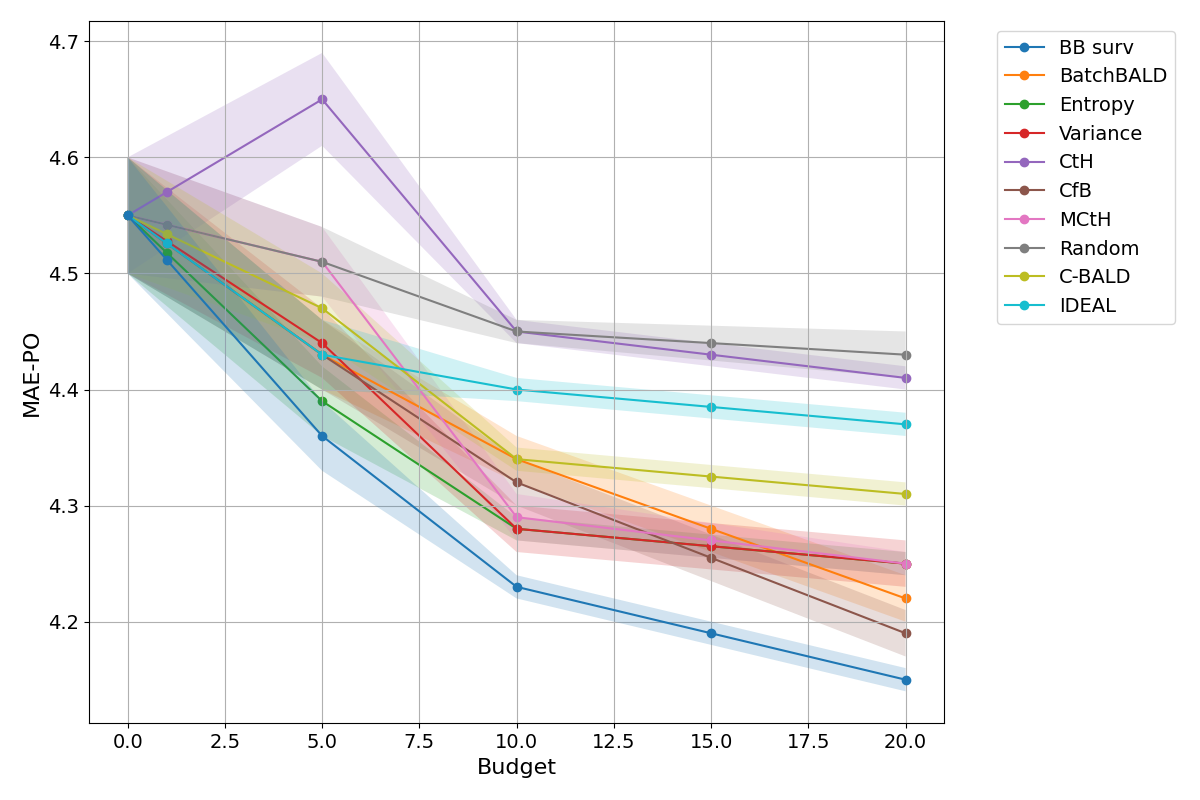}
\caption{Plot of MAE-PO evaluation across budgets of different acquisition functions with error bars as average of 40 predictions by the model. The plot uses the MIMIC dataset with 300 uncensored and 2700 censored points in the uniform setting with a probe depth of 5 years.}
\label{fig:mae_po_uniform_M_5}
\end{figure}

\begin{figure}[t]
\includegraphics[width=\linewidth]{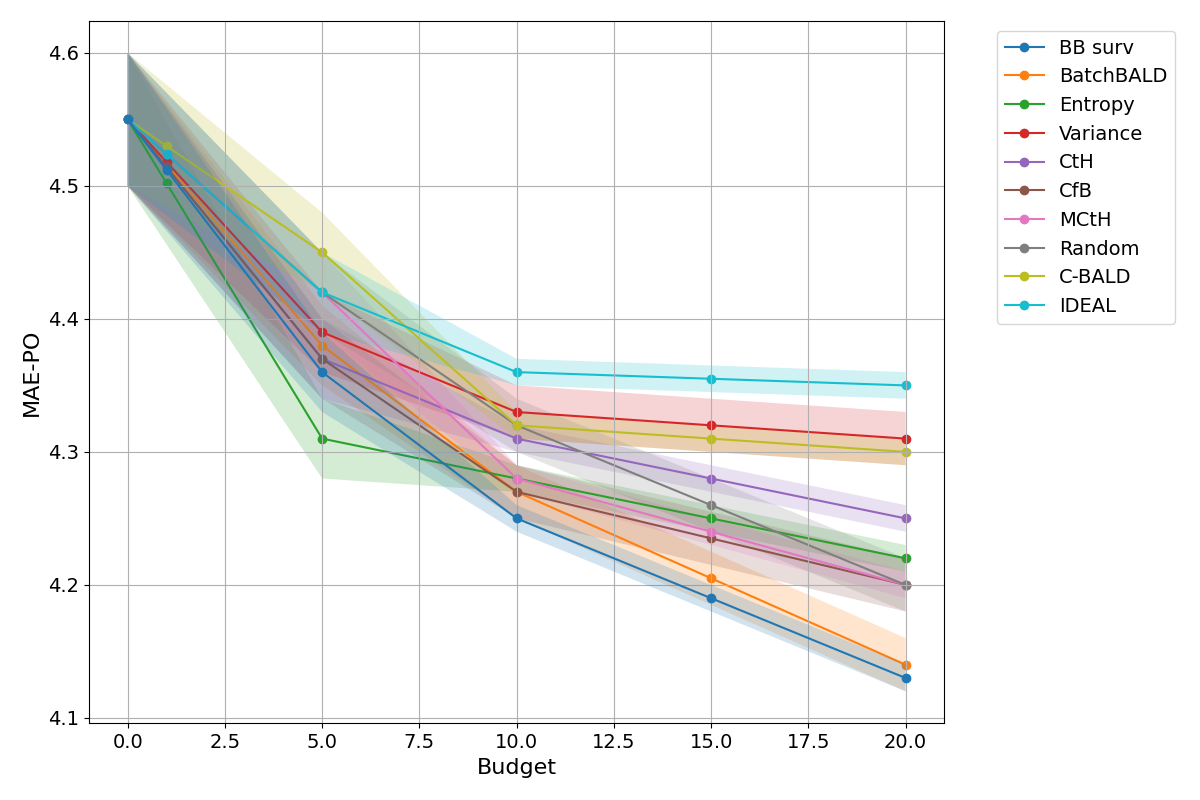}
\caption{Plot of MAE-PO evaluation across budgets of different acquisition functions with error bars as average of 40 predictions by the model. The plot uses the MIMIC dataset with 300 uncensored and 2700 censored points in the uniform setting with a probe depth of 10 years.}
\label{fig:mae_po_uniform_M_10}
\end{figure}

\begin{figure}[t]
\includegraphics[width=\linewidth]{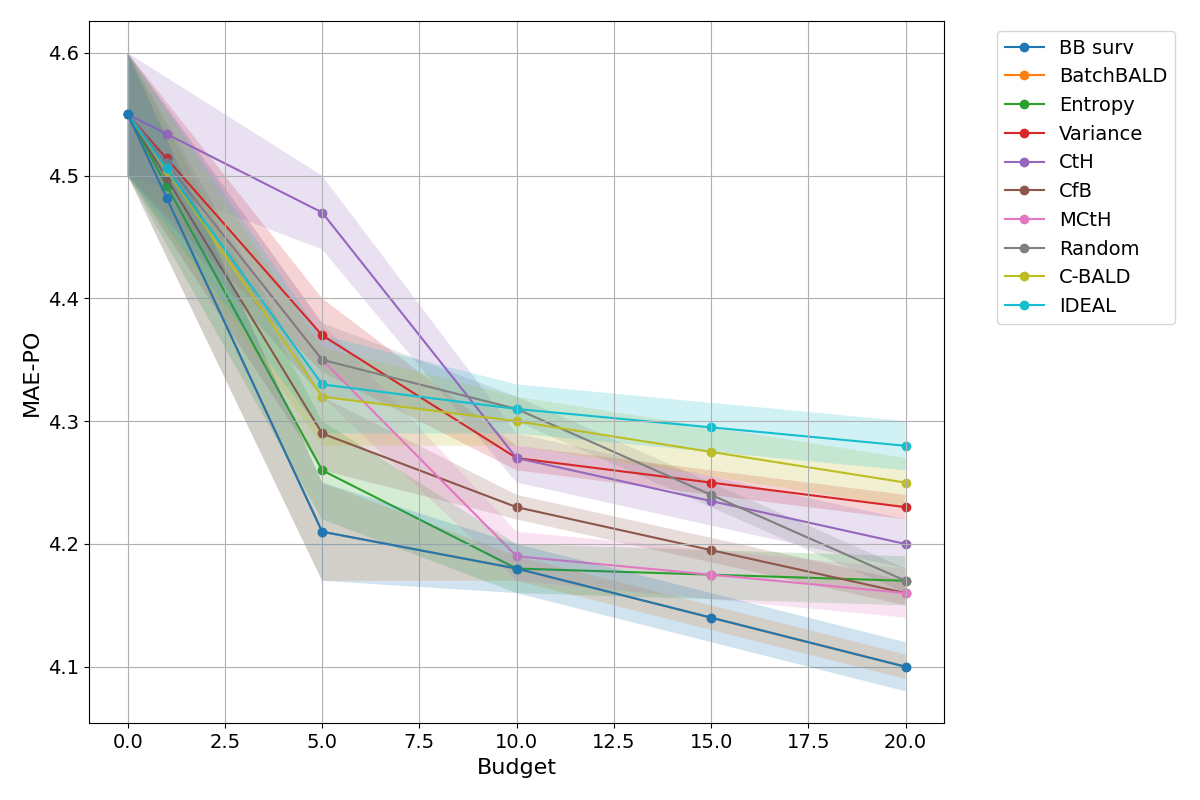}
\caption{Plot of MAE-PO evaluation across budgets of different acquisition functions with error bars as average of 40 predictions by the model. The plot uses the MIMIC dataset with 300 uncensored and 2700 censored points in the uniform setting with a probe depth of 100 years.}
\label{fig:mae_po_uniform_M_100}
\end{figure}

% MIMIC Non-Uniform Setting Figures
\begin{figure}[t]
\includegraphics[width=\linewidth]{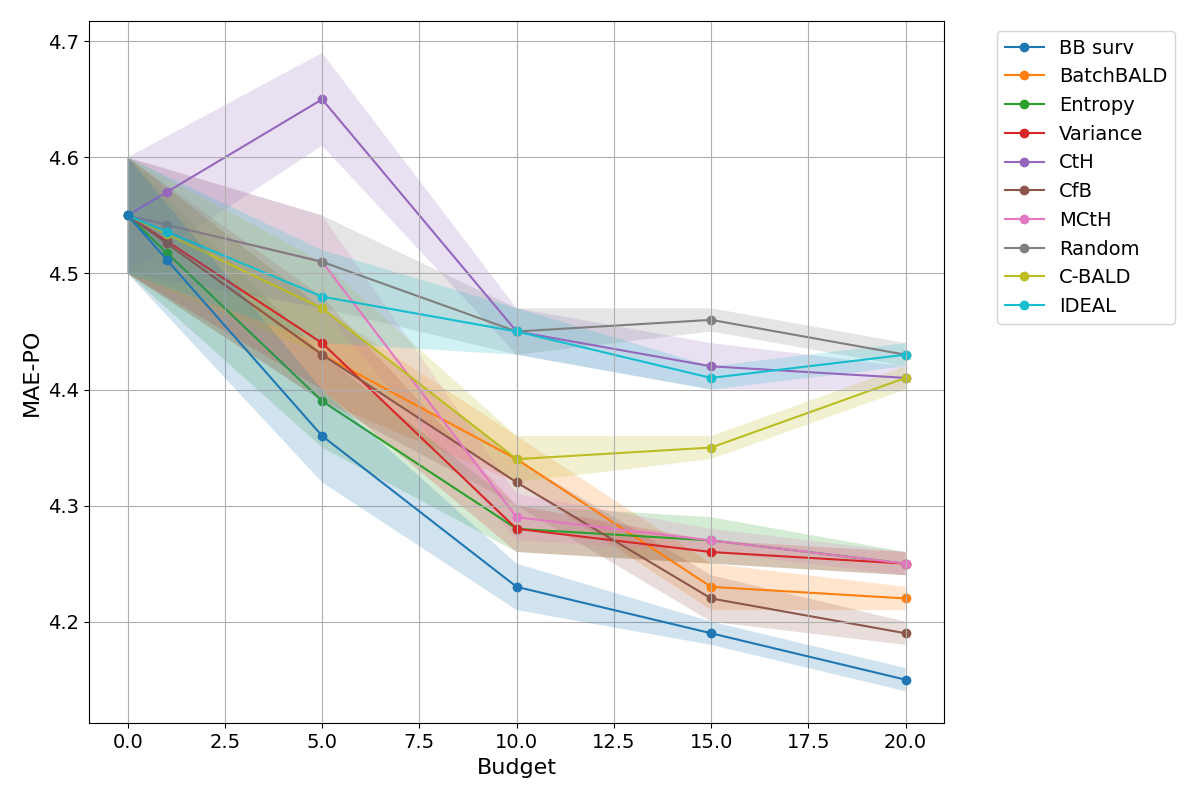}
\caption{Plot of MAE-PO evaluation across budgets of different acquisition functions with error bars as average of 40 predictions by the model. The plot uses the MIMIC dataset with 300 uncensored and 2700 censored points in the non-uniform setting with a probe depth of 5 years.}
\label{fig:mae_po_nonuniform_M_5}
\end{figure}

\begin{figure}[t]
\includegraphics[width=\linewidth]{figures/New_Results/mae_vs_budget_nonuniform_M_10_years.png}
\caption{Plot of MAE-PO evaluation across budgets of different acquisition functions with error bars as average of 40 predictions by the model. The plot uses the MIMIC dataset with 300 uncensored and 2700 censored points in the non-uniform setting with a probe depth of 10 years.}
\label{fig:mae_po_nonuniform_M_10}
\end{figure}

\begin{figure}[t]
\includegraphics[width=\linewidth]{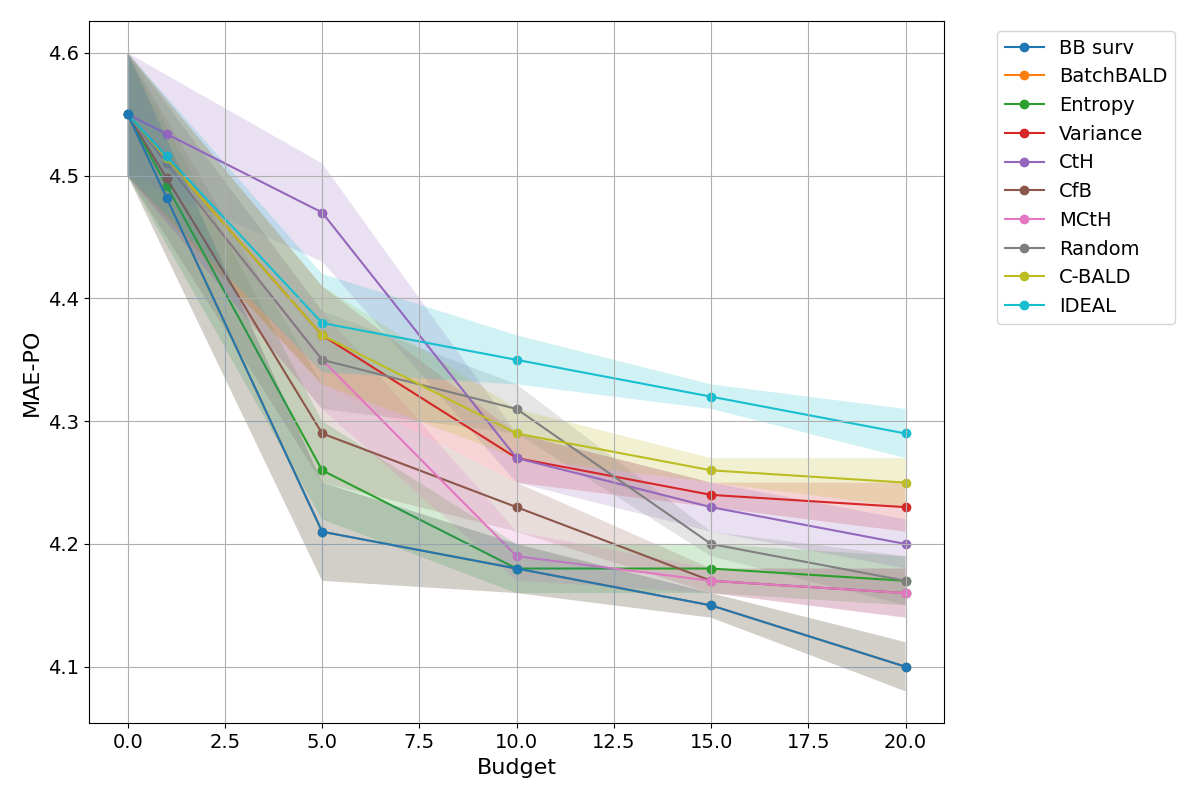}
\caption{Plot of MAE-PO evaluation across budgets of different acquisition functions with error bars as average of 40 predictions by the model. The plot uses the MIMIC dataset with 300 uncensored and 2700 censored points in the non-uniform setting with a probe depth of 100 years.}
\label{fig:mae_po_nonuniform_M_100}
\end{figure}

% NACD Uniform Setting Figures
\begin{figure}[t]
\includegraphics[width=\linewidth]{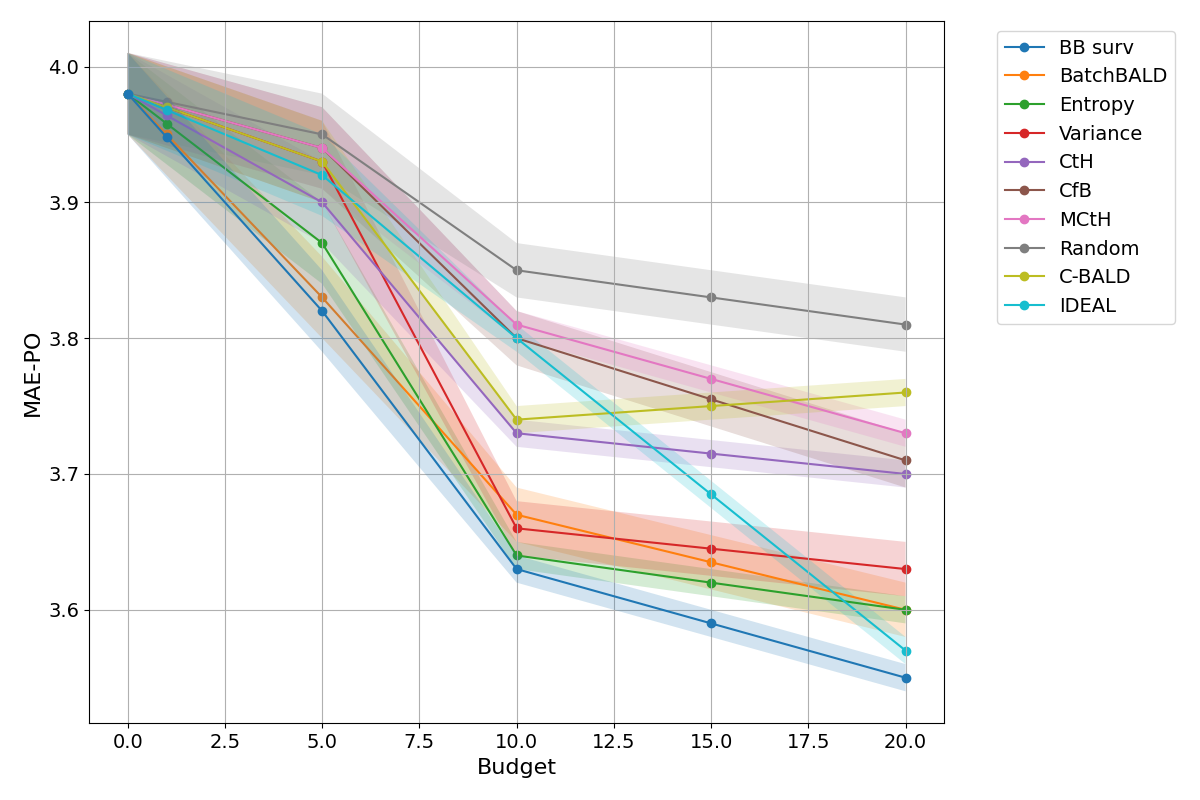}
\caption{Plot of MAE-PO evaluation across budgets of different acquisition functions with error bars as average of 40 predictions by the model. The plot uses the NACD dataset with 100 uncensored and 900 censored points in the uniform setting with a probe depth of 5 years.}
\label{fig:mae_po_uniform_N_5}
\end{figure}

\begin{figure}[t]
\includegraphics[width=\linewidth]{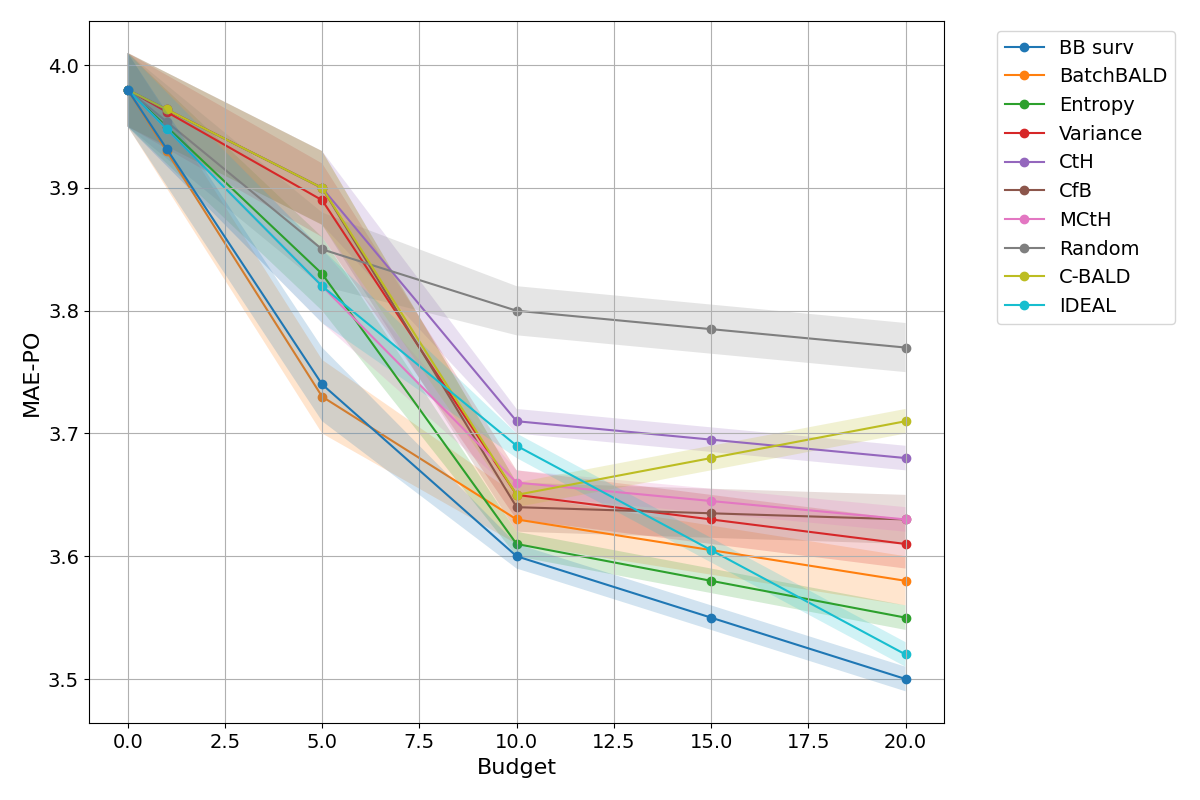}
\caption{Plot of MAE-PO evaluation across budgets of different acquisition functions with error bars as average of 40 predictions by the model. The plot uses the NACD dataset with 100 uncensored and 900 censored points in the uniform setting with a probe depth of 10 years.}
\label{fig:mae_po_uniform_N_10}
\end{figure}

\begin{figure}[t]
\includegraphics[width=\linewidth]{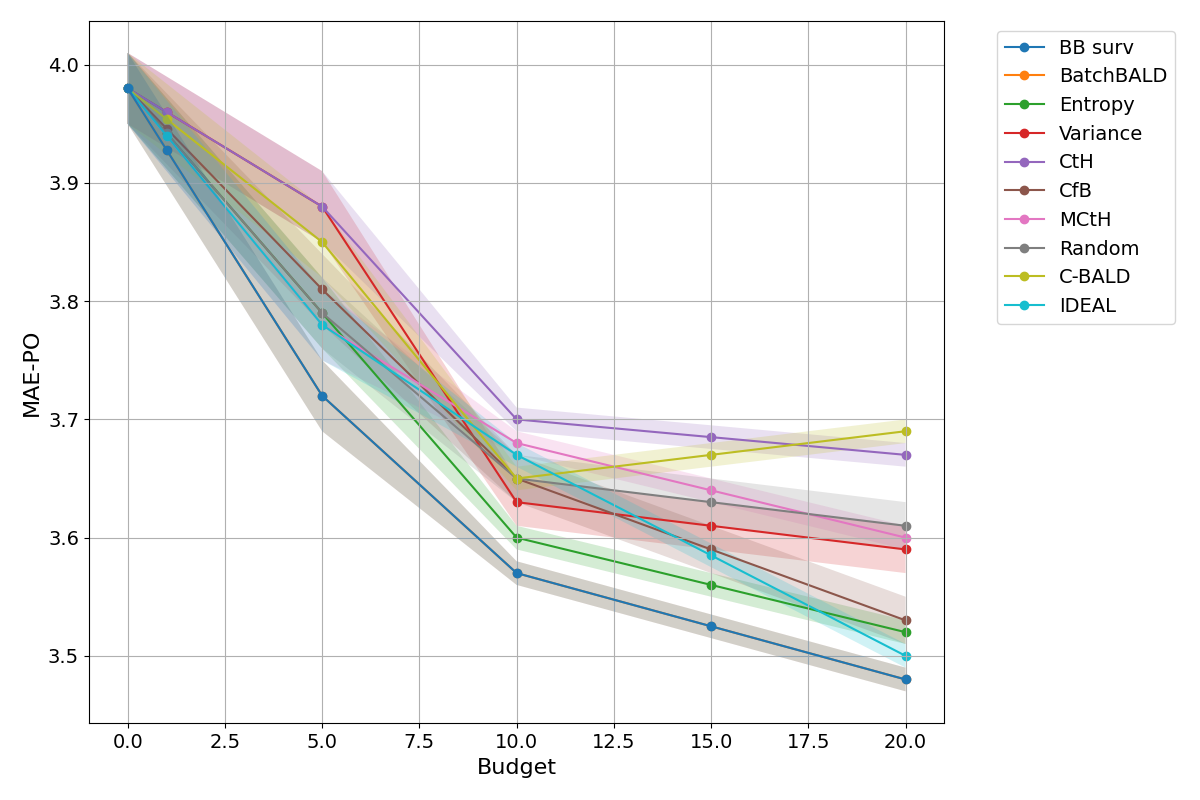}
\caption{Plot of MAE-PO evaluation across budgets of different acquisition functions with error bars as average of 40 predictions by the model. The plot uses the NACD dataset with 100 uncensored and 900 censored points in the uniform setting with a probe depth of 100 years.}
\label{fig:mae_po_uniform_N_100}
\end{figure}

% NACD Non-Uniform Setting Figures
\begin{figure}[t]
\includegraphics[width=\linewidth]{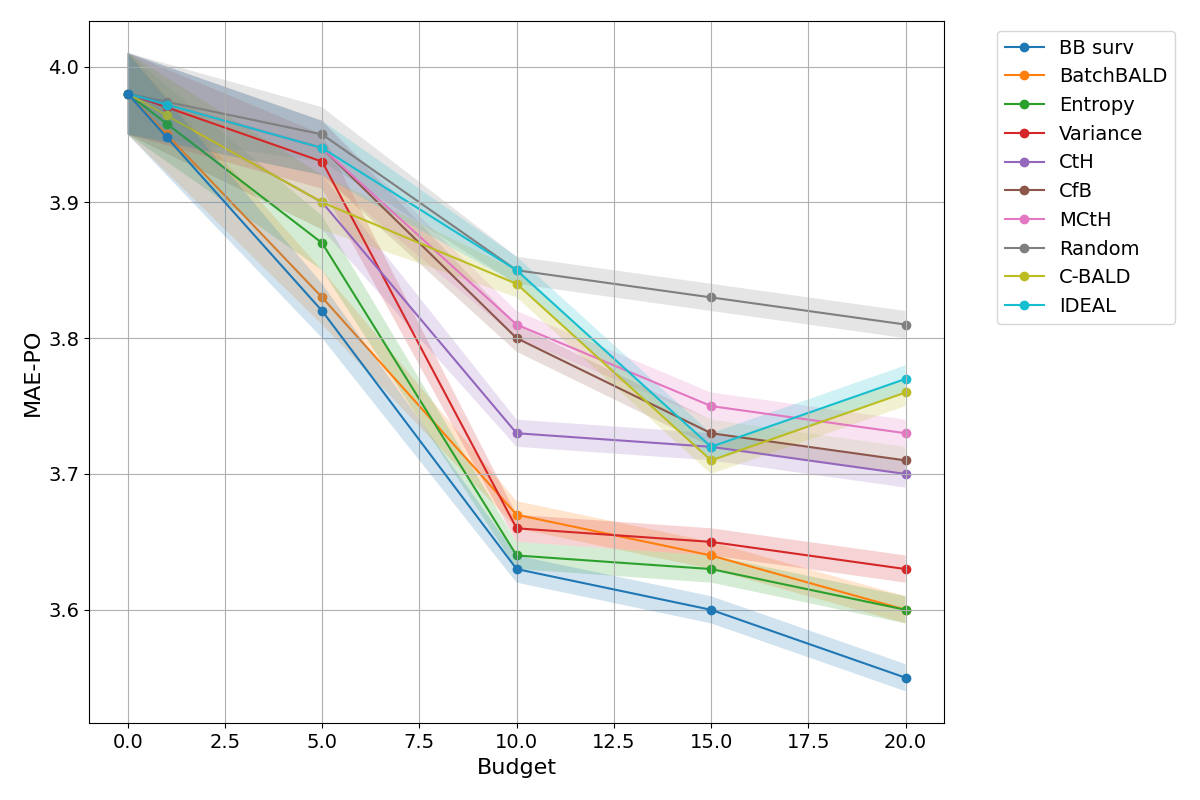}
\caption{Plot of MAE-PO evaluation across budgets of different acquisition functions with error bars as average of 40 predictions by the model. The plot uses the NACD dataset with 100 uncensored and 900 censored points in the non-uniform setting with a probe depth of 5 years.}
\label{fig:mae_po_nonuniform_N_5}
\end{figure}

\begin{figure}[t]
\includegraphics[width=\linewidth]{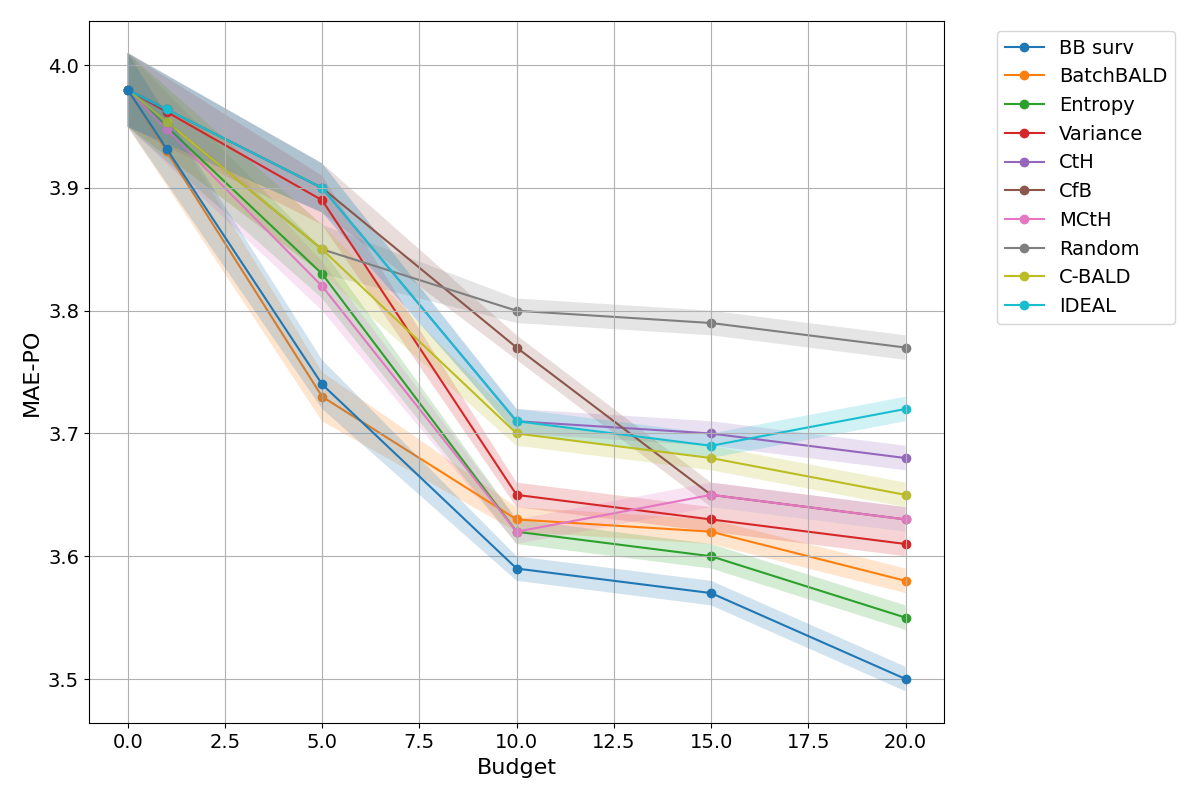}
\caption{Plot of MAE-PO evaluation across budgets of different acquisition functions with error bars as average of 40 predictions by the model. The plot uses the NACD dataset with 100 uncensored and 900 censored points in the non-uniform setting with a probe depth of 10 years.}
\label{fig:mae_po_nonuniform_N_10}
\end{figure}

\begin{figure}[t]
\includegraphics[width=\linewidth]{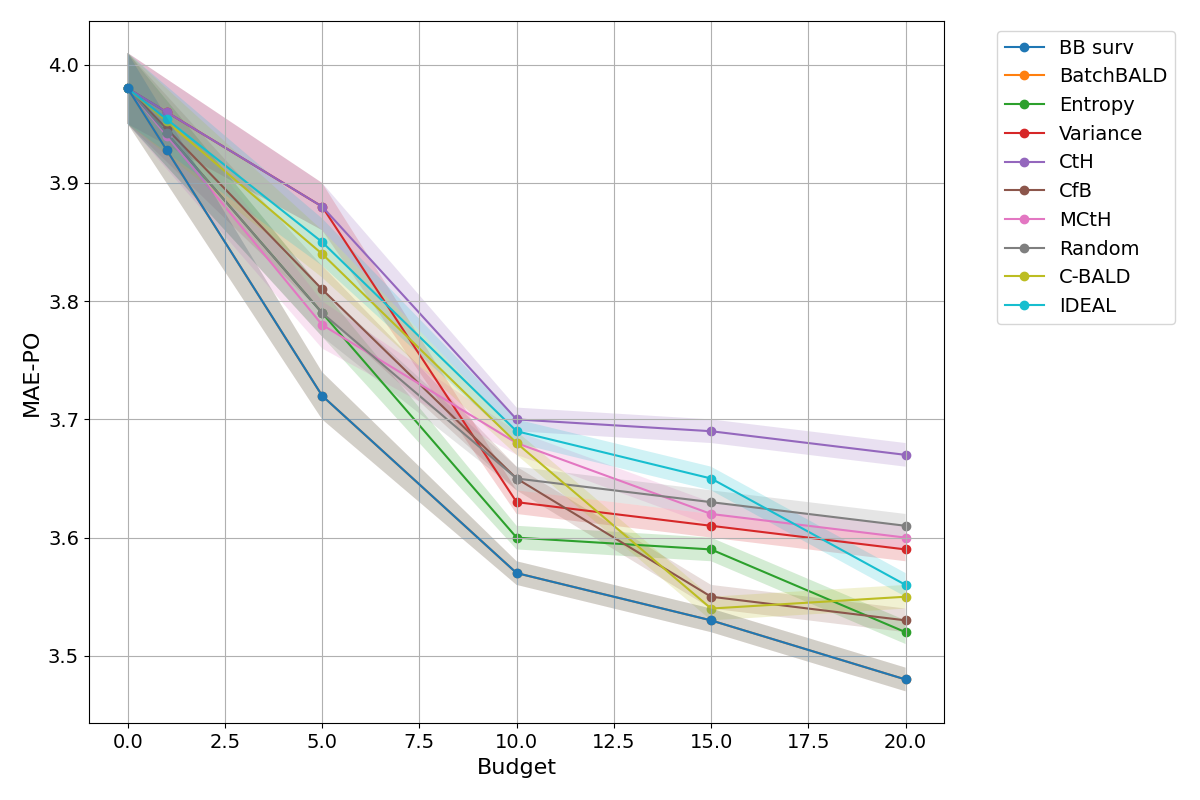}
\caption{Plot of MAE-PO evaluation across budgets of different acquisition functions with error bars as average of 40 predictions by the model. The plot uses the NACD dataset with 100 uncensored and 900 censored points in the non-uniform setting with a probe depth of 100 years.}
\label{fig:mae_po_nonuniform_N_100}
\end{figure}

% SUPPORT Uniform Setting Figures
\begin{figure}[t]
\includegraphics[width=\linewidth]{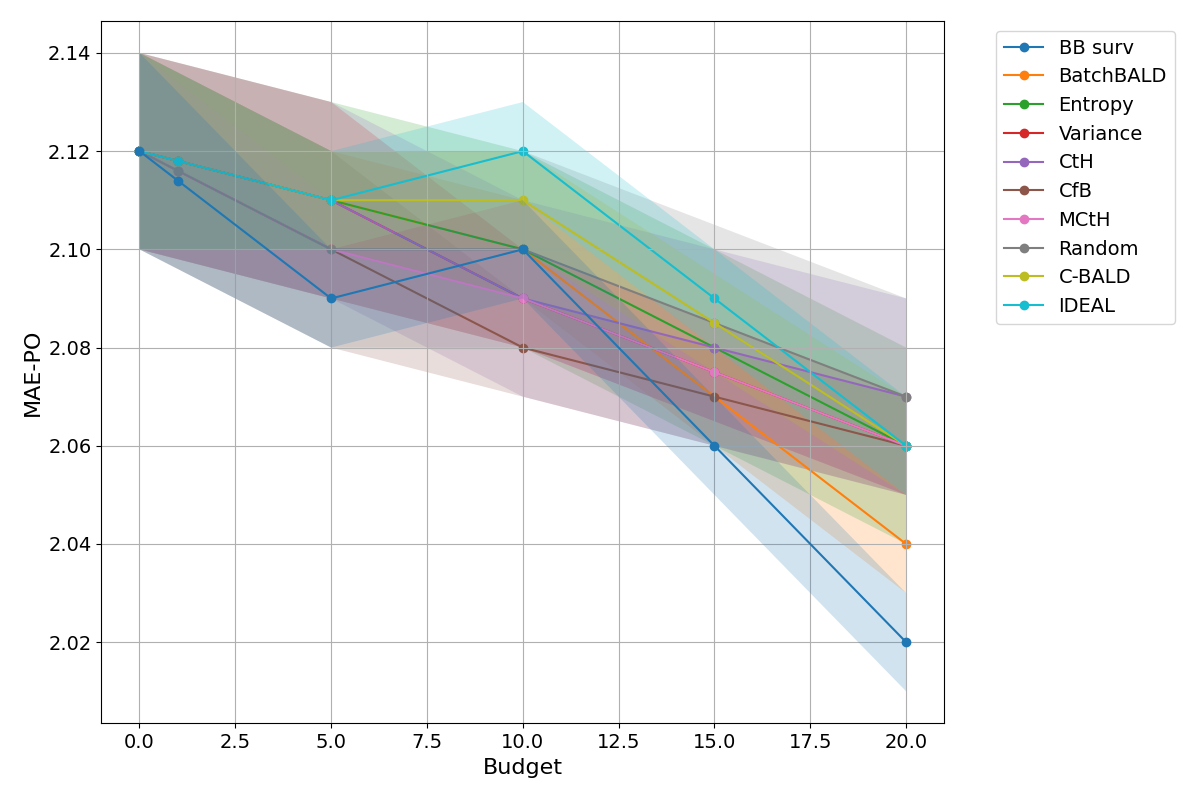}
\caption{Plot of MAE-PO evaluation across budgets of different acquisition functions with error bars as average of 40 predictions by the model. The plot uses the SUPPORT dataset with 100 uncensored and 900 censored points in the uniform setting with a probe depth of 5 years.}
\label{fig:mae_po_uniform_S_5}
\end{figure}

\begin{figure}[t]
\includegraphics[width=\linewidth]{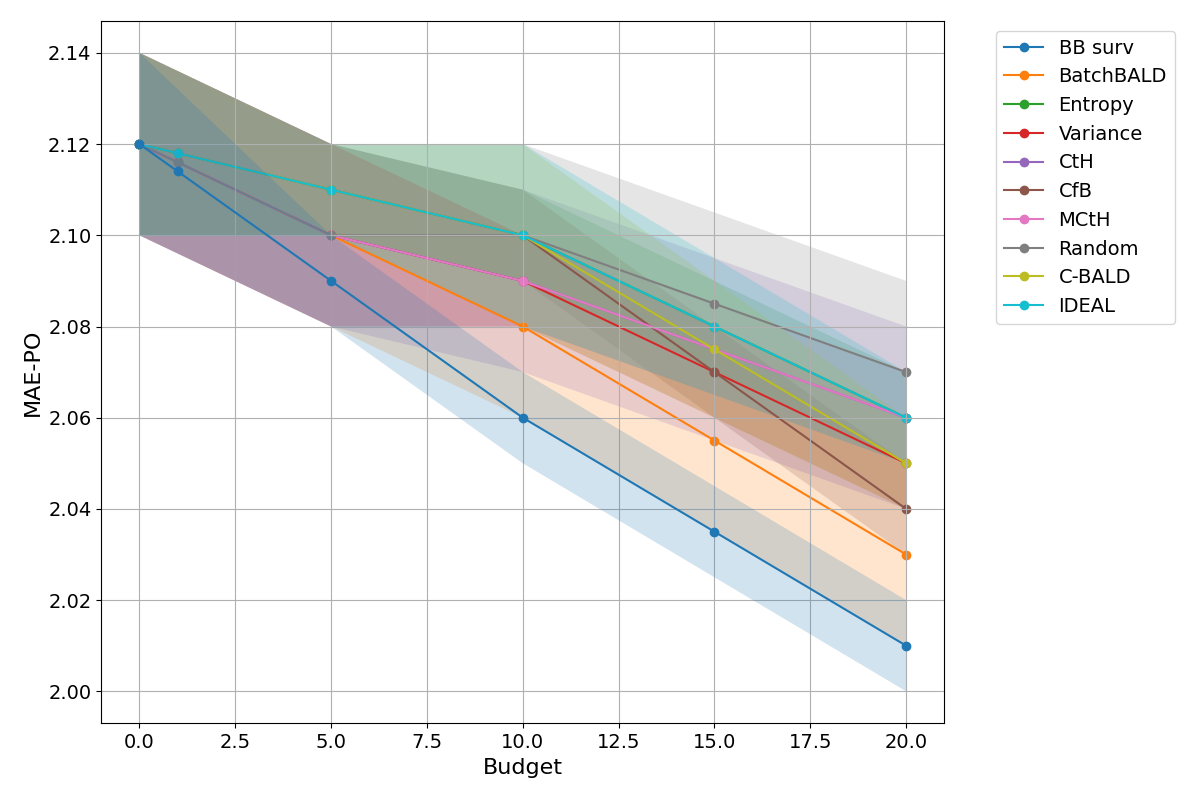}
\caption{Plot of MAE-PO evaluation across budgets of different acquisition functions with error bars as average of 40 predictions by the model. The plot uses the SUPPORT dataset with 100 uncensored and 900 censored points in the uniform setting with a probe depth of 10 years.}
\label{fig:mae_po_uniform_S_10}
\end{figure}

\begin{figure}[t]
\includegraphics[width=\linewidth]{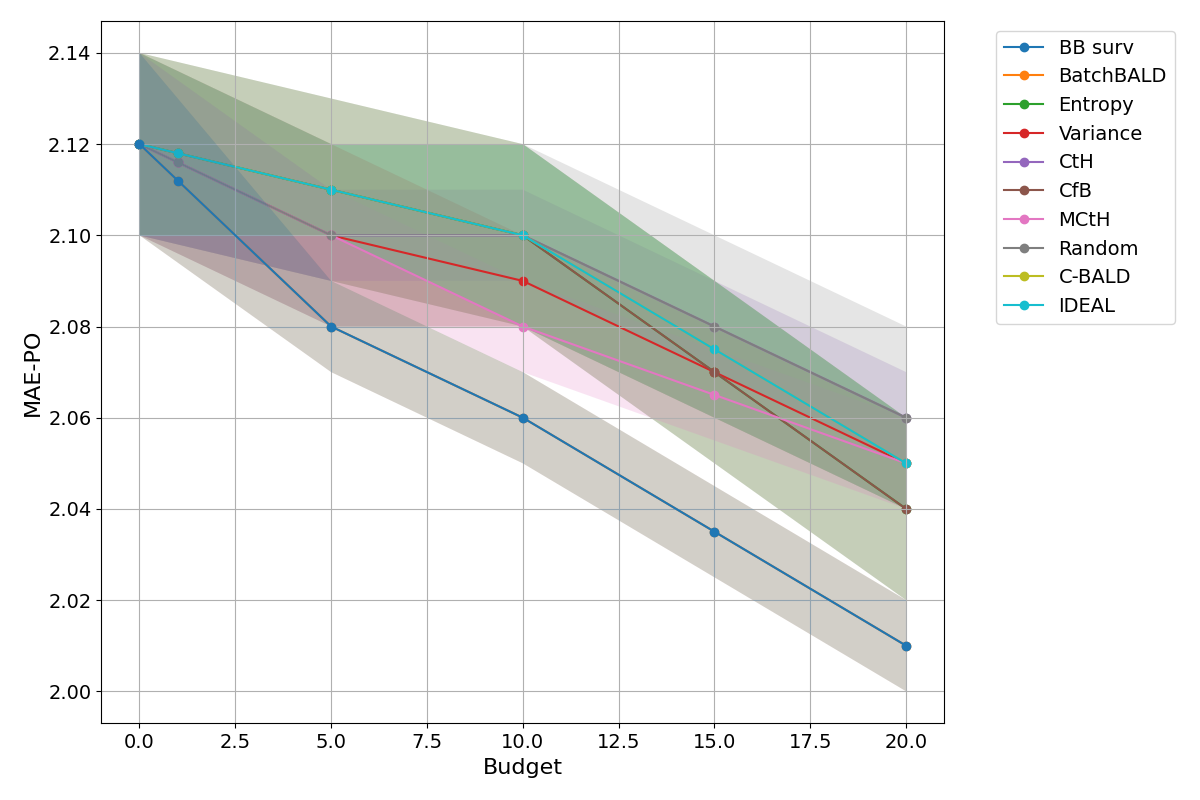}
\caption{Plot of MAE-PO evaluation across budgets of different acquisition functions with error bars as average of 40 predictions by the model. The plot uses the SUPPORT dataset with 100 uncensored and 900 censored points in the uniform setting with a probe depth of 100 years.}
\label{fig:mae_po_uniform_S_100}
\end{figure}

% SUPPORT Non-Uniform Setting Figures
\begin{figure}[t]
\includegraphics[width=\linewidth]{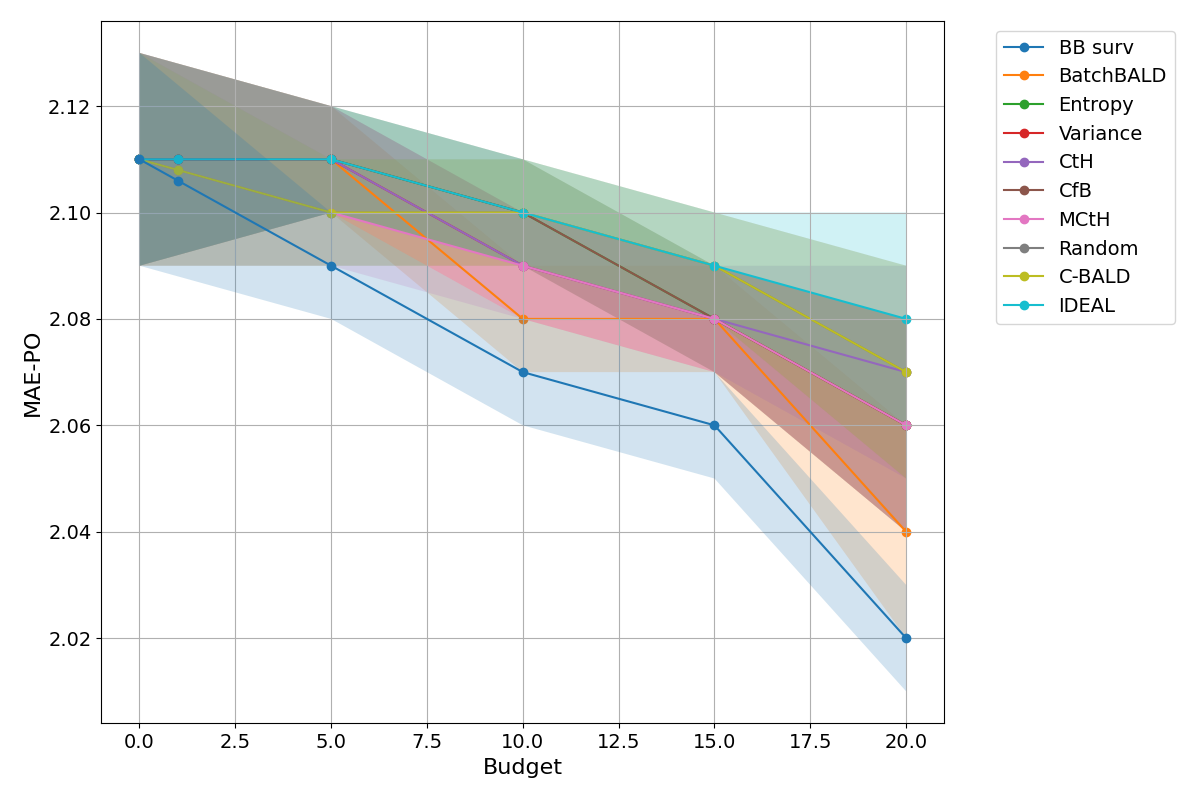}
\caption{Plot of MAE-PO evaluation across budgets of different acquisition functions with error bars as average of 40 predictions by the model. The plot uses the SUPPORT dataset with 100 uncensored and 900 censored points in the non-uniform setting with a probe depth of 5 years.}
\label{fig:mae_po_nonuniform_S_5}
\end{figure}

\begin{figure}[t]
\includegraphics[width=\linewidth]{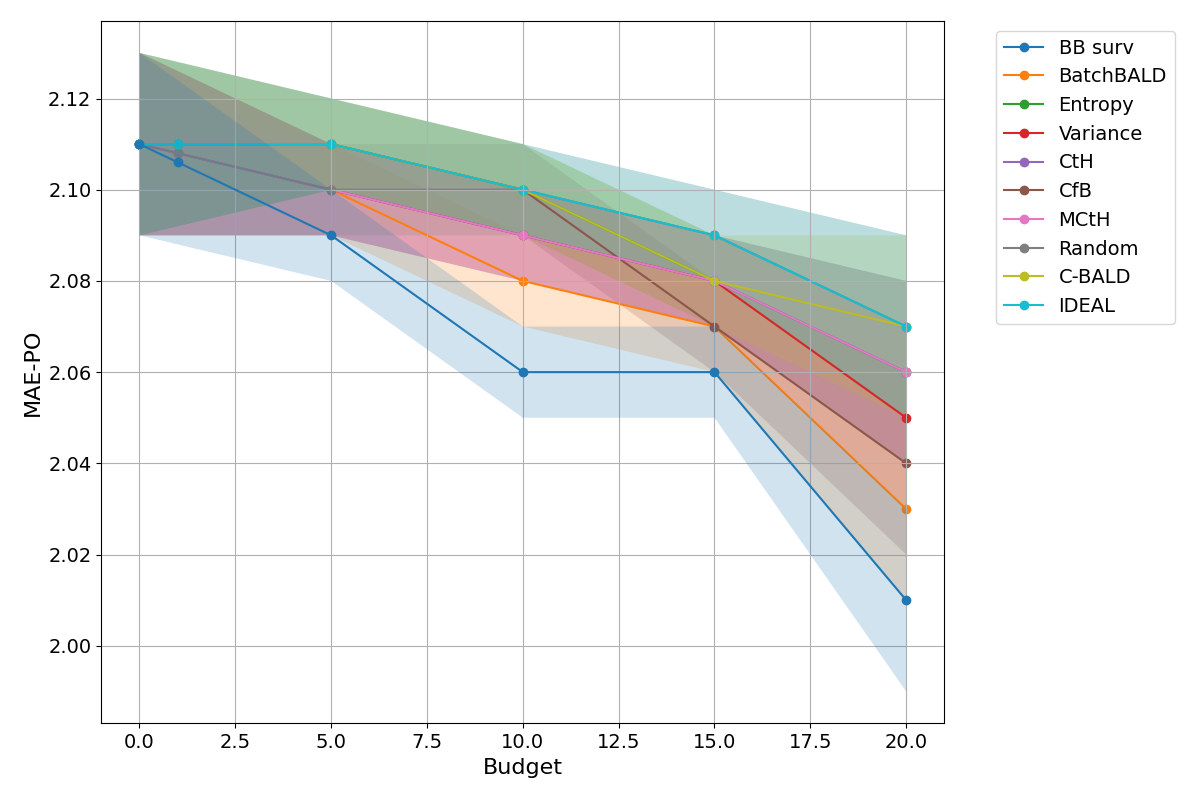}
\caption{Plot of MAE-PO evaluation across budgets of different acquisition functions with error bars as average of 40 predictions by the model. The plot uses the SUPPORT dataset with 100 uncensored and 900 censored points in the non-uniform setting with a probe depth of 10 years.}
\label{fig:mae_po_nonuniform_S_10}
\end{figure}

\begin{figure}[t]
\includegraphics[width=\linewidth]{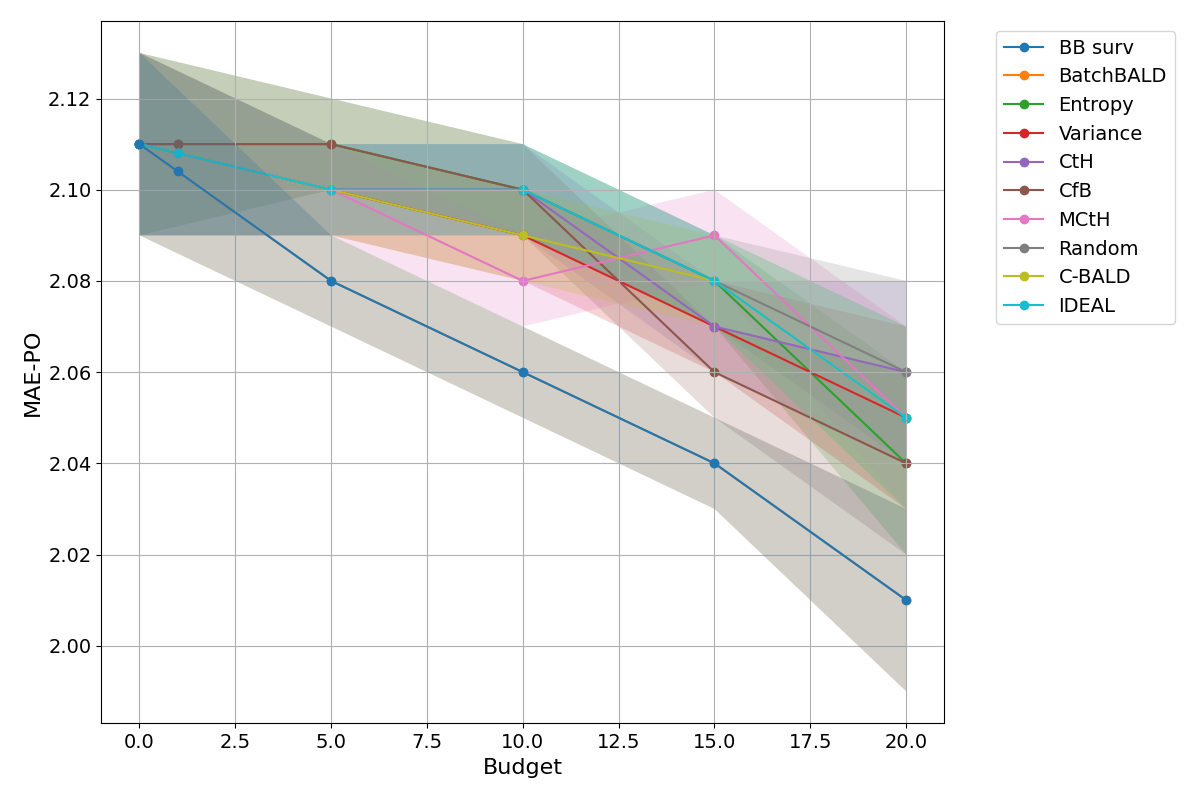}
\caption{Plot of MAE-PO evaluation across budgets of different acquisition functions with error bars as average of 40 predictions by the model. The plot uses the SUPPORT dataset with 100 uncensored and 900 censored points in the non-uniform setting with a probe depth of 100 years.}
\label{fig:mae_po_nonuniform_S_100}
\end{figure}

%%%%%%%%%%%%%%%%%%%%%%%%%%%%%%%%%%%%%%%%%%%%%%%%%%%%%%%%%%%%
%%%%%%%%%%%%%%%%%%  C-Index plots %%%%%%%%%%%%%%%%%%
%%%%%%%%%%%%%%%%%%%%%%%%%%%%%%%%%%%%%%%%%%%%%%%%%%%%%%%%%%%%

% plots/cindex_nacd_uniform_10y_budget20.png
\begin{figure}[t]
\includegraphics[width=\linewidth]{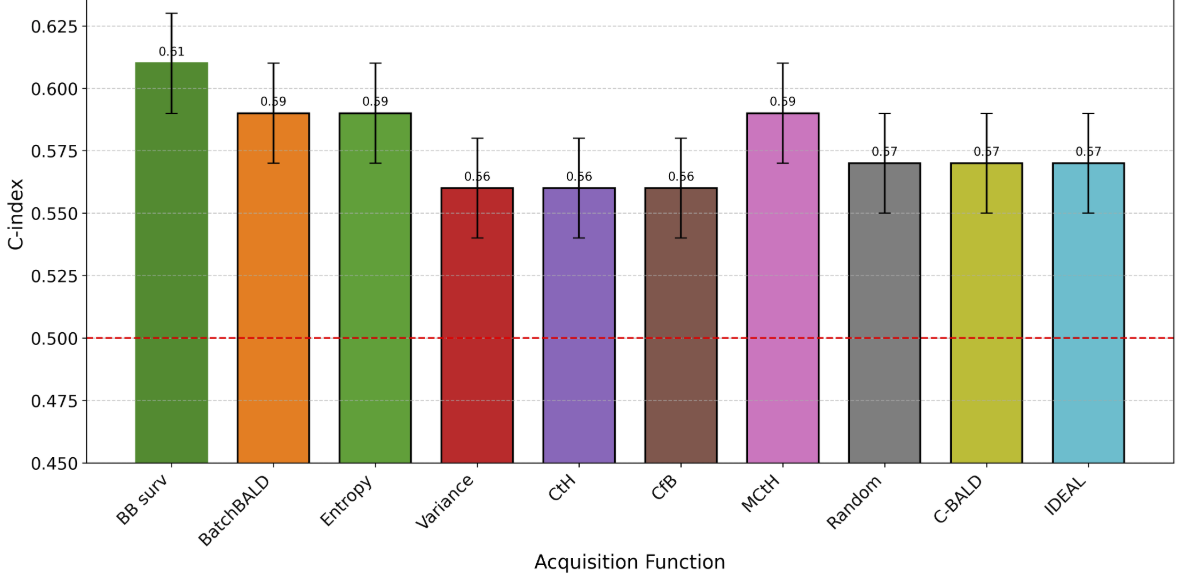}
\caption{Bar Plot of C-Index evaluation of different acquisition functions with error bars as average of 40 predictions by the model. The plot uses the NACD dataset with 100 uncensored and 900 censored points in the uniform setting with a probe depth of 5 years and a Budget equal to 20.}
\label{fig:mae_po_uniform_S_5}
\end{figure}

%%%%%%%%%%%%%%%%%%%%%%%%%%%%%%%%%%%%%%%%%%%%%%%%%%%%%%%%%%%%
%%%%%%%%%%%%%%%%%%  UNIFORM – budget = 0  %%%%%%%%%%%%%%%%%%
%%%%%%%%%%%%%%%%%%%%%%%%%%%%%%%%%%%%%%%%%%%%%%%%%%%%%%%%%%%%

% =========================================================
% UNIFORM – budget = 0
% =========================================================
\begin{table}[t]
\centering
\caption{MAE-PO across datasets and probe depths, budget = 0. Uniform setting.}
\label{tab:uniform0}
\resizebox{\textwidth}{!}{%
\begin{tabular}{@{}lcccccccccc@{}}
\toprule
\textbf{Dataset} & \textbf{BB surv} & \textbf{BatchBALD} & \textbf{Entropy} & \textbf{Var} & \textbf{CtH} & \textbf{CfB} & \textbf{MCtM} & \textbf{Random} & \textbf{C-BALD} & \textbf{IDEAL}\\
\midrule
M +5 years   & 4.55 $\pm$ 0.05 & 4.55 $\pm$ 0.05 & 4.55 $\pm$ 0.05 & 4.55 $\pm$ 0.05 & 4.55 $\pm$ 0.05 & 4.55 $\pm$ 0.05 & 4.55 $\pm$ 0.05 & 4.55 $\pm$ 0.05 & 4.55 $\pm$ 0.05 & 4.55 $\pm$ 0.05\\
M +10 years  & 4.55 $\pm$ 0.05 & 4.55 $\pm$ 0.05 & 4.55 $\pm$ 0.05 & 4.55 $\pm$ 0.05 & 4.55 $\pm$ 0.05 & 4.55 $\pm$ 0.05 & 4.55 $\pm$ 0.05 & 4.55 $\pm$ 0.05 & 4.55 $\pm$ 0.05 & 4.55 $\pm$ 0.05\\
M +100 years & 4.55 $\pm$ 0.05 & 4.55 $\pm$ 0.05 & 4.55 $\pm$ 0.05 & 4.55 $\pm$ 0.05 & 4.55 $\pm$ 0.05 & 4.55 $\pm$ 0.05 & 4.55 $\pm$ 0.05 & 4.55 $\pm$ 0.05 & 4.55 $\pm$ 0.05 & 4.55 $\pm$ 0.05\\
N +5 years   & 3.98 $\pm$ 0.03 & 3.98 $\pm$ 0.03 & 3.98 $\pm$ 0.03 & 3.98 $\pm$ 0.03 & 3.98 $\pm$ 0.03 & 3.98 $\pm$ 0.03 & 3.98 $\pm$ 0.03 & 3.98 $\pm$ 0.03 & 3.98 $\pm$ 0.03 & 3.98 $\pm$ 0.03\\
N +10 years  & 3.98 $\pm$ 0.03 & 3.98 $\pm$ 0.03 & 3.98 $\pm$ 0.03 & 3.98 $\pm$ 0.03 & 3.98 $\pm$ 0.03 & 3.98 $\pm$ 0.03 & 3.98 $\pm$ 0.03 & 3.98 $\pm$ 0.03 & 3.98 $\pm$ 0.03 & 3.98 $\pm$ 0.03\\
N +100 years & 3.98 $\pm$ 0.03 & 3.98 $\pm$ 0.03 & 3.98 $\pm$ 0.03 & 3.98 $\pm$ 0.03 & 3.98 $\pm$ 0.03 & 3.98 $\pm$ 0.03 & 3.98 $\pm$ 0.03 & 3.98 $\pm$ 0.03 & 3.98 $\pm$ 0.03 & 3.98 $\pm$ 0.03\\
S +5 years   & 2.12 $\pm$ 0.02 & 2.12 $\pm$ 0.02 & 2.12 $\pm$ 0.02 & 2.12 $\pm$ 0.02 & 2.12 $\pm$ 0.02 & 2.12 $\pm$ 0.02 & 2.12 $\pm$ 0.02 & 2.12 $\pm$ 0.02 & 2.12 $\pm$ 0.02 & 2.12 $\pm$ 0.02\\
S +10 years  & 2.12 $\pm$ 0.02 & 2.12 $\pm$ 0.02 & 2.12 $\pm$ 0.02 & 2.12 $\pm$ 0.02 & 2.12 $\pm$ 0.02 & 2.12 $\pm$ 0.02 & 2.12 $\pm$ 0.02 & 2.12 $\pm$ 0.02 & 2.12 $\pm$ 0.02 & 2.12 $\pm$ 0.02\\
S +100 years & 2.12 $\pm$ 0.02 & 2.12 $\pm$ 0.02 & 2.12 $\pm$ 0.02 & 2.12 $\pm$ 0.02 & 2.12 $\pm$ 0.02 & 2.12 $\pm$ 0.02 & 2.12 $\pm$ 0.02 & 2.12 $\pm$ 0.02 & 2.12 $\pm$ 0.02 & 2.12 $\pm$ 0.02\\
\bottomrule
\end{tabular}}
\end{table}

% =========================================================
% UNIFORM – budget = 1
% =========================================================
\begin{table}[t]
\centering
\caption{MAE-PO across datasets and probe depths, budget = 1. Uniform setting.}
\label{tab:uniform1}
\resizebox{\textwidth}{!}{%
\begin{tabular}{@{}lcccccccccc@{}}
\toprule
\textbf{Dataset} & \textbf{BB surv} & \textbf{BatchBALD} & \textbf{Entropy} & \textbf{Variance} & \textbf{CtH} & \textbf{CfB} & \textbf{MCtM} & \textbf{Random} & \textbf{C-BALD} & \textbf{IDEAL}\\
\midrule
M +5 years   & 4.53 $\pm$ 0.03 & 4.53 $\pm$ 0.03 & 4.53 $\pm$ 0.03 & 4.55 $\pm$ 0.03 & 4.54 $\pm$ 0.04 & 4.54 $\pm$ 0.03 & 4.54 $\pm$ 0.03 & 4.54 $\pm$ 0.03 & 4.54 $\pm$ 0.03 & 4.54 $\pm$ 0.03\\
M +10 years  & 4.53 $\pm$ 0.03 & 4.53 $\pm$ 0.03 & 4.53 $\pm$ 0.03 & 4.53 $\pm$ 0.03 & 4.53 $\pm$ 0.03 & 4.52 $\pm$ 0.03 & 4.53 $\pm$ 0.03 & 4.54 $\pm$ 0.03 & 4.54 $\pm$ 0.03 & 4.54 $\pm$ 0.03\\
M +100 years & 4.52 $\pm$ 0.04 & 4.52 $\pm$ 0.04 & 4.52 $\pm$ 0.04 & 4.53 $\pm$ 0.03 & 4.52 $\pm$ 0.03 & 4.52 $\pm$ 0.03 & 4.52 $\pm$ 0.03 & 4.52 $\pm$ 0.03 & 4.53 $\pm$ 0.03 & 4.53 $\pm$ 0.03\\
N +5 years   & 3.94 $\pm$ 0.03 & 3.94 $\pm$ 0.03 & 3.97 $\pm$ 0.03 & 3.96 $\pm$ 0.03 & 3.97 $\pm$ 0.03 & 3.96 $\pm$ 0.03 & 3.96 $\pm$ 0.03 & 3.96 $\pm$ 0.03 & 3.95 $\pm$ 0.03 & 3.96 $\pm$ 0.03\\
N +10 years  & 3.94 $\pm$ 0.03 & 3.94 $\pm$ 0.03 & 3.97 $\pm$ 0.03 & 3.96 $\pm$ 0.03 & 3.95 $\pm$ 0.03 & 3.94 $\pm$ 0.03 & 3.94 $\pm$ 0.03 & 3.94 $\pm$ 0.03 & 3.95 $\pm$ 0.03 & 3.96 $\pm$ 0.03\\
N +100 years & 3.93 $\pm$ 0.03 & 3.93 $\pm$ 0.03 & 3.94 $\pm$ 0.03 & 3.94 $\pm$ 0.03 & 3.95 $\pm$ 0.03 & 3.93 $\pm$ 0.03 & 3.94 $\pm$ 0.03 & 3.94 $\pm$ 0.03 & 3.94 $\pm$ 0.03 & 3.95 $\pm$ 0.03\\
S +5 years   & 2.12 $\pm$ 0.01 & 2.13 $\pm$ 0.01 & 2.12 $\pm$ 0.02 & 2.13 $\pm$ 0.02 & 2.12 $\pm$ 0.02 & 2.12 $\pm$ 0.02 & 2.12 $\pm$ 0.01 & 2.12 $\pm$ 0.02 & 2.13 $\pm$ 0.02 & 2.12 $\pm$ 0.02\\
S +10 years  & 2.12 $\pm$ 0.01 & 2.12 $\pm$ 0.02 & 2.11 $\pm$ 0.01 & 2.12 $\pm$ 0.02 & 2.12 $\pm$ 0.02 & 2.12 $\pm$ 0.01 & 2.12 $\pm$ 0.02 & 2.12 $\pm$ 0.02 & 2.13 $\pm$ 0.02 & 2.12 $\pm$ 0.02\\
S +100 years & 2.12 $\pm$ 0.01 & 2.12 $\pm$ 0.02 & 2.11 $\pm$ 0.02 & 2.12 $\pm$ 0.02 & 2.12 $\pm$ 0.01 & 2.11 $\pm$ 0.02 & 2.12 $\pm$ 0.01 & 2.11 $\pm$ 0.02 & 2.13 $\pm$ 0.02 & 2.12 $\pm$ 0.02\\
\bottomrule
\end{tabular}}
\end{table}

% =========================================================
% UNIFORM – budget = 5
% =========================================================
\begin{table}[t]
\centering
\caption{MAE-PO across datasets and probe depths, budget = 5. Uniform setting.}
\label{tab:uniform5}
\resizebox{\textwidth}{!}{%
\begin{tabular}{@{}lcccccccccc@{}}
\toprule
\textbf{Dataset} & \textbf{BB surv} & \textbf{BatchBALD} & \textbf{Entropy} & \textbf{Variance} & \textbf{CtH} & \textbf{CfB} & \textbf{MCtM} & \textbf{Random} & \textbf{C-BALD} & \textbf{IDEAL}\\
\midrule
M +5 years   & \textbf{4.36 $\pm$ 0.03} & 4.43 $\pm$ 0.03 & 4.39 $\pm$ 0.03 & 4.44 $\pm$ 0.03 & 4.65 $\pm$ 0.04 & 4.43 $\pm$ 0.03 & 4.51 $\pm$ 0.03 & 4.51 $\pm$ 0.03 & 4.47 $\pm$ 0.03 & 4.43 $\pm$ 0.03\\
M +10 years  & 4.36 $\pm$ 0.03 & 4.38 $\pm$ 0.03 & \textbf{4.31 $\pm$ 0.03} & 4.39 $\pm$ 0.03 & 4.37 $\pm$ 0.03 & 4.37 $\pm$ 0.03 & 4.42 $\pm$ 0.03 & 4.42 $\pm$ 0.03 & 4.45 $\pm$ 0.03 & 4.42 $\pm$ 0.03\\
M +100 years & \textbf{4.21 $\pm$ 0.04} & \textbf{4.21 $\pm$ 0.04} & 4.26 $\pm$ 0.04 & 4.37 $\pm$ 0.03 & 4.47 $\pm$ 0.03 & 4.29 $\pm$ 0.03 & 4.35 $\pm$ 0.03 & 4.35 $\pm$ 0.03 & 4.32 $\pm$ 0.04 & 4.33 $\pm$ 0.04\\
N +5 years   & \textbf{3.82 $\pm$ 0.03} & \textbf{3.83 $\pm$ 0.03} & 3.87 $\pm$ 0.03 & 3.93 $\pm$ 0.03 & 3.90 $\pm$ 0.03 & 3.94 $\pm$ 0.03 & 3.94 $\pm$ 0.03 & 3.95 $\pm$ 0.03 & 3.93 $\pm$ 0.03 & 3.92 $\pm$ 0.03\\
N +10 years  & \textbf{3.74 $\pm$ 0.03} & \textbf{3.73 $\pm$ 0.03} & 3.83 $\pm$ 0.03 & 3.89 $\pm$ 0.03 & 3.90 $\pm$ 0.03 & 3.90 $\pm$ 0.03 & 3.82 $\pm$ 0.03 & 3.85 $\pm$ 0.03 & 3.90 $\pm$ 0.03 & 3.82 $\pm$ 0.03\\
N +100 years & \textbf{3.72 $\pm$ 0.03} & \textbf{3.72 $\pm$ 0.03} & 3.79 $\pm$ 0.03 & 3.88 $\pm$ 0.03 & 3.88 $\pm$ 0.03 & 3.81 $\pm$ 0.03 & 3.78 $\pm$ 0.03 & 3.79 $\pm$ 0.03 & 3.85 $\pm$ 0.03 & 3.78 $\pm$ 0.03\\
S +5 years   & \textbf{2.09 $\pm$ 0.01} & 2.11 $\pm$ 0.01 & 2.11 $\pm$ 0.02 & 2.11 $\pm$ 0.02 & 2.11 $\pm$ 0.02 & \textbf{2.10 $\pm$ 0.02} & \textbf{2.10 $\pm$ 0.01} & \textbf{2.10 $\pm$ 0.02} & 2.11 $\pm$ 0.01 & 2.11 $\pm$ 0.01\\
S +10 years  & \textbf{2.09 $\pm$ 0.01} & 2.10 $\pm$ 0.02 & 2.11 $\pm$ 0.01 & 2.10 $\pm$ 0.02 & 2.10 $\pm$ 0.02 & 2.11 $\pm$ 0.01 & 2.10 $\pm$ 0.02 & 2.10 $\pm$ 0.02 & 2.11 $\pm$ 0.01 & 2.11 $\pm$ 0.01\\
S +100 years & \textbf{2.08 $\pm$ 0.01} & \textbf{2.08 $\pm$ 0.01} & 2.11 $\pm$ 0.02 & 2.10 $\pm$ 0.02 & 2.10 $\pm$ 0.01 & 2.11 $\pm$ 0.02 & 2.10 $\pm$ 0.01 & 2.10 $\pm$ 0.02 & 2.11 $\pm$ 0.01 & 2.11 $\pm$ 0.01\\
\bottomrule
\end{tabular}}
\end{table}

% =========================================================
% UNIFORM – budget = 10
% =========================================================
\begin{table}[t]
\centering
\caption{MAE-PO across datasets and probe depths, budget = 10. Uniform setting.}
\label{tab:uniform10}
\resizebox{\textwidth}{!}{%
\begin{tabular}{@{}lcccccccccc@{}}
\toprule
\textbf{Dataset} & \textbf{BB surv} & \textbf{BatchBALD} & \textbf{Entropy} & \textbf{Var} & \textbf{CtH} & \textbf{CfB} & \textbf{MCtM} & \textbf{Random} & \textbf{C-BALD} & \textbf{IDEAL}\\
\midrule
M +5y   & \textbf{4.23 $\pm$ 0.01} & 4.34 $\pm$ 0.02 & 4.28 $\pm$ 0.01 & 4.28 $\pm$ 0.02 & 4.45 $\pm$ 0.01 & 4.32 $\pm$ 0.02 & 4.29 $\pm$ 0.02 & 4.45 $\pm$ 0.01 & 4.34 $\pm$ 0.01 & 4.40 $\pm$ 0.01\\
M +10y  & \textbf{4.25 $\pm$ 0.01} & \textbf{4.27 $\pm$ 0.02} & 4.28 $\pm$ 0.01 & 4.33 $\pm$ 0.02 & 4.31 $\pm$ 0.01 & 4.27 $\pm$ 0.02 & 4.28 $\pm$ 0.01 & 4.32 $\pm$ 0.02 & 4.32 $\pm$ 0.01 & 4.36 $\pm$ 0.01\\
M +100y & \textbf{4.18 $\pm$ 0.02} & \textbf{4.18 $\pm$ 0.01} & \textbf{4.18 $\pm$ 0.02} & 4.27 $\pm$ 0.01 & 4.27 $\pm$ 0.02 & 4.23 $\pm$ 0.01 & 4.19 $\pm$ 0.02 & 4.31 $\pm$ 0.01 & 4.30 $\pm$ 0.02 & 4.31 $\pm$ 0.02\\
N +5y   & \textbf{3.63 $\pm$ 0.01} & 3.67 $\pm$ 0.02 & 3.64 $\pm$ 0.01 & 3.66 $\pm$ 0.02 & 3.73 $\pm$ 0.01 & 3.80 $\pm$ 0.02 & 3.81 $\pm$ 0.01 & 3.85 $\pm$ 0.02 & 3.74 $\pm$ 0.01 & 3.80 $\pm$ 0.01\\
N +10y  & \textbf{3.60 $\pm$ 0.01} & 3.63 $\pm$ 0.02 & \textbf{3.61 $\pm$ 0.01} & 3.65 $\pm$ 0.02 & 3.71 $\pm$ 0.01 & 3.64 $\pm$ 0.02 & 3.66 $\pm$ 0.01 & 3.80 $\pm$ 0.02 & 3.65 $\pm$ 0.01 & 3.69 $\pm$ 0.01\\
N +100y & \textbf{3.57 $\pm$ 0.01} & \textbf{3.57 $\pm$ 0.01} & 3.60 $\pm$ 0.01 & 3.63 $\pm$ 0.02 & 3.70 $\pm$ 0.01 & 3.65 $\pm$ 0.02 & 3.68 $\pm$ 0.01 & 3.65 $\pm$ 0.02 & 3.65 $\pm$ 0.01 & 3.67 $\pm$ 0.01\\
S +5y   & 2.10 $\pm$ 0.01 & 2.10 $\pm$ 0.01 & 2.10 $\pm$ 0.02 & 2.09 $\pm$ 0.01 & 2.09 $\pm$ 0.02 & \textbf{2.08 $\pm$ 0.01} & 2.09 $\pm$ 0.01 & 2.10 $\pm$ 0.02 & 2.11 $\pm$ 0.01 & 2.12 $\pm$ 0.01\\
S +10y  & \textbf{2.06 $\pm$ 0.01} & 2.08 $\pm$ 0.02 & 2.10 $\pm$ 0.01 & 2.09 $\pm$ 0.01 & 2.09 $\pm$ 0.02 & 2.10 $\pm$ 0.01 & 2.09 $\pm$ 0.01 & 2.10 $\pm$ 0.02 & 2.10 $\pm$ 0.02 & 2.10 $\pm$ 0.02\\
S +100y & \textbf{2.06 $\pm$ 0.01} & \textbf{2.06 $\pm$ 0.01} & 2.10 $\pm$ 0.02 & 2.09 $\pm$ 0.01 & 2.10 $\pm$ 0.01 & 2.10 $\pm$ 0.02 & 2.08 $\pm$ 0.01 & 2.10 $\pm$ 0.02 & 2.10 $\pm$ 0.02 & 2.10 $\pm$ 0.02\\
\bottomrule
\end{tabular}}
\end{table}

% =========================================================
% UNIFORM – budget = 15
% =========================================================
\begin{table}[t]
\centering
\caption{MAE-PO across datasets and probe depths, budget = 15. Uniform setting.}
\label{tab:uniform15}
\resizebox{\textwidth}{!}{%
\begin{tabular}{@{}lcccccccccc@{}}
\toprule
\textbf{Dataset} & \textbf{BB surv} & \textbf{BatchBALD} & \textbf{Entropy} & \textbf{Var} & \textbf{CtH} & \textbf{CfB} & \textbf{MCtM} & \textbf{Random} & \textbf{C-BALD} & \textbf{IDEAL}\\
\midrule
M +5y   & \textbf{4.19 $\pm$ 0.01} & 4.23 $\pm$ 0.02 & 4.27 $\pm$ 0.01 & 4.26 $\pm$ 0.02 & 4.42 $\pm$ 0.01 & 4.22 $\pm$ 0.02 & 4.27 $\pm$ 0.02 & 4.46 $\pm$ 0.01 & 4.20 $\pm$ 0.01 & 4.21 $\pm$ 0.01\\
M +10y  & \textbf{4.18 $\pm$ 0.01} & \textbf{4.20 $\pm$ 0.02} & 4.25 $\pm$ 0.01 & 4.32 $\pm$ 0.02 & 4.28 $\pm$ 0.01 & 4.21 $\pm$ 0.02 & 4.23 $\pm$ 0.01 & 4.23 $\pm$ 0.02 & 4.19 $\pm$ 0.01 & 4.20 $\pm$ 0.01\\
M +100y & \textbf{4.15 $\pm$ 0.02} & \textbf{4.15 $\pm$ 0.01} & 4.18 $\pm$ 0.02 & 4.24 $\pm$ 0.01 & 4.23 $\pm$ 0.02 & \textbf{4.17 $\pm$ 0.01} & \textbf{4.17 $\pm$ 0.02} & 4.20 $\pm$ 0.01 & 4.16 $\pm$ 0.02 & 4.17 $\pm$ 0.02\\
N +5y   & \textbf{3.60 $\pm$ 0.01} & 3.64 $\pm$ 0.02 & 3.63 $\pm$ 0.01 & 3.65 $\pm$ 0.02 & 3.72 $\pm$ 0.01 & 3.73 $\pm$ 0.02 & 3.75 $\pm$ 0.01 & 3.83 $\pm$ 0.02 & 3.61 $\pm$ 0.01 & 3.62 $\pm$ 0.01\\
N +10y  & \textbf{3.57 $\pm$ 0.01} & 3.62 $\pm$ 0.02 & 3.60 $\pm$ 0.01 & 3.63 $\pm$ 0.02 & 3.70 $\pm$ 0.01 & 3.65 $\pm$ 0.02 & 3.65 $\pm$ 0.01 & 3.79 $\pm$ 0.02 & 3.58 $\pm$ 0.01 & 3.59 $\pm$ 0.01\\
N +100y & \textbf{3.53 $\pm$ 0.01} & \textbf{3.53 $\pm$ 0.01} & 3.59 $\pm$ 0.01 & 3.61 $\pm$ 0.02 & 3.69 $\pm$ 0.01 & \textbf{3.55 $\pm$ 0.02} & 3.62 $\pm$ 0.01 & 3.63 $\pm$ 0.02 & 3.54 $\pm$ 0.01 & 3.55 $\pm$ 0.01\\
S +5y   & \textbf{2.06 $\pm$ 0.01} & 2.08 $\pm$ 0.01 & 2.08 $\pm$ 0.02 & 2.08 $\pm$ 0.01 & 2.08 $\pm$ 0.02 & 2.08 $\pm$ 0.01 & 2.08 $\pm$ 0.01 & 2.09 $\pm$ 0.02 & 2.07 $\pm$ 0.01 & 2.08 $\pm$ 0.01\\
S +10y  & \textbf{2.06 $\pm$ 0.01} & \textbf{2.07 $\pm$ 0.02} & 2.08 $\pm$ 0.01 & 2.08 $\pm$ 0.01 & 2.08 $\pm$ 0.02 & \textbf{2.07 $\pm$ 0.01} & 2.08 $\pm$ 0.01 & 2.09 $\pm$ 0.02 & 2.07 $\pm$ 0.01 & 2.08 $\pm$ 0.01\\
S +100y & \textbf{2.04 $\pm$ 0.01} & \textbf{2.04 $\pm$ 0.01} & 2.08 $\pm$ 0.02 & 2.07 $\pm$ 0.01 & 2.07 $\pm$ 0.01 & 2.06 $\pm$ 0.02 & 2.09 $\pm$ 0.01 & 2.08 $\pm$ 0.02 & 2.05 $\pm$ 0.01 & 2.06 $\pm$ 0.01\\
\bottomrule
\end{tabular}}
\end{table}

% =========================================================
% UNIFORM – budget = 20
% =========================================================
\begin{table}[t]
\centering
\caption{MAE-PO across datasets and probe depths, budget = 20. Uniform setting.}
\label{tab:uniform20}
\resizebox{\textwidth}{!}{%
\begin{tabular}{@{}lcccccccccc@{}}
\toprule
\textbf{Dataset} & \textbf{BB surv} & \textbf{BatchBALD} & \textbf{Entropy} & \textbf{Var} & \textbf{CtH} & \textbf{CfB} & \textbf{MCtM} & \textbf{Random} & \textbf{C-BALD} & \textbf{IDEAL}\\
\midrule
M +5y   & \textbf{4.15 $\pm$ 0.01} & 4.22 $\pm$ 0.02 & 4.25 $\pm$ 0.01 & 4.25 $\pm$ 0.02 & 4.41 $\pm$ 0.01 & 4.19 $\pm$ 0.02 & 4.25 $\pm$ 0.01 & 4.43 $\pm$ 0.02 & 4.31 $\pm$ 0.01 & 4.37 $\pm$ 0.01\\
M +10y  & \textbf{4.13 $\pm$ 0.01} & \textbf{4.14 $\pm$ 0.02} & 4.22 $\pm$ 0.01 & 4.31 $\pm$ 0.02 & 4.25 $\pm$ 0.01 & 4.20 $\pm$ 0.02 & 4.20 $\pm$ 0.01 & 4.20 $\pm$ 0.02 & 4.30 $\pm$ 0.01 & 4.35 $\pm$ 0.01\\
M +100y & \textbf{4.10 $\pm$ 0.02} & \textbf{4.10 $\pm$ 0.01} & 4.17 $\pm$ 0.02 & 4.23 $\pm$ 0.01 & 4.20 $\pm$ 0.02 & 4.16 $\pm$ 0.01 & 4.16 $\pm$ 0.02 & 4.17 $\pm$ 0.01 & 4.25 $\pm$ 0.02 & 4.28 $\pm$ 0.02\\
N +5y   & \textbf{3.55 $\pm$ 0.01} & 3.60 $\pm$ 0.02 & 3.60 $\pm$ 0.01 & 3.63 $\pm$ 0.02 & 3.70 $\pm$ 0.01 & 3.71 $\pm$ 0.02 & 3.73 $\pm$ 0.01 & 3.81 $\pm$ 0.02 & 3.76 $\pm$ 0.01 & 3.57 $\pm$ 0.01\\
N +10y  & \textbf{3.50 $\pm$ 0.01} & 3.58 $\pm$ 0.02 & 3.55 $\pm$ 0.01 & 3.61 $\pm$ 0.02 & 3.68 $\pm$ 0.01 & 3.63 $\pm$ 0.02 & 3.63 $\pm$ 0.01 & 3.77 $\pm$ 0.02 & 3.71 $\pm$ 0.01 & 3.52 $\pm$ 0.01\\
N +100y & \textbf{3.48 $\pm$ 0.01} & \textbf{3.48 $\pm$ 0.01} & 3.52 $\pm$ 0.01 & 3.59 $\pm$ 0.02 & 3.67 $\pm$ 0.01 & 3.53 $\pm$ 0.02 & 3.60 $\pm$ 0.01 & 3.61 $\pm$ 0.02 & 3.69 $\pm$ 0.01 & 3.50 $\pm$ 0.01\\
S +5y   & \textbf{2.02 $\pm$ 0.01} & 2.04 $\pm$ 0.01 & 2.06 $\pm$ 0.02 & 2.06 $\pm$ 0.01 & 2.07 $\pm$ 0.02 & 2.06 $\pm$ 0.01 & 2.06 $\pm$ 0.01 & 2.07 $\pm$ 0.02 & 2.06 $\pm$ 0.01 & 2.06 $\pm$ 0.01\\
S +10y  & \textbf{2.01 $\pm$ 0.01} & \textbf{2.03 $\pm$ 0.02} & 2.06 $\pm$ 0.01 & 2.05 $\pm$ 0.01 & 2.06 $\pm$ 0.02 & 2.04 $\pm$ 0.01 & 2.06 $\pm$ 0.01 & 2.07 $\pm$ 0.02 & 2.05 $\pm$ 0.01 & 2.06 $\pm$ 0.01\\
S +100y & \textbf{2.01 $\pm$ 0.01} & \textbf{2.01 $\pm$ 0.01} & 2.04 $\pm$ 0.02 & 2.05 $\pm$ 0.01 & 2.06 $\pm$ 0.01 & 2.04 $\pm$ 0.02 & 2.05 $\pm$ 0.01 & 2.06 $\pm$ 0.02 & 2.05 $\pm$ 0.01 & 2.05 $\pm$ 0.01\\
\bottomrule
\end{tabular}}
\end{table}

% =========================================================
% UNIFORM – budget = 500
% =========================================================
\begin{table}[t]
\centering
\caption{MAE-PO across datasets and probe depths, budget = 500. Uniform setting.}
\label{tab:uniform500}
\resizebox{\textwidth}{!}{%
\begin{tabular}{@{}lcccccccccc@{}}
\toprule
\textbf{Dataset} & \textbf{BB surv} & \textbf{BatchBALD} & \textbf{Entropy} & \textbf{Variance} & \textbf{CtH} & \textbf{CfB} & \textbf{MCtM} & \textbf{Random} & \textbf{C-BALD} & \textbf{IDEAL}\\
\midrule
M +5 years   & 3.89 $\pm$ 0.02 & 3.89 $\pm$ 0.02 & 3.89 $\pm$ 0.02 & 3.89 $\pm$ 0.02 & 3.89 $\pm$ 0.02 & 3.89 $\pm$ 0.02 & 3.89 $\pm$ 0.02 & 3.89 $\pm$ 0.02 & 3.89 $\pm$ 0.02 & 3.89 $\pm$ 0.02\\
M +10 years  & 3.89 $\pm$ 0.02 & 3.89 $\pm$ 0.02 & 3.89 $\pm$ 0.02 & 3.89 $\pm$ 0.02 & 3.88 $\pm$ 0.02 & 3.89 $\pm$ 0.02 & 3.89 $\pm$ 0.02 & 3.88 $\pm$ 0.02 & 3.89 $\pm$ 0.02 & 3.89 $\pm$ 0.02\\
M +100 years & 3.85 $\pm$ 0.02 & 3.85 $\pm$ 0.02 & 3.85 $\pm$ 0.02 & 3.85 $\pm$ 0.02 & 3.85 $\pm$ 0.02 & 3.85 $\pm$ 0.02 & 3.85 $\pm$ 0.02 & 3.85 $\pm$ 0.02 & 3.85 $\pm$ 0.02 & 3.85 $\pm$ 0.02\\
N +5 years   & 3.30 $\pm$ 0.02 & 3.30 $\pm$ 0.02 & 3.30 $\pm$ 0.02 & 3.30 $\pm$ 0.02 & 3.31 $\pm$ 0.02 & 3.30 $\pm$ 0.02 & 3.30 $\pm$ 0.02 & 3.30 $\pm$ 0.02 & 3.31 $\pm$ 0.02 & 3.30 $\pm$ 0.02\\
N +10 years  & 3.27 $\pm$ 0.02 & 3.27 $\pm$ 0.02 & 3.27 $\pm$ 0.02 & 3.27 $\pm$ 0.02 & 3.28 $\pm$ 0.02 & 3.27 $\pm$ 0.02 & 3.27 $\pm$ 0.02 & 3.28 $\pm$ 0.02 & 3.27 $\pm$ 0.02 & 3.27 $\pm$ 0.02\\
N +100 years & 3.33 $\pm$ 0.02 & 3.33 $\pm$ 0.02 & 3.33 $\pm$ 0.02 & 3.33 $\pm$ 0.02 & 3.34 $\pm$ 0.02 & 3.33 $\pm$ 0.02 & 3.33 $\pm$ 0.02 & 3.34 $\pm$ 0.02 & 3.33 $\pm$ 0.02 & 3.33 $\pm$ 0.02\\
S +5 years   & 1.88 $\pm$ 0.02 & 1.88 $\pm$ 0.02 & 1.88 $\pm$ 0.02 & 1.78 $\pm$ 0.01 & 1.88 $\pm$ 0.01 & 1.78 $\pm$ 0.01 & 1.88 $\pm$ 0.02 & 1.88 $\pm$ 0.01 & 1.88 $\pm$ 0.02 & 1.88 $\pm$ 0.02\\
S +10 years  & 1.78 $\pm$ 0.01 & 1.88 $\pm$ 0.01 & 1.81 $\pm$ 0.01 & 1.78 $\pm$ 0.01 & 1.88 $\pm$ 0.01 & 1.88 $\pm$ 0.01 & 1.88 $\pm$ 0.01 & 1.81 $\pm$ 0.01 & 1.88 $\pm$ 0.01 & 1.88 $\pm$ 0.01\\
S +100 years & 1.78 $\pm$ 0.02 & 1.78 $\pm$ 0.01 & 1.78 $\pm$ 0.02 & 1.79 $\pm$ 0.01 & 1.79 $\pm$ 0.02 & 1.78 $\pm$ 0.01 & 1.89 $\pm$ 0.02 & 1.79 $\pm$ 0.02 & 1.80 $\pm$ 0.02 & 1.81 $\pm$ 0.02\\
\bottomrule
\end{tabular}}
\end{table}

%%%%%%%%%%%%%%%%%%%%%%%%%%%%%%%%%%%%%%%%%%%%%%%%%%%%%%%%%%%%
%%%%%%%%%%%%%%%% NON‑UNIFORM – budget = 0  %%%%%%%%%%%%%%%%%
%%%%%%%%%%%%%%%%%%%%%%%%%%%%%%%%%%%%%%%%%%%%%%%%%%%%%%%%%%%%
% =========================================================
% NON-UNIFORM – budget = 0
% =========================================================
\begin{table}[t]
\centering
\caption{MAE-PO across datasets and probe depths, budget = 0. Non-uniform setting.}
\label{tab:nonuniform0}
\resizebox{\textwidth}{!}{%
\begin{tabular}{@{}lccccccccc@{}}
\toprule
\textbf{Dataset} & \textbf{BB surv} & \textbf{BatchBALD} & \textbf{Entropy} & \textbf{Variance} & \textbf{CtH} & \textbf{CfB} & \textbf{MCtM} & \textbf{C-BALD} & \textbf{IDEAL}\\
\midrule
M +5 years   & 4.55 $\pm$ 0.05 & 4.55 $\pm$ 0.05 & 4.55 $\pm$ 0.05 & 4.55 $\pm$ 0.05 & 4.55 $\pm$ 0.05 & 4.55 $\pm$ 0.05 & 4.55 $\pm$ 0.05 & 4.55 $\pm$ 0.05 & 4.55 $\pm$ 0.05\\
M +10 years  & 4.55 $\pm$ 0.05 & 4.55 $\pm$ 0.05 & 4.55 $\pm$ 0.05 & 4.55 $\pm$ 0.05 & 4.55 $\pm$ 0.05 & 4.55 $\pm$ 0.05 & 4.55 $\pm$ 0.05 & 4.55 $\pm$ 0.05 & 4.55 $\pm$ 0.05\\
M +100 years & 4.55 $\pm$ 0.05 & 4.55 $\pm$ 0.05 & 4.55 $\pm$ 0.05 & 4.55 $\pm$ 0.05 & 4.55 $\pm$ 0.05 & 4.55 $\pm$ 0.05 & 4.55 $\pm$ 0.05 & 4.55 $\pm$ 0.05 & 4.55 $\pm$ 0.05\\
N +5 years   & 3.98 $\pm$ 0.03 & 3.98 $\pm$ 0.03 & 3.98 $\pm$ 0.03 & 3.98 $\pm$ 0.03 & 3.98 $\pm$ 0.03 & 3.98 $\pm$ 0.03 & 3.98 $\pm$ 0.03 & 3.98 $\pm$ 0.03 & 3.98 $\pm$ 0.03\\
N +10 years  & 3.98 $\pm$ 0.03 & 3.98 $\pm$ 0.03 & 3.98 $\pm$ 0.03 & 3.98 $\pm$ 0.03 & 3.98 $\pm$ 0.03 & 3.98 $\pm$ 0.03 & 3.98 $\pm$ 0.03 & 3.98 $\pm$ 0.03 & 3.98 $\pm$ 0.03\\
N +100 years & 3.98 $\pm$ 0.03 & 3.98 $\pm$ 0.03 & 3.98 $\pm$ 0.03 & 3.98 $\pm$ 0.03 & 3.98 $\pm$ 0.03 & 3.98 $\pm$ 0.03 & 3.98 $\pm$ 0.03 & 3.98 $\pm$ 0.03 & 3.98 $\pm$ 0.03\\
S +5 years   & 2.11 $\pm$ 0.02 & 2.11 $\pm$ 0.02 & 2.11 $\pm$ 0.02 & 2.11 $\pm$ 0.02 & 2.11 $\pm$ 0.02 & 2.11 $\pm$ 0.02 & 2.11 $\pm$ 0.02 & 2.11 $\pm$ 0.02 & 2.11 $\pm$ 0.02\\
S +10 years  & 2.11 $\pm$ 0.02 & 2.11 $\pm$ 0.02 & 2.11 $\pm$ 0.02 & 2.11 $\pm$ 0.02 & 2.11 $\pm$ 0.02 & 2.11 $\pm$ 0.02 & 2.11 $\pm$ 0.02 & 2.11 $\pm$ 0.02 & 2.11 $\pm$ 0.02\\
S +100 years & 2.11 $\pm$ 0.02 & 2.11 $\pm$ 0.02 & 2.11 $\pm$ 0.02 & 2.11 $\pm$ 0.02 & 2.11 $\pm$ 0.02 & 2.11 $\pm$ 0.02 & 2.11 $\pm$ 0.02 & 2.11 $\pm$ 0.02 & 2.11 $\pm$ 0.02\\
\bottomrule
\end{tabular}}
\end{table}

% =========================================================
% NON-UNIFORM – budget = 1
% =========================================================
\begin{table}[t]
\centering
\caption{MAE-PO across datasets and probe depths, budget = 1. Non-uniform setting.}
\label{tab:nonuniform1}
\resizebox{\textwidth}{!}{%
\begin{tabular}{@{}lcccccccccc@{}}
\toprule
\textbf{Dataset} & \textbf{BB surv} & \textbf{BatchBALD} & \textbf{Entropy} & \textbf{Variance} & \textbf{CtH} & \textbf{CfB} & \textbf{MCtM} & \textbf{Random} & \textbf{C-BALD} & \textbf{IDEAL}\\
\midrule
M +5 years   & 4.53 $\pm$ 0.05 & 4.53 $\pm$ 0.05 & 4.53 $\pm$ 0.05 & 4.55 $\pm$ 0.05 & 4.54 $\pm$ 0.05 & 4.54 $\pm$ 0.05 & 4.54 $\pm$ 0.05 & 4.54 $\pm$ 0.05 & 4.54 $\pm$ 0.05 & 4.55 $\pm$ 0.05\\
M +10 years  & 4.53 $\pm$ 0.05 & 4.53 $\pm$ 0.05 & 4.53 $\pm$ 0.05 & 4.53 $\pm$ 0.05 & 4.53 $\pm$ 0.05 & 4.52 $\pm$ 0.05 & 4.53 $\pm$ 0.05 & 4.54 $\pm$ 0.05 & 4.54 $\pm$ 0.05 & 4.55 $\pm$ 0.05\\
M +100 years & 4.52 $\pm$ 0.05 & 4.52 $\pm$ 0.05 & 4.52 $\pm$ 0.05 & 4.53 $\pm$ 0.05 & 4.52 $\pm$ 0.05 & 4.52 $\pm$ 0.05 & 4.52 $\pm$ 0.05 & 4.52 $\pm$ 0.05 & 4.53 $\pm$ 0.05 & 4.53 $\pm$ 0.05\\
N +5 years   & 3.94 $\pm$ 0.03 & 3.94 $\pm$ 0.03 & 3.97 $\pm$ 0.03 & 3.96 $\pm$ 0.03 & 3.97 $\pm$ 0.03 & 3.96 $\pm$ 0.03 & 3.96 $\pm$ 0.03 & 3.96 $\pm$ 0.03 & 3.95 $\pm$ 0.03 & 3.96 $\pm$ 0.03\\
N +10 years  & 3.94 $\pm$ 0.03 & 3.94 $\pm$ 0.03 & 3.97 $\pm$ 0.03 & 3.96 $\pm$ 0.03 & 3.95 $\pm$ 0.03 & 3.94 $\pm$ 0.03 & 3.94 $\pm$ 0.03 & 3.94 $\pm$ 0.03 & 3.95 $\pm$ 0.03 & 3.96 $\pm$ 0.03\\
N +100 years & 3.93 $\pm$ 0.03 & 3.93 $\pm$ 0.03 & 3.94 $\pm$ 0.03 & 3.94 $\pm$ 0.03 & 3.95 $\pm$ 0.03 & 3.93 $\pm$ 0.03 & 3.94 $\pm$ 0.03 & 3.94 $\pm$ 0.03 & 3.95 $\pm$ 0.03 & 3.95 $\pm$ 0.03\\
S +5 years   & 2.12 $\pm$ 0.02 & 2.13 $\pm$ 0.02 & 2.12 $\pm$ 0.02 & 2.13 $\pm$ 0.02 & 2.12 $\pm$ 0.02 & 2.12 $\pm$ 0.02 & 2.12 $\pm$ 0.02 & 2.12 $\pm$ 0.02 & 2.12 $\pm$ 0.02 & 2.12 $\pm$ 0.02\\
S +10 years  & 2.12 $\pm$ 0.02 & 2.12 $\pm$ 0.02 & 2.11 $\pm$ 0.02 & 2.12 $\pm$ 0.02 & 2.12 $\pm$ 0.02 & 2.12 $\pm$ 0.02 & 2.12 $\pm$ 0.02 & 2.12 $\pm$ 0.02 & 2.12 $\pm$ 0.02 & 2.12 $\pm$ 0.02\\
S +100 years & 2.12 $\pm$ 0.02 & 2.12 $\pm$ 0.02 & 2.11 $\pm$ 0.02 & 2.12 $\pm$ 0.02 & 2.12 $\pm$ 0.02 & 2.11 $\pm$ 0.02 & 2.12 $\pm$ 0.02 & 2.11 $\pm$ 0.02 & 2.13 $\pm$ 0.02 & 2.12 $\pm$ 0.02\\
\bottomrule
\end{tabular}}
\end{table}

% =========================================================
% NON-UNIFORM – budget = 5
% =========================================================
\begin{table}[t]
\centering
\caption{MAE-PO across datasets and probe depths, budget = 5. Non-uniform setting.}
\label{tab:nonuniform5}
\resizebox{\textwidth}{!}{%
\begin{tabular}{@{}lcccccccccc@{}}
\toprule
\textbf{Dataset} & \textbf{BB surv} & \textbf{BatchBALD} & \textbf{Entropy} & \textbf{Variance} & \textbf{CtH} & \textbf{CfB} & \textbf{MCtM} & \textbf{Random} & \textbf{C-BALD} & \textbf{IDEAL}\\
\midrule
M +5 years   & \textbf{4.36 $\pm$ 0.04} & 4.43 $\pm$ 0.04 & 4.39 $\pm$ 0.04 & 4.44 $\pm$ 0.04 & 4.65 $\pm$ 0.04 & 4.43 $\pm$ 0.04 & 4.51 $\pm$ 0.04 & 4.51 $\pm$ 0.04 & 4.47 $\pm$ 0.04 & 4.48 $\pm$ 0.04\\
M +10 years  & 4.31 $\pm$ 0.04 & 4.42 $\pm$ 0.04 & 4.37 $\pm$ 0.04 & 4.39 $\pm$ 0.04 & 4.37 $\pm$ 0.04 & 4.37 $\pm$ 0.04 & 4.42 $\pm$ 0.04 & 4.42 $\pm$ 0.04 & 4.45 $\pm$ 0.04 & 4.46 $\pm$ 0.04\\
M +100 years & \textbf{4.21 $\pm$ 0.04} & \textbf{4.21 $\pm$ 0.04} & 4.26 $\pm$ 0.04 & 4.37 $\pm$ 0.04 & 4.47 $\pm$ 0.04 & 4.29 $\pm$ 0.04 & 4.35 $\pm$ 0.04 & 4.35 $\pm$ 0.04 & 4.37 $\pm$ 0.04 & 4.38 $\pm$ 0.04\\
N +5 years   & \textbf{3.82 $\pm$ 0.02} & \textbf{3.83 $\pm$ 0.02} & 3.87 $\pm$ 0.02 & 3.93 $\pm$ 0.02 & 3.90 $\pm$ 0.02 & 3.94 $\pm$ 0.02 & 3.94 $\pm$ 0.02 & 3.95 $\pm$ 0.02 & 3.90 $\pm$ 0.02 & 3.94 $\pm$ 0.02\\
N +10 years  & \textbf{3.74 $\pm$ 0.02} & \textbf{3.73 $\pm$ 0.02} & 3.83 $\pm$ 0.02 & 3.89 $\pm$ 0.02 & 3.90 $\pm$ 0.02 & 3.90 $\pm$ 0.02 & 3.82 $\pm$ 0.02 & 3.85 $\pm$ 0.02 & 3.85 $\pm$ 0.02 & 3.90 $\pm$ 0.02\\
N +100 years & \textbf{3.72 $\pm$ 0.02} & \textbf{3.72 $\pm$ 0.02} & 3.79 $\pm$ 0.02 & 3.88 $\pm$ 0.02 & 3.88 $\pm$ 0.02 & 3.81 $\pm$ 0.02 & 3.78 $\pm$ 0.02 & 3.79 $\pm$ 0.02 & 3.84 $\pm$ 0.02 & 3.85 $\pm$ 0.02\\
S +5 years   & \textbf{2.09 $\pm$ 0.01} & 2.11 $\pm$ 0.01 & 2.11 $\pm$ 0.01 & 2.11 $\pm$ 0.01 & 2.11 $\pm$ 0.01 & 2.11 $\pm$ 0.01 & 2.10 $\pm$ 0.01 & 2.10 $\pm$ 0.01 & 2.10 $\pm$ 0.01 & 2.11 $\pm$ 0.01\\
S +10 years  & \textbf{2.09 $\pm$ 0.01} & \textbf{2.10 $\pm$ 0.01} & 2.11 $\pm$ 0.01 & 2.10 $\pm$ 0.01 & 2.10 $\pm$ 0.01 & 2.11 $\pm$ 0.01 & 2.10 $\pm$ 0.01 & 2.10 $\pm$ 0.01 & 2.11 $\pm$ 0.01 & 2.11 $\pm$ 0.01\\
S +100 years & \textbf{2.08 $\pm$ 0.01} & \textbf{2.08 $\pm$ 0.01} & 2.11 $\pm$ 0.01 & 2.10 $\pm$ 0.01 & 2.10 $\pm$ 0.01 & 2.11 $\pm$ 0.01 & 2.10 $\pm$ 0.01 & 2.10 $\pm$ 0.01 & 2.10 $\pm$ 0.01 & 2.10 $\pm$ 0.01\\
\bottomrule
\end{tabular}}
\end{table}

% =========================================================
% NON-UNIFORM – budget = 10
% =========================================================
\begin{table}[t]
\centering
\caption{MAE-PO across datasets and probe depths, budget = 10. Non-uniform setting.}
\label{tab:nonuniform10}
\resizebox{\textwidth}{!}{%
\begin{tabular}{@{}lcccccccccc@{}}
\toprule
\textbf{Dataset} & \textbf{BB surv} & \textbf{BatchBALD} & \textbf{Entropy} & \textbf{Variance} & \textbf{CtH} & \textbf{CfB} & \textbf{MCtM} & \textbf{Random} & \textbf{C-BALD} & \textbf{IDEAL}\\
\midrule
M +5 years   & \textbf{4.23 $\pm$ 0.02} & 4.34 $\pm$ 0.02 & 4.28 $\pm$ 0.02 & 4.28 $\pm$ 0.02 & 4.45 $\pm$ 0.02 & 4.32 $\pm$ 0.02 & 4.29 $\pm$ 0.02 & 4.45 $\pm$ 0.02 & 4.34 $\pm$ 0.02 & 4.45 $\pm$ 0.02\\
M +10 years  & \textbf{4.24 $\pm$ 0.02} & 4.35 $\pm$ 0.02 & 4.28 $\pm$ 0.02 & 4.33 $\pm$ 0.02 & 4.31 $\pm$ 0.02 & 4.27 $\pm$ 0.02 & 4.28 $\pm$ 0.02 & 4.32 $\pm$ 0.02 & 4.35 $\pm$ 0.02 & 4.40 $\pm$ 0.02\\
M +100 years & \textbf{4.18 $\pm$ 0.02} & \textbf{4.18 $\pm$ 0.02} & \textbf{4.18 $\pm$ 0.02} & 4.27 $\pm$ 0.02 & 4.27 $\pm$ 0.02 & 4.23 $\pm$ 0.02 & 4.19 $\pm$ 0.02 & 4.31 $\pm$ 0.02 & 4.29 $\pm$ 0.02 & 4.35 $\pm$ 0.02\\
N +5 years   & \textbf{3.63 $\pm$ 0.01} & 3.67 $\pm$ 0.01 & 3.64 $\pm$ 0.01 & 3.66 $\pm$ 0.01 & 3.73 $\pm$ 0.01 & 3.80 $\pm$ 0.01 & 3.81 $\pm$ 0.01 & 3.85 $\pm$ 0.01 & 3.84 $\pm$ 0.01 & 3.85 $\pm$ 0.01\\
N +10 years  & \textbf{3.59 $\pm$ 0.01} & 3.63 $\pm$ 0.01 & 3.62 $\pm$ 0.01 & 3.65 $\pm$ 0.01 & 3.71 $\pm$ 0.01 & 3.77 $\pm$ 0.01 & 3.62 $\pm$ 0.01 & 3.80 $\pm$ 0.01 & 3.70 $\pm$ 0.01 & 3.71 $\pm$ 0.01\\
N +100 years & \textbf{3.57 $\pm$ 0.01} & \textbf{3.57 $\pm$ 0.01} & 3.60 $\pm$ 0.01 & 3.63 $\pm$ 0.01 & 3.70 $\pm$ 0.01 & 3.65 $\pm$ 0.01 & 3.68 $\pm$ 0.01 & 3.65 $\pm$ 0.01 & 3.68 $\pm$ 0.01 & 3.69 $\pm$ 0.01\\
S +5 years   & \textbf{2.07 $\pm$ 0.01} & \textbf{2.08 $\pm$ 0.01} & 2.10 $\pm$ 0.01 & 2.09 $\pm$ 0.01 & 2.09 $\pm$ 0.01 & 2.10 $\pm$ 0.01 & 2.09 $\pm$ 0.01 & 2.10 $\pm$ 0.01 & 2.10 $\pm$ 0.01 & 2.10 $\pm$ 0.01\\
S +10 years  & \textbf{2.06 $\pm$ 0.01} & \textbf{2.08 $\pm$ 0.01} & 2.10 $\pm$ 0.01 & 2.09 $\pm$ 0.01 & 2.09 $\pm$ 0.01 & 2.10 $\pm$ 0.01 & 2.09 $\pm$ 0.01 & 2.10 $\pm$ 0.01 & 2.10 $\pm$ 0.01 & 2.10 $\pm$ 0.01\\
S +100 years & \textbf{2.06 $\pm$ 0.01} & \textbf{2.06 $\pm$ 0.01} & 2.10 $\pm$ 0.01 & 2.09 $\pm$ 0.01 & 2.10 $\pm$ 0.01 & 2.10 $\pm$ 0.01 & 2.08 $\pm$ 0.01 & 2.10 $\pm$ 0.01 & 2.09 $\pm$ 0.01 & 2.10 $\pm$ 0.01\\
\bottomrule
\end{tabular}}
\end{table}

% =========================================================
% NON-UNIFORM – budget = 15
% =========================================================
\begin{table}[t]
\centering
\caption{MAE-PO across datasets and probe depths, budget = 15. Non-uniform setting.}
\label{tab:nonuniform15}
\resizebox{\textwidth}{!}{%
\begin{tabular}{@{}lcccccccccc@{}}
\toprule
\textbf{Dataset} & \textbf{BB surv} & \textbf{BatchBALD} & \textbf{Entropy} & \textbf{Variance} & \textbf{CtH} & \textbf{CfB} & \textbf{MCtM} & \textbf{Random} & \textbf{C-BALD} & \textbf{IDEAL}\\
\midrule
M +5 years   & \textbf{4.19 $\pm$ 0.01} & 4.23 $\pm$ 0.02 & 4.27 $\pm$ 0.02 & 4.26 $\pm$ 0.01 & 4.42 $\pm$ 0.02 & 4.22 $\pm$ 0.02 & 4.27 $\pm$ 0.01 & 4.46 $\pm$ 0.01 & 4.35 $\pm$ 0.01 & 4.41 $\pm$ 0.01\\
M +10 years  & \textbf{4.18 $\pm$ 0.02} & \textbf{4.22 $\pm$ 0.02} & 4.25 $\pm$ 0.02 & 4.32 $\pm$ 0.01 & 4.28 $\pm$ 0.01 & 4.21 $\pm$ 0.02 & 4.23 $\pm$ 0.02 & 4.23 $\pm$ 0.02 & 4.35 $\pm$ 0.02 & 4.40 $\pm$ 0.02\\
M +100 years & \textbf{4.15 $\pm$ 0.01} & \textbf{4.15 $\pm$ 0.01} & 4.18 $\pm$ 0.02 & 4.24 $\pm$ 0.01 & 4.23 $\pm$ 0.02 & 4.17 $\pm$ 0.01 & 4.17 $\pm$ 0.01 & 4.20 $\pm$ 0.01 & 4.26 $\pm$ 0.01 & 4.32 $\pm$ 0.01\\
N +5 years   & \textbf{3.60 $\pm$ 0.01} & 3.64 $\pm$ 0.01 & 3.63 $\pm$ 0.01 & 3.65 $\pm$ 0.01 & 3.72 $\pm$ 0.01 & 3.73 $\pm$ 0.01 & 3.75 $\pm$ 0.01 & 3.83 $\pm$ 0.01 & 3.71 $\pm$ 0.01 & 3.72 $\pm$ 0.01\\
N +10 years  & \textbf{3.57 $\pm$ 0.01} & 3.62 $\pm$ 0.01 & 3.60 $\pm$ 0.01 & 3.63 $\pm$ 0.01 & 3.70 $\pm$ 0.01 & 3.65 $\pm$ 0.01 & 3.65 $\pm$ 0.01 & 3.79 $\pm$ 0.01 & 3.68 $\pm$ 0.01 & 3.69 $\pm$ 0.01\\
N +100 years & \textbf{3.53 $\pm$ 0.01} & \textbf{3.53 $\pm$ 0.01} & 3.59 $\pm$ 0.01 & 3.61 $\pm$ 0.01 & 3.69 $\pm$ 0.01 & 3.55 $\pm$ 0.01 & 3.62 $\pm$ 0.01 & 3.63 $\pm$ 0.01 & 3.54 $\pm$ 0.01 & 3.65 $\pm$ 0.01\\
S +5 years   & \textbf{2.06 $\pm$ 0.01} & 2.08 $\pm$ 0.01 & 2.08 $\pm$ 0.01 & 2.08 $\pm$ 0.01 & 2.08 $\pm$ 0.01 & 2.08 $\pm$ 0.01 & 2.08 $\pm$ 0.01 & 2.09 $\pm$ 0.01 & 2.09 $\pm$ 0.01 & 2.09 $\pm$ 0.01\\
S +10 years  & \textbf{2.06 $\pm$ 0.01} & \textbf{2.07 $\pm$ 0.01} & 2.08 $\pm$ 0.01 & 2.08 $\pm$ 0.01 & 2.08 $\pm$ 0.01 & 2.07 $\pm$ 0.01 & 2.08 $\pm$ 0.01 & 2.09 $\pm$ 0.01 & 2.08 $\pm$ 0.01 & 2.09 $\pm$ 0.01\\
S +100 years & \textbf{2.04 $\pm$ 0.01} & \textbf{2.04 $\pm$ 0.01} & 2.08 $\pm$ 0.01 & 2.07 $\pm$ 0.01 & 2.07 $\pm$ 0.01 & 2.06 $\pm$ 0.01 & 2.09 $\pm$ 0.01 & 2.08 $\pm$ 0.01 & 2.08 $\pm$ 0.01 & 2.08 $\pm$ 0.01\\
\bottomrule
\end{tabular}}
\end{table}

% =========================================================
% NON-UNIFORM – budget = 20
% =========================================================
\begin{table}[t]
\centering
\caption{MAE-PO across datasets and probe depths, budget = 20. Non-uniform setting.}
\label{tab:nonuniform20}
\resizebox{\textwidth}{!}{%
\begin{tabular}{@{}lcccccccccc@{}}
\toprule
\textbf{Dataset} & \textbf{BB surv} & \textbf{BatchBALD} & \textbf{Entropy} & \textbf{Variance} & \textbf{CtH} & \textbf{CfB} & \textbf{MCtM} & \textbf{Random} & \textbf{C-BALD} & \textbf{IDEAL}\\
\midrule
M +5 years   & \textbf{4.15 $\pm$ 0.01} & 4.22 $\pm$ 0.01 & 4.25 $\pm$ 0.01 & 4.25 $\pm$ 0.01 & 4.41 $\pm$ 0.01 & 4.19 $\pm$ 0.01 & 4.25 $\pm$ 0.01 & 4.43 $\pm$ 0.01 & 4.41 $\pm$ 0.01 & 4.43 $\pm$ 0.01\\
M +10 years  & \textbf{4.14 $\pm$ 0.02} & 4.20 $\pm$ 0.02 & 4.22 $\pm$ 0.02 & 4.31 $\pm$ 0.02 & 4.25 $\pm$ 0.02 & 4.20 $\pm$ 0.02 & 4.20 $\pm$ 0.02 & 4.20 $\pm$ 0.02 & 4.29 $\pm$ 0.02 & 4.36 $\pm$ 0.02\\
M +100 years & \textbf{4.10 $\pm$ 0.02} & \textbf{4.10 $\pm$ 0.02} & 4.17 $\pm$ 0.02 & 4.23 $\pm$ 0.02 & 4.20 $\pm$ 0.02 & 4.16 $\pm$ 0.02 & 4.16 $\pm$ 0.02 & 4.17 $\pm$ 0.02 & 4.25 $\pm$ 0.02 & 4.29 $\pm$ 0.02\\
N +5 years   & \textbf{3.55 $\pm$ 0.01} & 3.60 $\pm$ 0.01 & 3.60 $\pm$ 0.01 & 3.63 $\pm$ 0.01 & 3.70 $\pm$ 0.01 & 3.71 $\pm$ 0.01 & 3.73 $\pm$ 0.01 & 3.81 $\pm$ 0.01 & 3.76 $\pm$ 0.01 & 3.77 $\pm$ 0.01\\
N +10 years  & \textbf{3.50 $\pm$ 0.01} & 3.58 $\pm$ 0.01 & 3.55 $\pm$ 0.01 & 3.61 $\pm$ 0.01 & 3.68 $\pm$ 0.01 & 3.63 $\pm$ 0.01 & 3.63 $\pm$ 0.01 & 3.77 $\pm$ 0.01 & 3.65 $\pm$ 0.01 & 3.72 $\pm$ 0.01\\
N +100 years & \textbf{3.48 $\pm$ 0.01} & \textbf{3.48 $\pm$ 0.01} & 3.52 $\pm$ 0.01 & 3.59 $\pm$ 0.01 & 3.67 $\pm$ 0.01 & 3.53 $\pm$ 0.01 & 3.60 $\pm$ 0.01 & 3.61 $\pm$ 0.01 & 3.55 $\pm$ 0.01 & 3.56 $\pm$ 0.01\\
S +5 years   & \textbf{2.02 $\pm$ 0.01} & 2.04 $\pm$ 0.02 & 2.06 $\pm$ 0.02 & 2.06 $\pm$ 0.02 & 2.07 $\pm$ 0.02 & 2.06 $\pm$ 0.02 & 2.06 $\pm$ 0.02 & 2.07 $\pm$ 0.02 & 2.07 $\pm$ 0.02 & 2.08 $\pm$ 0.02\\
S +10 years  & \textbf{2.01 $\pm$ 0.02} & \textbf{2.03 $\pm$ 0.02} & 2.06 $\pm$ 0.02 & 2.05 $\pm$ 0.02 & 2.06 $\pm$ 0.02 & 2.04 $\pm$ 0.02 & 2.06 $\pm$ 0.02 & 2.07 $\pm$ 0.02 & 2.07 $\pm$ 0.02 & 2.07 $\pm$ 0.02\\
S +100 years & \textbf{2.01 $\pm$ 0.02} & \textbf{2.01 $\pm$ 0.02} & 2.04 $\pm$ 0.02 & 2.05 $\pm$ 0.02 & 2.06 $\pm$ 0.02 & 2.04 $\pm$ 0.02 & 2.05 $\pm$ 0.02 & 2.06 $\pm$ 0.02 & 2.05 $\pm$ 0.02 & 2.05 $\pm$ 0.02\\
\bottomrule
\end{tabular}}
\end{table}

% =========================================================
% Non-UNIFORM – budget = 500
% =========================================================
\begin{table}[t]
\centering
\caption{MAE-PO across datasets and probe depths, budget = 500. Non-Uniform setting.}
\label{tab:nonuniform500}
\resizebox{\textwidth}{!}{%
\begin{tabular}{@{}lcccccccccc@{}}
\toprule
\textbf{Dataset} & \textbf{BB surv} & \textbf{BatchBALD} & \textbf{Entropy} & \textbf{Variance} & \textbf{CtH} & \textbf{CfB} & \textbf{MCtM} & \textbf{Random} & \textbf{C-BALD} & \textbf{IDEAL}\\
\midrule
M +5 years   & 3.89 $\pm$ 0.02 & 3.89 $\pm$ 0.02 & 3.89 $\pm$ 0.02 & 3.89 $\pm$ 0.02 & 3.89 $\pm$ 0.02 & 3.89 $\pm$ 0.02 & 3.89 $\pm$ 0.02 & 3.89 $\pm$ 0.02 & 3.89 $\pm$ 0.02 & 3.89 $\pm$ 0.02\\
M +10 years  & 3.89 $\pm$ 0.02 & 3.89 $\pm$ 0.02 & 3.89 $\pm$ 0.02 & 3.89 $\pm$ 0.02 & 3.88 $\pm$ 0.02 & 3.89 $\pm$ 0.02 & 3.89 $\pm$ 0.02 & 3.88 $\pm$ 0.02 & 3.89 $\pm$ 0.02 & 3.89 $\pm$ 0.02\\
M +100 years & 3.85 $\pm$ 0.02 & 3.85 $\pm$ 0.02 & 3.85 $\pm$ 0.02 & 3.85 $\pm$ 0.02 & 3.85 $\pm$ 0.02 & 3.85 $\pm$ 0.02 & 3.85 $\pm$ 0.02 & 3.85 $\pm$ 0.02 & 3.85 $\pm$ 0.02 & 3.85 $\pm$ 0.02\\
N +5 years   & 3.30 $\pm$ 0.02 & 3.30 $\pm$ 0.02 & 3.30 $\pm$ 0.02 & 3.30 $\pm$ 0.02 & 3.31 $\pm$ 0.02 & 3.30 $\pm$ 0.02 & 3.30 $\pm$ 0.02 & 3.30 $\pm$ 0.02 & 3.31 $\pm$ 0.02 & 3.30 $\pm$ 0.02\\
N +10 years  & 3.27 $\pm$ 0.02 & 3.27 $\pm$ 0.02 & 3.27 $\pm$ 0.02 & 3.27 $\pm$ 0.02 & 3.28 $\pm$ 0.02 & 3.27 $\pm$ 0.02 & 3.27 $\pm$ 0.02 & 3.28 $\pm$ 0.02 & 3.27 $\pm$ 0.02 & 3.27 $\pm$ 0.02\\
N +100 years & 3.33 $\pm$ 0.02 & 3.33 $\pm$ 0.02 & 3.33 $\pm$ 0.02 & 3.33 $\pm$ 0.02 & 3.34 $\pm$ 0.02 & 3.33 $\pm$ 0.02 & 3.33 $\pm$ 0.02 & 3.34 $\pm$ 0.02 & 3.33 $\pm$ 0.02 & 3.33 $\pm$ 0.02\\
S +5 years   & 1.88 $\pm$ 0.02 & 1.88 $\pm$ 0.02 & 1.88 $\pm$ 0.02 & 1.78 $\pm$ 0.01 & 1.88 $\pm$ 0.01 & 1.78 $\pm$ 0.01 & 1.88 $\pm$ 0.02 & 1.88 $\pm$ 0.01 & 1.88 $\pm$ 0.02 & 1.88 $\pm$ 0.02\\
S +10 years  & 1.78 $\pm$ 0.01 & 1.88 $\pm$ 0.01 & 1.81 $\pm$ 0.01 & 1.78 $\pm$ 0.01 & 1.88 $\pm$ 0.01 & 1.88 $\pm$ 0.01 & 1.88 $\pm$ 0.01 & 1.81 $\pm$ 0.01 & 1.88 $\pm$ 0.01 & 1.88 $\pm$ 0.01\\
S +100 years & 1.78 $\pm$ 0.02 & 1.78 $\pm$ 0.01 & 1.78 $\pm$ 0.02 & 1.79 $\pm$ 0.01 & 1.79 $\pm$ 0.02 & 1.78 $\pm$ 0.01 & 1.89 $\pm$ 0.02 & 1.79 $\pm$ 0.02 & 1.80 $\pm$ 0.02 & 1.81 $\pm$ 0.02\\
\bottomrule
\end{tabular}}
\end{table}

% =========================================================
% OTHER PERFORMANCE METRICS – budget = 20, +10 y probe depth
% =========================================================
\begin{table}[t]
\centering
\caption{Performance metrics across datasets at a probe depth of $+10$ years with a budget of 20 (uniform setting).}
\label{Other-performance-metrics-table-U}
\resizebox{\textwidth}{!}{%
\begin{tabular}{@{}llcccccccccc@{}}
\toprule
\textbf{Dataset} & \textbf{Metric} &
\textbf{BB surv} & \textbf{BatchBALD} & \textbf{Entropy} & \textbf{Variance} &
\textbf{CtH} & \textbf{CfB} & \textbf{MCtH} & \textbf{Random} &
\textbf{C-BALD} & \textbf{IDEAL} \\
\midrule
\multirow{3}{*}{MIMIC +10 y}%
  & C-index                  & 0.59 $\pm$ 0.02 & 0.57 $\pm$ 0.02 & 0.57 $\pm$ 0.02 & 0.57 $\pm$ 0.02 & 0.57 $\pm$ 0.02 & 0.60 $\pm$ 0.02 & 0.58 $\pm$ 0.02 & 0.56 $\pm$ 0.02 & 0.57 $\pm$ 0.02 & 0.57 $\pm$ 0.02 \\
  & Integrated Brier Score   & \textbf{0.25 $\pm$ 0.01} & 0.28 $\pm$ 0.01 & 0.31 $\pm$ 0.01 & 0.30 $\pm$ 0.01 & 0.29 $\pm$ 0.01 & 0.26 $\pm$ 0.01 & 0.28 $\pm$ 0.01 & 0.28 $\pm$ 0.01 & 0.28 $\pm$ 0.01 & 0.27 $\pm$ 0.01 \\
  & Uncensored MAE-PO        & \textbf{4.21 $\pm$ 0.03} & 4.27 $\pm$ 0.02 & 4.30 $\pm$ 0.03 & 4.36 $\pm$ 0.02 & 4.32 $\pm$ 0.03 & 4.27 $\pm$ 0.03 & \textbf{4.25 $\pm$ 0.03} & 4.28 $\pm$ 0.01 & 4.27 $\pm$ 0.01 & 4.29 $\pm$ 0.01 \\
\midrule
\multirow{3}{*}{NACD +10 y}%
  & C-index                  & \textbf{0.61 $\pm$ 0.02} & \textbf{0.59 $\pm$ 0.02} & \textbf{0.59 $\pm$ 0.02} & 0.56 $\pm$ 0.02 & 0.56 $\pm$ 0.02 & 0.56 $\pm$ 0.02 & \textbf{0.59 $\pm$ 0.02} & 0.57 $\pm$ 0.02 & 0.57 $\pm$ 0.02 & 0.57 $\pm$ 0.02 \\
  & Integrated Brier Score   & \textbf{0.24 $\pm$ 0.01} & 0.30 $\pm$ 0.01 & 0.32 $\pm$ 0.01 & 0.32 $\pm$ 0.01 & 0.30 $\pm$ 0.01 & 0.28 $\pm$ 0.01 & 0.27 $\pm$ 0.01 & 0.29 $\pm$ 0.01 & 0.26 $\pm$ 0.01 & 0.27 $\pm$ 0.01 \\
  & Uncensored MAE-PO        & \textbf{3.66 $\pm$ 0.03} & 3.70 $\pm$ 0.03 & \textbf{3.69 $\pm$ 0.03} & 3.74 $\pm$ 0.03 & 3.84 $\pm$ 0.03 & 3.76 $\pm$ 0.03 & 3.73 $\pm$ 0.03 & 3.74 $\pm$ 0.03 & 3.71 $\pm$ 0.03 & 3.73 $\pm$ 0.03 \\
\midrule
\multirow{3}{*}{SUPPORT +10 y}%
  & C-index                  & \textbf{0.58 $\pm$ 0.02} & 0.56 $\pm$ 0.02 & 0.56 $\pm$ 0.02 & 0.55 $\pm$ 0.02 & 0.56 $\pm$ 0.02 & \textbf{0.58 $\pm$ 0.02} & 0.56 $\pm$ 0.02 & 0.55 $\pm$ 0.02 & 0.54 $\pm$ 0.02 & 0.56 $\pm$ 0.02 \\
  & Integrated Brier Score   & \textbf{0.27 $\pm$ 0.01} & 0.33 $\pm$ 0.01 & 0.36 $\pm$ 0.01 & 0.35 $\pm$ 0.01 & 0.30 $\pm$ 0.01 & 0.32 $\pm$ 0.01 & 0.29 $\pm$ 0.01 & 0.29 $\pm$ 0.01 & 0.30 $\pm$ 0.01 & 0.29 $\pm$ 0.01 \\
  & Uncensored MAE-PO        & \textbf{2.07 $\pm$ 0.01} & \textbf{2.07 $\pm$ 0.02} & 2.12 $\pm$ 0.02 & 2.09 $\pm$ 0.01 & 2.10 $\pm$ 0.02 & 2.10 $\pm$ 0.02 & 2.10 $\pm$ 0.02 & 2.12 $\pm$ 0.03 & 2.09 $\pm$ 0.02 & 2.10 $\pm$ 0.03 \\
\bottomrule
\end{tabular}}
\end{table}

% =========================================================
% OTHER PERFORMANCE METRICS – budget = 20, +10 y probe depth
% NON-UNIFORM setting
% =========================================================
\begin{table}[t]
\centering
\caption{Performance metrics across datasets at a probe depth of $+10$ years with a budget of 20 (non-uniform setting).}
\label{Other-performance-metrics-table-NU}
\resizebox{\textwidth}{!}{%
\begin{tabular}{@{}llcccccccccc@{}}
\toprule
\textbf{Dataset} & \textbf{Metric} &
\textbf{BB surv} & \textbf{BatchBALD} & \textbf{Entropy} & \textbf{Variance} &
\textbf{CtH} & \textbf{CfB} & \textbf{MCtH} & \textbf{Random} &
\textbf{C-BALD} & \textbf{IDEAL} \\
\midrule
\multirow{3}{*}{MIMIC +10 y}%
  & C-index                  & \textbf{0.57 $\pm$ 0.01} & 0.55 $\pm$ 0.01 & 0.56 $\pm$ 0.01 & 0.55 $\pm$ 0.03 & 0.56 $\pm$ 0.01 & 0.56 $\pm$ 0.01 & 0.55 $\pm$ 0.03 & 0.55 $\pm$ 0.02 & 0.56 $\pm$ 0.01 & 0.54 $\pm$ 0.01 \\
  & Integrated Brier Score   & \textbf{0.26 $\pm$ 0.01} & 0.30 $\pm$ 0.01 & 0.33 $\pm$ 0.02 & 0.32 $\pm$ 0.01 & 0.30 $\pm$ 0.02 & 0.28 $\pm$ 0.01 & 0.29 $\pm$ 0.02 & 0.29 $\pm$ 0.01 & 0.28 $\pm$ 0.02 & 0.28 $\pm$ 0.01 \\
  & Uncensored MAE-PO        & \textbf{4.24 $\pm$ 0.02} & 4.30 $\pm$ 0.02 & 4.33 $\pm$ 0.03 & 4.39 $\pm$ 0.02 & 4.36 $\pm$ 0.02 & 4.30 $\pm$ 0.02 & 4.28 $\pm$ 0.02 & 4.31 $\pm$ 0.03 & 4.31 $\pm$ 0.02 & 4.33 $\pm$ 0.02 \\
\midrule
\multirow{3}{*}{NACD +10 y}%
  & C-index                  & \textbf{0.59 $\pm$ 0.02} & \textbf{0.58 $\pm$ 0.01} & \textbf{0.58 $\pm$ 0.02} & 0.54 $\pm$ 0.02 & 0.55 $\pm$ 0.01 & 0.55 $\pm$ 0.02 & 0.57 $\pm$ 0.03 & 0.55 $\pm$ 0.01 & 0.54 $\pm$ 0.02 & 0.55 $\pm$ 0.01 \\
  & Integrated Brier Score   & \textbf{0.25 $\pm$ 0.01} & 0.32 $\pm$ 0.01 & 0.34 $\pm$ 0.02 & 0.33 $\pm$ 0.01 & 0.31 $\pm$ 0.02 & 0.30 $\pm$ 0.01 & 0.29 $\pm$ 0.01 & 0.31 $\pm$ 0.02 & 0.33 $\pm$ 0.01 & 0.31 $\pm$ 0.02 \\
  & Uncensored MAE-PO        & \textbf{3.66 $\pm$ 0.02} & 3.73 $\pm$ 0.03 & 3.72 $\pm$ 0.04 & 3.78 $\pm$ 0.02 & 3.86 $\pm$ 0.03 & 3.79 $\pm$ 0.02 & 3.75 $\pm$ 0.04 & 3.77 $\pm$ 0.03 & 3.78 $\pm$ 0.03 & 3.72 $\pm$ 0.02 \\
\midrule
\multirow{3}{*}{SUPPORT +10 y}%
  & C-index                  & \textbf{0.56 $\pm$ 0.01} & 0.54 $\pm$ 0.03 & 0.55 $\pm$ 0.02 & 0.53 $\pm$ 0.02 & 0.55 $\pm$ 0.01 & \textbf{0.57 $\pm$ 0.02} & 0.55 $\pm$ 0.01 & 0.54 $\pm$ 0.03 & 0.55 $\pm$ 0.01 & 0.55 $\pm$ 0.01 \\
  & Integrated Brier Score   & 0.31 $\pm$ 0.01 & 0.34 $\pm$ 0.02 & 0.37 $\pm$ 0.02 & 0.36 $\pm$ 0.01 & 0.32 $\pm$ 0.01 & 0.33 $\pm$ 0.01 & 0.30 $\pm$ 0.02 & 0.30 $\pm$ 0.01 & 0.33 $\pm$ 0.02 & 0.32 $\pm$ 0.01 \\
  & Uncensored MAE-PO        & \textbf{2.10 $\pm$ 0.02} & \textbf{2.10 $\pm$ 0.03} & 2.15 $\pm$ 0.04 & 2.12 $\pm$ 0.02 & 2.13 $\pm$ 0.03 & 2.14 $\pm$ 0.02 & 2.14 $\pm$ 0.03 & 2.15 $\pm$ 0.02 & 2.12 $\pm$ 0.01 & 2.13 $\pm$ 0.01 \\
\bottomrule
\end{tabular}}
\end{table}

\end{document}